\def\eqref#1{equation~\ref{#1}}
\def\floor#1{\lfloor #1 \rfloor}
\def\1{\bm{1}}
\DeclareMathAlphabet{\mathsfit}{\encodingdefault}{\sfdefault}{m}{sl}
\SetMathAlphabet{\mathsfit}{bold}{\encodingdefault}{\sfdefault}{bx}{n}
\newcommand{\E}{\mathbb{E}}
\newcommand{\R}{\mathbb{R}}
\DeclareMathOperator*{\argmin}{arg\,min}
\newcommand\vldbdoi{XX.XX/XXX.XX}
\newcommand\vldbpages{XXX-XXX}
\newcommand\vldbvolume{14}
\newcommand\vldbissue{1}
\newcommand\vldbyear{2020}
\newcommand\vldbauthors{\authors}
\newcommand\vldbtitle{\shorttitle} 
\newcommand\vldbavailabilityurl{https://github.com/hanlu-nju/channel_independent_MTSF}
\newcommand\vldbpagestyle{plain} 
\newcommand{\x}{{\boldsymbol{x}}}
\newcommand{\y}{{\boldsymbol{y}}}
\newcommand{\W}{{\boldsymbol{W}}}
\newcommand{\A}{{\boldsymbol{A}}}
\newcommand{\X}{{\boldsymbol{X}}}
\newcommand{\Y}{{\boldsymbol{Y}}}
\newcommand{\B}{{\boldsymbol{B}}}
\newcommand{\cR}{{\mathcal{R}}}
\def\ie{\emph{i.e.}\xspace}
\theoremstyle{plain}
\newtheorem{theorem}{Theorem}[section]
\newtheorem{proposition}[theorem]{Proposition}
\theoremstyle{definition}
\newtheorem{definition}[theorem]{Definition}
\theoremstyle{remark}
\begin{document}
\title{The Capacity and Robustness Trade-off: Revisiting the Channel Independent Strategy for Multivariate Time Series Forecasting}


\author{Lu Han, Han-Jia Ye, De-Chuan Zhan}
\affiliation{%
  \institution{State Key Laboratory for Novel Software Technology, Nanjing University}
}
\email{{hanlu,yehj}@lamda.nju.edu.cn, zhandc@nju.edu.cn}



\begin{abstract}
Multivariate time series data comprises various channels of variables. The multivariate forecasting models need to capture the relationship between the channels to accurately predict future values. However, recently, there has been an emergence of methods that employ the Channel Independent (CI) strategy. These methods view multivariate time series data as separate univariate time series and disregard the correlation between channels. Surprisingly, our empirical results have shown that models trained with the CI strategy outperform those trained with the Channel Dependent (CD) strategy, usually by a significant margin. Nevertheless, the reasons behind this phenomenon have not yet been thoroughly explored in the literature. This paper provides comprehensive empirical and theoretical analyses of the characteristics of multivariate time series datasets and the CI/CD strategy. Our results conclude that the CD approach has higher capacity but often lacks robustness to accurately predict distributionally drifted time series. In contrast, the CI approach trades capacity for robust prediction. Practical measures inspired by these analyses are proposed to address the capacity and robustness dilemma, including a modified CD method called Predict Residuals with Regularization (PRReg) that can surpass the CI strategy. We hope our findings can raise awareness among researchers about the characteristics of multivariate time series and inspire the construction of better forecasting models.

\end{abstract}

\maketitle

\pagestyle{\vldbpagestyle}
\begingroup\small\noindent\raggedright\textbf{PVLDB Reference Format:}\\
\vldbauthors. \vldbtitle. PVLDB, \vldbvolume(\vldbissue): \vldbpages, \vldbyear.\\
\href{https://doi.org/\vldbdoi}{doi:\vldbdoi}
\endgroup
\begingroup
\renewcommand\thefootnote{}\footnote{\noindent
This work is licensed under the Creative Commons BY-NC-ND 4.0 International License. Visit \url{https://creativecommons.org/licenses/by-nc-nd/4.0/} to view a copy of this license. For any use beyond those covered by this license, obtain permission by emailing \href{mailto:info@vldb.org}{info@vldb.org}. Copyright is held by the owner/author(s). Publication rights licensed to the VLDB Endowment. \\
\raggedright Proceedings of the VLDB Endowment, Vol. \vldbvolume, No. \vldbissue\ %
ISSN 2150-8097. \\
\href{https://doi.org/\vldbdoi}{doi:\vldbdoi} \\
}\addtocounter{footnote}{-1}\endgroup
W
\ifdefempty{\vldbavailabilityurl}{}{
\vspace{.3cm}
\begingroup\small\noindent\raggedright\textbf{PVLDB Artifact Availability:}\\
The source code, data, and/or other artifacts have been made available at \url{\vldbavailabilityurl}.
\endgroup
}

\section{Introduction}
Time series forecasting is a critical area of research that finds applications in both industry and academia. Multivariate time series are common and comprise multiple channels of variates that are usually correlated, with examples ranging from stock market prices and traffic flows to solar power plant outputs and temperatures across various cities~\cite{lai2018modeling}. With the powerful representation capability of deep models, channel correlation can be implicitly learned or explicitly modeled by performing forecasting tasks~\cite{DBLP:journals/corr/abs-2004-13408,WuPL0CZ20,pvldb/WuZGHYJ21,cirstea2018correlated,CuiZCXDHZ21}. Two widely used methods for time series forecasting are recurrent neural networks (RNNs) and convolutional neural networks (CNNs). RNNs model successive time points based on the Markov assumption ~\cite{hochreiter1997long,Cho14Properties,Rangapuram18Deep},  while CNNs extract variation information along the temporal dimension using techniques such as temporal convolutional networks (TCNs)~\cite{Bai18Empirical,Franceschi19Unsupervised}. However, due to the Markov assumption in RNN and the local reception property in TCN, both of the two models are unable to capture the long-term dependencies in sequential data. Recently, Transformers with attention mechanisms have gained increasing popularity in other fields like natural language processing~\cite{Bert/NAACL/Jacob}, speech recognition~\cite{DBLP:conf/icassp/DongXX18}, and even computer vision~\cite{Dosovitskiy21Image}. Researchers have also explored the potential of Transformer models in long-term multivariate time series forecasting (MTSF) tasks~\cite{Zhou2021informer,Wu2021Autoformer,FedFormer,liu2022pyraformer}. 

Despite the significant progress made by Transformer-based methods in forecasting long-term future values, a recent paper questions the effectiveness of Transformer~\cite{Zeng2022Transformers}. The authors have demonstrated that a simple linear model can outperform all state-of-the-art Transformer-based methods. However, it is important to note that the linear model used by the authors employs a channel-independent training strategy that is different from previous works. Instead of considering all the channels as a whole, the authors train a univariate forecast model that is shared across all the channels. This training strategy is closely related to the global~\cite{David2020deepAR} or cross-learning~\cite{smyl2020hybrid} approach when there is a set of related univariate time series. Global methods assume that all the time series in the set come from the same process and fit a single univariate forecasting function~\cite{Rabanser20Effectiveness}. Despite the heterogeneity of real-world time series, global methods have demonstrated unexpectedly good performance~\cite{laptev2017time,gasthaus2019probabilistic,nbeats}. \cite{montero2021principles} attribute the improvement to the relief of overfit by larger number of training examples. Multivariate time series can be viewed as a collection of multiple interdependent series that are synchronized in time. However, it is necessary to consider all channels of the variables in order to fully capture the characteristics of the object at each time step. Moreover, disregarding the correlation between channels can result in incomplete modeling. Therefore, the effectiveness of a channel-dependent strategy in improving the modeling of multivariate time series remains to be thoroughly investigated, along with the underlying reasons for its success.

This paper conducts a comprehensive investigation of the two training strategies that have emerged in recent works on multivariate time series forecasting. The first strategy is the Channel-Dependent (CD) approach, which predicts future values by taking into account the historical data of all the channels~\cite{Zhou2021informer,Wu2021Autoformer,FedFormer,liu2022pyraformer}. The second strategy is the Channel-Independent (CI) approach, which treats multivariate time series as separate univariate time series and constructs a multivariate forecaster using univariate forecasting functions~\cite{Zeng2022Transformers,Nie22Time}. With this strategy, the predicted value of a particular channel depends solely on its own historical values, while the other channels are ignored. Intuitively, since an object cannot be fully described by considering only one of its features, the CI is supposed to perform poorly. We test the two strategies with various kinds of machine learning algorithms on 9 commonly used long-term forecast datasets. Interestingly, our results indicate that, regardless of the algorithm used, the CI strategy consistently outperforms the CD strategy, often by a substantial margin. 

To explore the reasons behind this, we examined the linear model as an illustrative example, both theoretically and empirically. First, we observed the distribution drift in the real-world multivariate time series. Specifically, we found that the autocorrelation functions (ACFs) of each channel, which are relevant to the linear model, exhibit substantial differences between training and testing phases. Next, we demonstrated that the linear model using CI strategy relies solely on the mean of ACFs across all channels, while CD strategy relies on each ACF separately. Given that the mean ACF drifts less than most of the channel ACFs, this leads to CI strategy achieving superior performance. Our analysis led us to the conclusion that CI and CD exhibit different trade-offs in terms of capacity and robustness. Specifically, CI has lower capacity but better robustness, whereas CD is the opposite.

Through our analyses, we give some practice to improve the performance of existing algorithms. First, we propose an new objective called Predict Residuals with Regularization (PRReg). This objective is designed to address the non-robustness of the CD strategy and has demonstrated superior performance compared to both the original CD and CI strategies in the majority of cases. Furthermore, we have identified several factors that may influence algorithm performance. By taking these factors into consideration and implementing the PRReg objective, it may be possible to further enhance algorithm performance.

We conclude our contribution as follows:
\begin{itemize}
    \item We present the Channel Dependent (CD) and Channel Independent (CI) training strategies for multivariate time series forecasting, and find that CI outperformed CD by a significant margin, despite ignoring channel correlation.
    \item Through theoretical and empirical analysis on linear model, we identified that CI has high capacity and low robustness, while CD has low capacity and high robustness. In real-world non-stationary time series forecasting, robustness is more important, which explains CI's superior performance in most cases.
    \item We presented practical strategies for improving forecasting model performance, including the use of the Predict Residuals with Regularization (PRReg) objective and other factors that can influence CD and CI performance.
\end{itemize}



\section{Preliminaries}
In this section, we introduce the concepts of Multivariate Time Series Forecasting (MTSF), Channel Dependent (CD) Strategy, and Channel Independent (CI) Strategy.
\subsection{Multivariate Time Series Forecasting}
MTSF deals with time series data that contain multiple variables, or channels, at each time step. Given historical values $\X \in \R ^{L\times C}$ where $L$ represents the length of the look-back window, and $C$ is the number of channels. the goal of MTSF is to predict the future values $\Y\in \R ^{H\times C} $, where $H > 0$ is the forecast horizon.
\subsection{Channel Dependent (CD) Strategy}
 The CD strategy involves building a model that forecasts the future values of each channel by considering all the history of all the channels. Most of the multivariate forecaster employ this strategy~\cite{David2020deepAR,Zhou2021informer,Wu2021Autoformer,FedFormer}. To be specified, the objective of CD model is the minimize the forecasting risk $\cR$:
\begin{equation}
	\label{eq:multi_expect}
	\min_f \cR(f) = \min_f \E_{(\X,\Y)} \ell(f(\X),\Y).
\end{equation}
$\ell$ is the regression loss. We apply the commonly used L-2 (MSE) loss unless specified otherwise~\cite{elsayed2021we,Zhou2021informer,Wu2021Autoformer,FedFormer,Zeng2022Transformers}. To minimize the expectation objective (\cref{eq:multi_expect}), the model $f$ is trained by the empirical loss on the training set $\{(\X^{(i)}, \Y^{(i)})\}_{i=1}^N$. This is referred to as the \emph{Channel Dependent (CD) loss}:
\begin{equation}
	\label{eq:multi_emp}
	 \min_f \frac{1}{N} \sum_{i=1}^N \ell(f(\X^{(i)}),\Y^{(i)}).
\end{equation}
Here $N$ is the number of time series used for training. 

\subsection{Channel Independent (CI) Strategy}
Alternatively, multivariate time series can also be viewed as a set of multiple time series, \ie, the given look-back window $\X= \left[\x_1, \x_2, \ldots, \x_C \right]$ and the target $\Y= \left[\y_1, \y_2, \ldots, \y_C \right]$, where $\x_c \in R^L, \y_c \in R^H, 1 \le c \le C$ is history and future values of the univariate time seires for $c$-th channel. In this case, a forecast model can be learned by the following Channel Independent (CI) loss: 
\begin{equation}
	\label{eq:uni_emp}
	\min_f \frac{1}{NC} \sum_{i=1}^N \sum_{c=1}^C \ell(f(\x^{(i)}_{c}),\y^{(i)}_{c}).
\end{equation}
The CI loss is the mean of the losses of all channels, with each channel's loss being minimized independently.

In \cref{fig:multi_uni}, we illustrate the difference between the Channel Dependent (CD) and Channel Independent (CI) strategies for multivariate time series forecasting. CD takes all the channels of a time series as input and aims to capture the relationships between them, while CI handles each channel independently. It is natural to assume that CD would outperform CI, but in the next section, we demonstrate that the opposite is true across different benchmarks and algorithms, including both non-deep and deep methods.

\begin{figure*}[tp]
\includegraphics[width=0.9\linewidth]{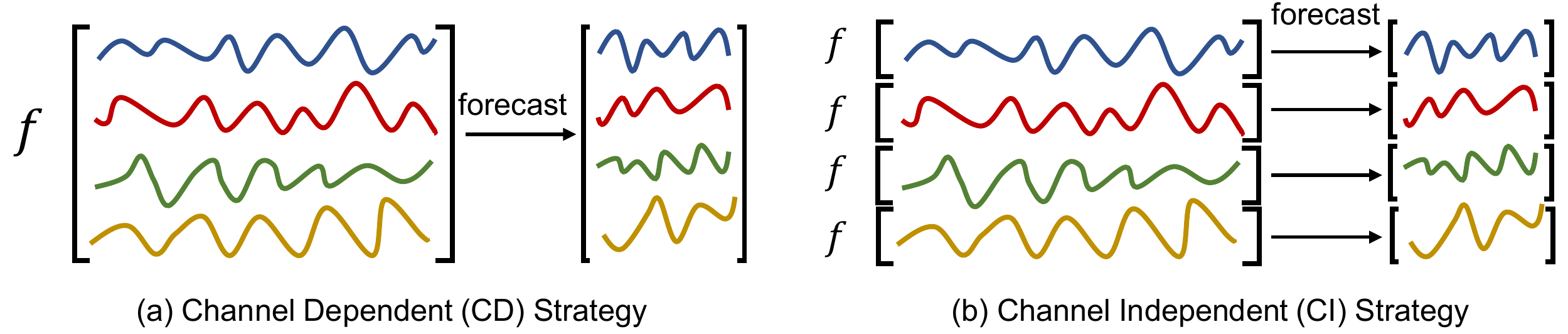}
\vspace{-4mm}
	\caption{Comparison of two training strategies for Multivariate Time Series Forecasting (MTSF) tasks. The left shows the Channel Dependent (CD) strategy where all the channels are taken as input and forecasted future values depend on the history of all the channels. The right shows the Channel Independent (CI) strategy, which treats the multivariate series as multiple univariate series and trains a unified model on these series. The prediction of each channel depends solely on its own historical values, and the relationship between different channels is ignored. }
 \vspace{-2mm}
	\label{fig:multi_uni}
\end{figure*}

\section{Empirical Comparison of CD and CI}
\label{sec:empirical_comparison}
The previous section introduced two strategies -- CD and CI -- for solving multivariate time series forecasting tasks. While CD considers all channels, one might assume that models trained with CD would outperform CI by a significant margin. Surprisingly, the opposite is true: \textbf{CI outperforms CD in most cases}. In this section, we present empirical comparisons of CD and CI across diverse datasets on various methods, including recent Transformer-based methods. Additionally, we provide both theoretical and empirical analyses to explain the reasons behind these results.

\subsection{Experiment Setup}
\paragraph{Datasets.} We conduct extensive experiments on nine widely-used, real-world datasets that cover five mainstream time series forecasting applications, namely energy, traffic, economics, weather, and disease. The datasets include: 
\begin{itemize}
	\item \textbf{ETT (Electricity Transformer Temperature)}~\cite{Zhou2021informer}\footnote{\url{https://github.com/zhouhaoyi/ETDataset}} comprises two hourly-level datasets (ETTh) and two 15-minute-level datasets (ETTm). Each dataset contains seven oil and load features of electricity transformers from July $2016$ to July $2018$.
	\item \textbf{Traffic}\footnote{\url{http://pems.dot.ca.gov}} describes the road occupancy rates. It contains the hourly data recorded by the sensors of San Francisco freeways from $2015$ to $2016$. 
	\item \textbf{Electricity}\footnote{\url{https://archive.ics.uci.edu/ml/datasets/ElectricityLoadDiagrams20112014}} collects the hourly electricity consumption of $321$ clients from $2012$ to $2014$.
	\item \textbf{Exchange-Rate}~\cite{lai2018modeling}\footnote{\url{https://github.com/laiguokun/multivariate-time-series-data}} collects the daily exchange rates of $8$ countries from $1990$ to $2016$.
	\item \textbf{Weather}\footnote{\url{https://www.bgc-jena.mpg.de/wetter/}} includes $21$ indicators of weather, such as air temperature, and humidity. Its data is recorded every 10 min for $2020$ in Germany.
	\item \textbf{ILI}\footnote{\url{https://gis.cdc.gov/grasp/fluview/fluportaldashboard.html}} describes the ratio of patients seen with influenza-like illness and the number of patients. It includes weekly data from the Centers for Disease Control and Prevention of the United States from $2002$ to $2021$.
\end{itemize}
We also summarize the datasets in~\cref{tb:dataset_summary}.

\begin{table}[htp]
\vspace{-2mm}
	\caption{Statistics of nine multivariate time series datasets.}
 \vspace{-3mm}
	\begin{tabular}{cccc}
		\toprule
		Dataset(s)    & channels & Timesteps & Granularity \\
		\midrule
		ETTh1\&ETTh2  & 7 & 17,420    & 1hour\\
		ETTm1\&ETTm2  & 7 & 69,680    & 5min \\
		Traffic& 862      & 17,544    & 1hour\\
		Electricity   & 321      & 26,304    & 1hour\\
		Exchange-Rate & 8 & 7,588     & 1day \\
		Weather& 21& 52,696    & 10min\\
		ILI  & 7 & 966& 1week      \\
		\bottomrule
	\end{tabular}
\label{tb:dataset_summary}
\vspace{-4mm}
\end{table}
\paragraph{Evaluation metrics.} In line with previous research~\cite{Zhou2021informer,Wu2021Autoformer,FedFormer,Zeng2022Transformers}, we compare the performance of different methods using two primary evaluation metrics: Mean Squared Error (MSE) and Mean Absolute Error (MAE).

\paragraph{Compared methods.} Our analysis includes a range of methods, including non-deep models, Transformer-based models, and other deep learning models. Specifically:
\begin{itemize}
	\item \textbf{Non-deep methods.} We select two popular non-deep models in recent time -- \textbf{Linear}~\cite{Zeng2022Transformers} and \textbf{GBRT}~\cite{elsayed2021we}. Following the practice in \cite{Zeng2022Transformers}, we use the auto-gradient framework~\cite{NEURIPS2019_9015_pytorch} to implement the Linear model and optimize it using gradient descent, even though it is non-deep. The results are reproduced by their codes in the  public repository~\footnote{\url{https://github.com/cure-lab/LTSF-Linear}}. For GBRT, we integrate the XGBoost~\cite{chen2016xgboost} implementation in the repository~\footnote{\url{https://github.com/Daniela-Shereen/GBRT-for-TSF}}.
	\textbf{Linear} is a representative linear model and \textbf{GBRT} is a non-linear model.
	\item \textbf{Deep methods.} Transformers are popular and enjoy rapid development in long-term multivariate forecast tasks. We include two recent Transformer-based methods: 
 \textbf{Informer}~\cite{Zhou2021informer} and traditional \textbf{Transformer}~\cite{attention_is_all_you_need}. Codes are taken from the Informer repository~\footnote{\url{https://github.com/zhouhaoyi/Informer2020}}. For generality, we also include the CNN-based method \textbf{TCN}~\cite{Bai18Empirical}, RNN-based method \textbf{DeepAR}~\cite{David2020deepAR} and a simple two-layer \textbf{MLP} model with ReLU activation. 
\end{itemize}

\paragraph{Other details.} 
For each experiment, we set the length of the look-back window to 36 for ILI and 96 for other datasets. These values follow the setup in~\cite{Zhou2021informer} and differ from the values in~\cite{Zeng2022Transformers}. When using the CD strategy for Linear and GBRT, we flatten the input as the feature for these models. Specifically, for a look-back window with $L$ time steps and $C$ channels, the input feature has a dimensionality of $LC$. However, this approach may result in high input dimensionality when dealing with datasets with many channels, such as the Traffic dataset, which has 862 channels. This can lead to computational and storage issues for dense methods like Linear. Therefore, we report only those results that are feasible for one RTX 3090 GPU.

\subsection{Main Results}
The study's results are presented in tables \ref{tb:mae_analysis} and \ref{tb:mse_analysis} where the algorithms' performance is measured by MAE and MSE, respectively. A violin plot in \cref{fig:perform_dist} illustrates the performance distribution across the seven algorithms. The study reveals several noteworthy results.

\noindent{\textbf{CI outperforms CD in the majority of cases.}} 
(1) CI significantly enhances the performance of almost all algorithms, with an average improvement of at least 20\%. OOn complex and dense algorithms like MLP, Transformer, and Informer, the improvement exceeds 30\%. Simple methods like linear experience less improvement. In most cases, replacing the CD strategy with CI yields significant improvement (>10\%). On all algorithms, the improvement is observed in more than half of the cases. Only \textbf{3} cases show a significant drop (<-10\%) in MAE and \textbf{9} in MSE, while the number of significant improvements is \textbf{92} in MAE and 95 in MSE.
(2) On most benchmarks, CI improves performance consistently. This is apparent in the left seven datasets, where the improvement is consistent. On ETTh2, CI improves performance by at least 30\%, while Weather and ILI show less improvement. Nonetheless, CI remains superior, as evidenced by the performance distribution in \cref{fig:perform_dist}.


\noindent{\textbf{CI strategy narrows the performance difference.}} \Cref{fig:perform_dist} reveals that the CI strategy has not only a lower error mean but also a smaller variance than the CD strategy. This indicates that when using the CI strategy, the model performance does not differ significantly. With the exception of Weather, methods with the CI strategy achieve the best results on the other datasets. But the best methods vary. For instance, on Electricity, GBRT and Transformer yield the best results. MLP achieves the best outcome on the 96 horizon of ETTh2. While most state-of-the-art (SOTA) results are achieved using linear, other methods are not too far behind.


\noindent{\textbf{Conclusion.}} This section demonstrates that changing the CD strategy to the CI strategy can significantly enhance the performance of multivariate forecasting methods. Hence, the superiority of some recent methods is mainly due to the training strategy rather than the algorithm's design~\cite{Zeng2022Transformers}. For a fair comparison, the training strategy and algorithm should be decoupled. The subsequent section explores why the CI strategy outperforms and elucidates the trade-off between capacity and robustness.

\begin{table*}[htp]
\centering
\caption{MAE on nine multivariate time series datasets across various forecasting models. CD means taking the Channel Dependent strategy where the algorithm takes all the channels in the look-back window as input. CI means the algorithm takes each channel as an individual univariate series and trains a shared model. For each benchmark, we mark the best model results in \textbf{bold}. We also display the improvement percentage by CI relative to CD. The significant improvement ($>10\%$) and significant drop ($<-10\%$) are marked by {\color[HTML]{FF0000} \textbf{bold red}} and {\color[HTML]{70AD47} \textbf{bold green}} respectively. The last column displays the number of {\color[HTML]{FF0000} \textbf{significant improvement/total cases}}, {\color[HTML]{70AD47} 
\textbf{significant drop/total cases}} and average improvement (\%) respectively.}
\vspace{-3mm}
\scalebox{0.67}{
\begin{tabular}{lccccccccccccccccccc}
\toprule
Dataset & \multicolumn{2}{c}{Electricity} & \multicolumn{2}{c}{ETTh1} & \multicolumn{2}{c}{ETTh2} & \multicolumn{2}{c}{ETTm1} & \multicolumn{2}{c}{ETTm2} & \multicolumn{2}{c}{Exchange\_Rate} & \multicolumn{2}{c}{Traffic} & \multicolumn{2}{c}{Weather} & \multicolumn{2}{c}{ILI} & Mean \\
Horizon & 48 & 96 & 48 & 96 & 48 & 96 & 48 & 96 & 48 & 96 & 48 & 96 & 48 & 96 & 48 & 96 & 24 & 36 &  \\
\midrule
Linear (CD) & 0.488 & 0.493 & 0.426 & 0.497 & 0.645 & 0.961 & 0.427 & 0.441 & 0.277 & 0.372 & 0.246 & 0.366 & - & - & 0.219 & 0.247 & \textbf{0.955} & \textbf{0.945} & {\color[HTML]{FF0000} \textbf{11/16}} \\
Linear (CI) & 0.275 & 0.279 & \textbf{0.374} & \textbf{0.398} & \textbf{0.302} & 0.373 & 0.382 & \textbf{0.377} & \textbf{0.251} & \textbf{0.285} & \textbf{0.164} & \textbf{0.218} & 0.428 & 0.397 & 0.229 & 0.261 & 1.163 & 1.135 & {\color[HTML]{70AD47} \textbf{2/16}} \\
Improve (\%) & {\color[HTML]{FF0000} \textbf{+43.57}} & {\color[HTML]{FF0000} \textbf{+43.39}} & {\color[HTML]{FF0000} \textbf{+12.26}} & {\color[HTML]{FF0000} \textbf{+19.87}} & {\color[HTML]{FF0000} \textbf{+53.28}} & {\color[HTML]{FF0000} \textbf{+61.15}} & {\color[HTML]{FF0000} \textbf{+10.49}} & {\color[HTML]{FF0000} \textbf{+14.35}} & +9.37 & {\color[HTML]{FF0000} \textbf{+23.29}} & {\color[HTML]{FF0000} \textbf{+33.29}} & {\color[HTML]{FF0000} \textbf{+40.52}} & - & - & -4.81 & -5.36 & {\color[HTML]{70AD47} \textbf{-21.75}} & {\color[HTML]{70AD47} \textbf{-20.09}} & {\color[HTML]{FF0000} \textbf{+19.55}} \\
\midrule
GBRT (CD) & - & - & 0.499 & 0.560 & 0.732 & 0.936 & 0.424 & 0.477 & 0.404 & 0.517 & 0.719 & 0.912 & - & - & 0.236 & 0.268 & 1.597 & 1.554 & {\color[HTML]{FF0000} \textbf{12/14}} \\
GBRT (CI) & \textbf{0.249} & 0.256 & 0.385 & 0.415 & 0.454 & 0.614 & \textbf{0.360} & 0.380 & 0.297 & 0.355 & 0.336 & 0.401 & \textbf{0.282} & 0.286 & \textbf{0.190} & \textbf{0.232} & 1.459 & 1.501 & {\color[HTML]{70AD47} \textbf{0/14}} \\
Improve (\%) & - & - & {\color[HTML]{FF0000} \textbf{+22.87}} & {\color[HTML]{FF0000} \textbf{+25.84}} & {\color[HTML]{FF0000} \textbf{+37.92}} & {\color[HTML]{FF0000} \textbf{+34.38}} & {\color[HTML]{FF0000} \textbf{+15.00}} & {\color[HTML]{FF0000} \textbf{+20.41}} & {\color[HTML]{FF0000} \textbf{+26.49}} & {\color[HTML]{FF0000} \textbf{+31.49}} & {\color[HTML]{FF0000} \textbf{+53.19}} & {\color[HTML]{FF0000} \textbf{+56.05}} & - & - & {\color[HTML]{FF0000} \textbf{+19.49}} & {\color[HTML]{FF0000} \textbf{+13.61}} & +8.62 & +3.44 & {\color[HTML]{FF0000} \textbf{+26.34}} \\\midrule
MLP (CD) & 0.385 & 0.398 & 0.523 & 0.625 & 1.028 & 1.543 & 0.480 & 0.511 & 0.439 & 0.410 & 0.617 & 0.676 & 26.834 & 26.054 & 0.218 & 0.251 & 1.161 & 1.254 & {\color[HTML]{FF0000} \textbf{12/18}} \\
MLP (CI) & 0.287 & 0.289 & 0.395 & 0.422 & 0.319 & \textbf{0.365} & 0.453 & 0.483 & 0.266 & 0.294 & 0.265 & 0.255 & 0.406 & 0.388 & 0.230 & 0.261 & 1.358 & 1.369 & {\color[HTML]{70AD47} \textbf{1/18}} \\
Improve (\%) & {\color[HTML]{FF0000} \textbf{+25.43}} & {\color[HTML]{FF0000} \textbf{+27.43}} & {\color[HTML]{FF0000} \textbf{+24.45}} & {\color[HTML]{FF0000} \textbf{+32.52}} & {\color[HTML]{FF0000} \textbf{+68.93}} & {\color[HTML]{FF0000} \textbf{+76.36}} & +5.66 & +5.38 & {\color[HTML]{FF0000} \textbf{+39.39}} & {\color[HTML]{FF0000} \textbf{+28.09}} & {\color[HTML]{FF0000} \textbf{+57.07}} & {\color[HTML]{FF0000} \textbf{+62.33}} & {\color[HTML]{FF0000} \textbf{+98.49}} & {\color[HTML]{FF0000} \textbf{+98.51}} & -5.67 & -4.00 & {\color[HTML]{70AD47} \textbf{-16.89}} & -9.14 & {\color[HTML]{FF0000} \textbf{+34.13}} \\
\midrule
DeepAR (CD) & 0.401 & 0.378 & 0.668 & 0.763 & 0.938 & 1.042 & 0.594 & 0.612 & 0.505 & 0.662 & 0.795 & 0.874 & 0.361 & 0.386 & 0.395 & 0.456 & 1.593 & 1.570 & {\color[HTML]{FF0000} \textbf{13/18}} \\
DeepAR (CI) & 0.330 & 0.342 & 0.587 & 0.594 & 0.541 & 0.597 & 0.511 & 0.520 & 0.304 & 0.354 & 0.623 & 0.660 & 0.370 & 0.410 & 0.240 & 0.287 & 1.449 & 1.454 & {\color[HTML]{70AD47} \textbf{0/18}} \\
Improve (\%) & {\color[HTML]{FF0000} \textbf{+17.72}} & +9.51 & {\color[HTML]{FF0000} \textbf{+12.14}} & {\color[HTML]{FF0000} \textbf{+22.17}} & {\color[HTML]{FF0000} \textbf{+42.28}} & {\color[HTML]{FF0000} \textbf{+42.66}} & {\color[HTML]{FF0000} \textbf{+14.03}} & {\color[HTML]{FF0000} \textbf{+15.09}} & {\color[HTML]{FF0000} \textbf{+39.76}} & {\color[HTML]{FF0000} \textbf{+46.54}} & {\color[HTML]{FF0000} \textbf{+21.65}} & {\color[HTML]{FF0000} \textbf{+24.55}} & -2.46 & -6.08 & {\color[HTML]{FF0000} \textbf{+39.29}} & {\color[HTML]{FF0000} \textbf{+37.09}} & +9.00 & +7.38 & {\color[HTML]{FF0000} \textbf{+21.80}} \\
\midrule
TCN (CD) & 0.423 & 0.440 & 0.647 & 0.746 & 0.985 & 0.985 & 0.803 & 0.712 & 0.769 & 0.841 & 0.971 & 0.955 & 0.627 & 0.637 & 0.427 & 0.399 & 1.600 & 1.482 & {\color[HTML]{FF0000} \textbf{12/18}} \\
TCN (CI) & 0.322 & 0.349 & 0.405 & 0.471 & 0.441 & 0.585 & 0.555 & 0.502 & 0.358 & 0.386 & 0.929 & 0.971 & 0.441 & 0.469 & 0.388 & 0.411 & 1.837 & 1.593 & {\color[HTML]{70AD47} \textbf{1/18}} \\
Improve (\%) & {\color[HTML]{FF0000} \textbf{+23.75}} & {\color[HTML]{FF0000} \textbf{+20.69}} & {\color[HTML]{FF0000} \textbf{+37.42}} & {\color[HTML]{FF0000} \textbf{+36.92}} & {\color[HTML]{FF0000} \textbf{+55.20}} & {\color[HTML]{FF0000} \textbf{+40.65}} & {\color[HTML]{FF0000} \textbf{+30.83}} & {\color[HTML]{FF0000} \textbf{+29.45}} & {\color[HTML]{FF0000} \textbf{+53.44}} & {\color[HTML]{FF0000} \textbf{+54.16}} & +4.29 & -1.69 & {\color[HTML]{FF0000} \textbf{+29.63}} & {\color[HTML]{FF0000} \textbf{+26.37}} & +9.26 & -2.89 & {\color[HTML]{70AD47} \textbf{-14.81}} & -7.49 & {\color[HTML]{FF0000} \textbf{+23.62}} \\\midrule
Informer (CD) & 0.424 & 0.424 & 0.766 & 0.959 & 0.906 & 1.386 & 0.477 & 0.568 & 0.428 & 0.478 & 0.717 & 0.769 & 0.403 & 0.416 & 0.402 & 0.371 & 1.565 & 1.590 & {\color[HTML]{FF0000} \textbf{15/18}} \\
Informer (CI) & 0.285 & 0.285 & 0.509 & 0.655 & 0.372 & 0.427 & 0.408 & 0.447 & 0.264 & 0.350 & 0.308 & 0.312 & 0.337 & 0.297 & 0.228 & 0.343 & 1.486 & 1.552 & {\color[HTML]{70AD47} \textbf{0/18}} \\
Improve (\%) & {\color[HTML]{FF0000} \textbf{+32.90}} & {\color[HTML]{FF0000} \textbf{+32.90}} & {\color[HTML]{FF0000} \textbf{+33.56}} & {\color[HTML]{FF0000} \textbf{+31.77}} & {\color[HTML]{FF0000} \textbf{+58.89}} & {\color[HTML]{FF0000} \textbf{+69.23}} & {\color[HTML]{FF0000} \textbf{+14.54}} & {\color[HTML]{FF0000} \textbf{+21.29}} & {\color[HTML]{FF0000} \textbf{+38.41}} & {\color[HTML]{FF0000} \textbf{+26.74}} & {\color[HTML]{FF0000} \textbf{+56.99}} & {\color[HTML]{FF0000} \textbf{+59.48}} & {\color[HTML]{FF0000} \textbf{+16.31}} & {\color[HTML]{FF0000} \textbf{+28.58}} & {\color[HTML]{FF0000} \textbf{+43.27}} & +7.31 & +5.08 & +2.40 & {\color[HTML]{FF0000} \textbf{+32.20}} \\
\midrule
Transformer (CD) & 0.352 & 0.357 & 0.734 & 0.774 & 0.829 & 1.111 & 0.458 & 0.533 & 0.404 & 0.547 & 0.571 & 0.769 & 0.364 & 0.359 & 0.343 & 0.452 & 1.508 & 1.555 & {\color[HTML]{FF0000} \textbf{17/18}} \\
Transformer (CI) & 0.281 & \textbf{0.255} & 0.565 & 0.501 & 0.347 & 0.461 & 0.407 & 0.466 & 0.254 & 0.321 & 0.227 & 0.312 & 0.303 & \textbf{0.273} & 0.232 & 0.287 & 1.348 & 1.525 & {\color[HTML]{70AD47} \textbf{0/18}} \\
Improve (\%) & {\color[HTML]{FF0000} \textbf{+20.06}} & {\color[HTML]{FF0000} \textbf{+28.66}} & {\color[HTML]{FF0000} \textbf{+23.03}} & {\color[HTML]{FF0000} \textbf{+35.36}} & {\color[HTML]{FF0000} \textbf{+58.18}} & {\color[HTML]{FF0000} \textbf{+58.51}} & {\color[HTML]{FF0000} \textbf{+11.30}} & {\color[HTML]{FF0000} \textbf{+12.62}} & {\color[HTML]{FF0000} \textbf{+37.15}} & {\color[HTML]{FF0000} \textbf{+41.27}} & {\color[HTML]{FF0000} \textbf{+60.25}} & {\color[HTML]{FF0000} \textbf{+59.48}} & {\color[HTML]{FF0000} \textbf{+16.70}} & {\color[HTML]{FF0000} \textbf{+23.78}} & {\color[HTML]{FF0000} \textbf{+32.57}} & {\color[HTML]{FF0000} \textbf{+36.49}} & {\color[HTML]{FF0000} \textbf{+10.62}} & +1.88 & {\color[HTML]{FF0000} \textbf{+31.55}}   \\
\bottomrule
\end{tabular}
}
\label{tb:mae_analysis}
\end{table*}

\begin{table*}[htp]
	\centering
	\caption{MSE on nine multivariate time series datasets across various forecasting models. }
  \vspace{-3mm}
	\scalebox{0.67}{
		\begin{tabular}{lccccccccccccccccccc}
			\toprule
Dataset & \multicolumn{2}{c}{Electricity} & \multicolumn{2}{c}{ETTh1} & \multicolumn{2}{c}{ETTh2} & \multicolumn{2}{c}{ETTm1} & \multicolumn{2}{c}{ETTm2} & \multicolumn{2}{c}{Exchange\_Rate} & \multicolumn{2}{c}{Traffic} & \multicolumn{2}{c}{Weather} & \multicolumn{2}{c}{ILI} & Mean \\
Horizon & 48 & 96 & 48 & 96 & 48 & 96 & 48 & 96 & 48 & 96 & 48 & 96 & 48 & 96 & 48 & 96 & 24 & 36 & \textbf{} \\\midrule
Linear (CD) & 0.442 & 0.444 & 0.402 & 0.514 & 0.711 & 1.520 & 0.404 & 0.433 & 0.161 & 0.269 & 0.119 & 0.274 & - & - & 0.142 & 0.165 & \textbf{2.343} & \textbf{2.436} & {\color[HTML]{FF0000} \textbf{11/16}} \\
Linear (CI) & 0.195 & 0.196 & \textbf{0.345} & \textbf{0.386} & \textbf{0.226} & \textbf{0.319} & 0.354 & \textbf{0.351} & \textbf{0.147} & \textbf{0.189} & \textbf{0.051} & \textbf{0.088} & 0.703 & 0.651 & 0.169 & 0.202 & 2.847 & 2.857 & {\color[HTML]{70AD47} \textbf{4/16}} \\
Improve (\%) & {\color[HTML]{FF0000} \textbf{+55.88}} & {\color[HTML]{FF0000} \textbf{+55.91}} & {\color[HTML]{FF0000} \textbf{+14.17}} & {\color[HTML]{FF0000} \textbf{+24.94}} & {\color[HTML]{FF0000} \textbf{+68.16}} & {\color[HTML]{FF0000} \textbf{+79.04}} & {\color[HTML]{FF0000} \textbf{+12.20}} & {\color[HTML]{FF0000} \textbf{+18.85}} & +8.45 & {\color[HTML]{FF0000} \textbf{+29.73}} & {\color[HTML]{FF0000} \textbf{+56.95}} & {\color[HTML]{FF0000} \textbf{+67.84}} & - & - & {\color[HTML]{70AD47} \textbf{-19.15}} & {\color[HTML]{70AD47} \textbf{-22.16}} & {\color[HTML]{70AD47} \textbf{-21.52}} & {\color[HTML]{70AD47} \textbf{-17.29}} & {\color[HTML]{FF0000} \textbf{+25.75}} \\\midrule
GBRT (CD) & - & - & 0.497 & 0.592 & 1.039 & 1.633 & 0.428 & 0.500 & 0.370 & 0.606 & 0.919 & 1.387 & - & - & 0.539 & 0.475 & 5.128 & 4.845 & {\color[HTML]{FF0000} \textbf{12/14}} \\
GBRT (CI) & \textbf{0.165} & 0.171 & 0.365 & 0.414 & 0.636 & 1.167 & \textbf{0.341} & 0.367 & 0.236 & 0.318 & 0.270 & 0.335 & \textbf{0.532} & 0.550 & \textbf{0.146} & \textbf{0.185} & 5.186 & 4.983 & {\color[HTML]{70AD47} \textbf{0/14}} \\
Improve (\%) & - & - & {\color[HTML]{FF0000} \textbf{+26.63}} & {\color[HTML]{FF0000} \textbf{+29.99}} & {\color[HTML]{FF0000} \textbf{+38.77}} & {\color[HTML]{FF0000} \textbf{+28.57}} & {\color[HTML]{FF0000} \textbf{+20.43}} & {\color[HTML]{FF0000} \textbf{+26.63}} & {\color[HTML]{FF0000} \textbf{+36.13}} & {\color[HTML]{FF0000} \textbf{+47.50}} & {\color[HTML]{FF0000} \textbf{+70.58}} & {\color[HTML]{FF0000} \textbf{+75.87}} & - & - & {\color[HTML]{FF0000} \textbf{+72.96}} & {\color[HTML]{FF0000} \textbf{+61.01}} & -1.13 & -2.85 & {\color[HTML]{FF0000} \textbf{+37.93}} \\\midrule
MLP (CD) & 0.293 & 0.305 & 0.517 & 0.695 & 1.664 & 3.651 & 0.453 & 0.507 & 0.323 & 0.303 & 0.590 & 0.802 & 1257.104 & 1118.137 & 0.140 & 0.167 & 2.959 & 3.494 & {\color[HTML]{FF0000} \textbf{12/18}} \\
MLP (CI) & 0.199 & 0.199 & 0.360 & 0.408 & 0.254 & 0.321 & 0.457 & 0.513 & 0.157 & 0.197 & 0.172 & 0.118 & 0.666 & 0.639 & 0.169 & 0.202 & 3.618 & 3.840 & {\color[HTML]{70AD47} \textbf{3/18}} \\
Improve (\%) & {\color[HTML]{FF0000} \textbf{+32.31}} & {\color[HTML]{FF0000} \textbf{+34.57}} & {\color[HTML]{FF0000} \textbf{+30.40}} & {\color[HTML]{FF0000} \textbf{+41.36}} & {\color[HTML]{FF0000} \textbf{+84.74}} & {\color[HTML]{FF0000} \textbf{+91.22}} & -0.73 & -1.14 & {\color[HTML]{FF0000} \textbf{+51.33}} & {\color[HTML]{FF0000} \textbf{+34.85}} & {\color[HTML]{FF0000} \textbf{+70.87}} & {\color[HTML]{FF0000} \textbf{+85.28}} & {\color[HTML]{FF0000} \textbf{+99.95}} & {\color[HTML]{FF0000} \textbf{+99.94}} & {\color[HTML]{70AD47} \textbf{-21.35}} & {\color[HTML]{70AD47} \textbf{-21.37}} & {\color[HTML]{70AD47} \textbf{-22.28}} & -9.88 & {\color[HTML]{FF0000} \textbf{+37.78}} \\\midrule
DeepAR (CD) & 0.316 & 0.293 & 0.755 & 0.918 & 1.326 & 1.609 & 0.736 & 0.735 & 0.444 & 0.747 & 0.912 & 1.093 & 0.644 & 0.691 & 0.380 & 0.473 & 5.593 & 5.418 & {\color[HTML]{FF0000} \textbf{14/18}} \\
DeepAR (CI) & 0.231 & 0.247 & 0.723 & 0.724 & 0.601 & 0.714 & 0.616 & 0.566 & 0.200 & 0.268 & 0.824 & 0.878 & 0.641 & 0.708 & 0.173 & 0.221 & 4.590 & 4.501 & {\color[HTML]{70AD47} \textbf{0/18}} \\
Improve (\%) & {\color[HTML]{FF0000} \textbf{+26.73}} & {\color[HTML]{FF0000} \textbf{+15.65}} & +4.30 & {\color[HTML]{FF0000} \textbf{+21.08}} & {\color[HTML]{FF0000} \textbf{+54.71}} & {\color[HTML]{FF0000} \textbf{+55.63}} & {\color[HTML]{FF0000} \textbf{+16.26}} & {\color[HTML]{FF0000} \textbf{+22.92}} & {\color[HTML]{FF0000} \textbf{+54.91}} & {\color[HTML]{FF0000} \textbf{+64.16}} & +9.67 & {\color[HTML]{FF0000} \textbf{+19.65}} & +0.55 & -2.48 & {\color[HTML]{FF0000} \textbf{+54.38}} & {\color[HTML]{FF0000} \textbf{+53.22}} & {\color[HTML]{FF0000} \textbf{+17.94}} & {\color[HTML]{FF0000} \textbf{+16.92}} & {\color[HTML]{FF0000} \textbf{+28.12}} \\\midrule
TCN (CD) & 0.359 & 0.383 & 0.735 & 0.890 & 1.453 & 1.539 & 1.095 & 0.834 & 0.858 & 1.114 & 1.453 & 1.334 & 1.088 & 1.095 & 0.377 & 0.348 & 5.224 & 4.775 & {\color[HTML]{FF0000} \textbf{13/18}} \\
TCN (CI) & 0.258 & 0.290 & 0.401 & 0.507 & 0.404 & 0.663 & 0.614 & 0.534 & 0.251 & 0.313 & 1.488 & 1.562 & 0.784 & 0.835 & 0.290 & 0.339 & 6.671 & 5.142 & {\color[HTML]{70AD47} \textbf{2/18}} \\
Improve (\%) & {\color[HTML]{FF0000} \textbf{+28.28}} & {\color[HTML]{FF0000} \textbf{+24.47}} & {\color[HTML]{FF0000} \textbf{+45.40}} & {\color[HTML]{FF0000} \textbf{+42.99}} & {\color[HTML]{FF0000} \textbf{+72.17}} & {\color[HTML]{FF0000} \textbf{+56.94}} & {\color[HTML]{FF0000} \textbf{+43.93}} & {\color[HTML]{FF0000} \textbf{+35.90}} & {\color[HTML]{FF0000} \textbf{+70.77}} & {\color[HTML]{FF0000} \textbf{+71.87}} & -2.41 & {\color[HTML]{70AD47} \textbf{-17.11}} & {\color[HTML]{FF0000} \textbf{+27.98}} & {\color[HTML]{FF0000} \textbf{+23.73}} & {\color[HTML]{FF0000} \textbf{+23.05}} & +2.56 & {\color[HTML]{70AD47} \textbf{-27.70}} & -7.68 & {\color[HTML]{FF0000} \textbf{+28.62}} \\\midrule
Informer (CD) & 0.326 & 0.349 & 0.689 & 0.959 & 1.270 & 3.137 & 0.517 & 0.632 & 0.310 & 0.370 & 0.790 & 0.894 & 0.715 & 0.736 & 0.322 & 0.301 & 5.377 & 5.288 & {\color[HTML]{FF0000} \textbf{16/18}} \\
Informer (CI) & 0.208 & 0.183 & 0.560 & 0.532 & 0.311 & 0.382 & 0.366 & 0.426 & 0.156 & 0.262 & 0.169 & 0.190 & 0.601 & 0.549 & 0.162 & 0.260 & 4.980 & 5.254 & {\color[HTML]{70AD47} \textbf{0/18}} \\
Improve (\%) & {\color[HTML]{FF0000} \textbf{+36.07}} & {\color[HTML]{FF0000} \textbf{+47.47}} & {\color[HTML]{FF0000} \textbf{+18.67}} & {\color[HTML]{FF0000} \textbf{+44.58}} & {\color[HTML]{FF0000} \textbf{+75.49}} & {\color[HTML]{FF0000} \textbf{+87.83}} & {\color[HTML]{FF0000} \textbf{+29.29}} & {\color[HTML]{FF0000} \textbf{+32.61}} & {\color[HTML]{FF0000} \textbf{+49.70}} & {\color[HTML]{FF0000} \textbf{+29.10}} & {\color[HTML]{FF0000} \textbf{+78.64}} & {\color[HTML]{FF0000} \textbf{+78.69}} & {\color[HTML]{FF0000} \textbf{+15.95}} & {\color[HTML]{FF0000} \textbf{+25.43}} & {\color[HTML]{FF0000} \textbf{+49.74}} & {\color[HTML]{FF0000} \textbf{+13.41}} & +7.38 & +0.65 & {\color[HTML]{FF0000} \textbf{+40.04}} \\\midrule
Transformer (CD) & 0.250 & 0.257 & 0.861 & 0.966 & 1.031 & 1.868 & 0.458 & 0.554 & 0.281 & 0.520 & 0.511 & 0.659 & 0.645 & 0.650 & 0.251 & 0.423 & 5.309 & 5.406 & {\color[HTML]{FF0000} \textbf{17/18}} \\
Transformer (CI) & 0.185 & \textbf{0.163} & 0.655 & 0.533 & 0.274 & 0.466 & 0.379 & 0.496 & 0.148 & 0.237 & 0.101 & 0.137 & 0.558 & \textbf{0.526} & 0.168 & 0.225 & 4.307 & 5.033 & {\color[HTML]{70AD47} \textbf{0/18}} \\
Improve (\%) & {\color[HTML]{FF0000} \textbf{+26.10}} & {\color[HTML]{FF0000} \textbf{+36.59}} & {\color[HTML]{FF0000} \textbf{+23.85}} & {\color[HTML]{FF0000} \textbf{+44.84}} & {\color[HTML]{FF0000} \textbf{+73.43}} & {\color[HTML]{FF0000} \textbf{+75.07}} & {\color[HTML]{FF0000} \textbf{+17.32}} & {\color[HTML]{FF0000} \textbf{+10.43}} & {\color[HTML]{FF0000} \textbf{+47.27}} & {\color[HTML]{FF0000} \textbf{+54.40}} & {\color[HTML]{FF0000} \textbf{+80.27}} & {\color[HTML]{FF0000} \textbf{+79.30}} & {\color[HTML]{FF0000} \textbf{+13.43}} & {\color[HTML]{FF0000} \textbf{+19.13}} & {\color[HTML]{FF0000} \textbf{+33.14}} & {\color[HTML]{FF0000} \textbf{+46.87}} & {\color[HTML]{FF0000} \textbf{+18.88}} & +6.89 & {\color[HTML]{FF0000} \textbf{+39.29}}   \\
			\bottomrule
		\end{tabular}
	}
\label{tb:mse_analysis}
\end{table*}

\begin{figure}[htp]
    \centering
    \includegraphics[width=0.47\linewidth]{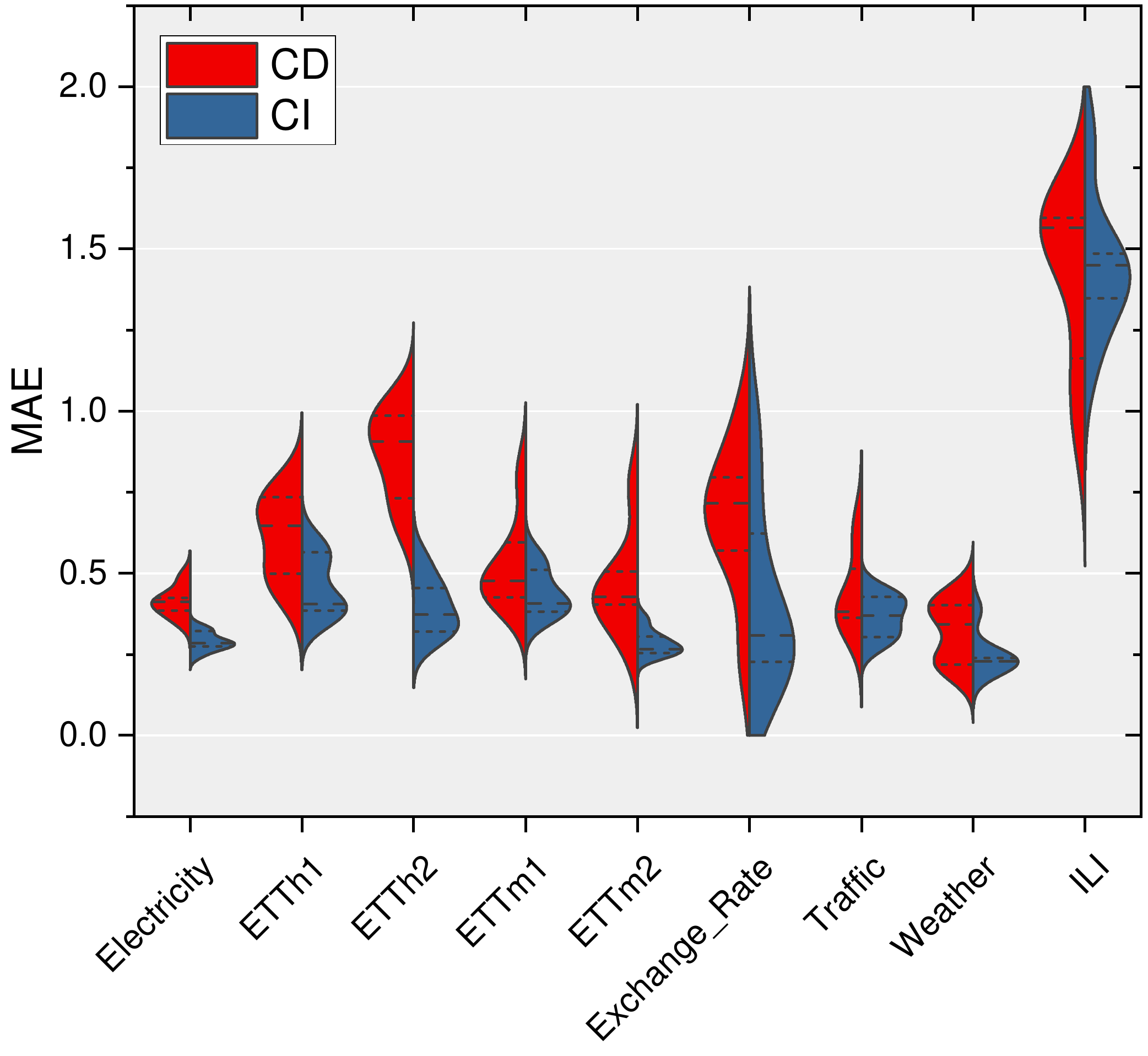}
    \includegraphics[width=0.49\linewidth]{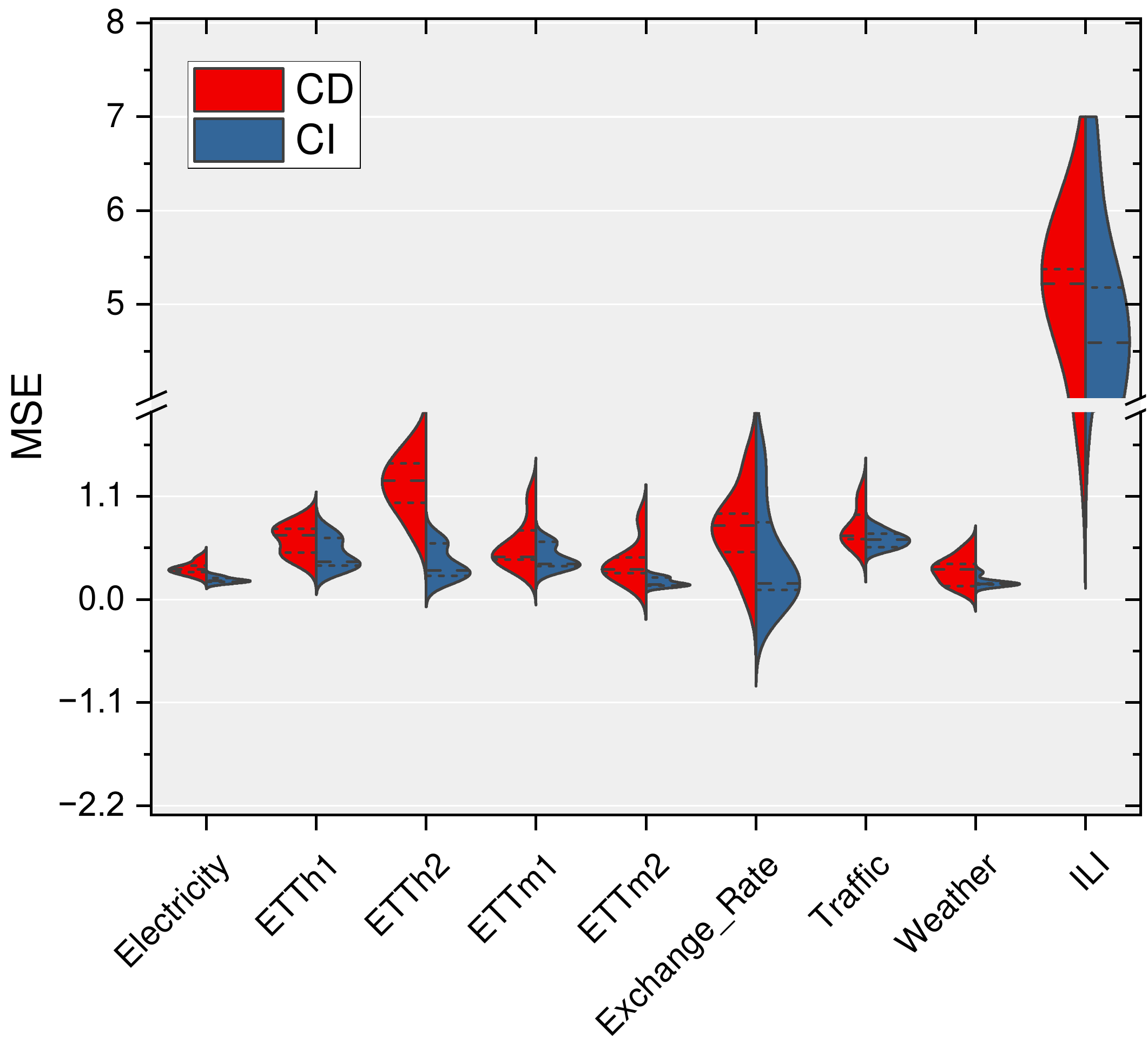}
    \vspace{-4mm}
    \caption{The performance distribution of 7 models utilizing the CI and CD strategy. Values come from \cref{tb:mae_analysis} and \cref{tb:mse_analysis}. The prediction length is 24 for ILI dataset and 48 for the others. In most cases, CI has a lower error mean and a smaller variance than CD strategy. It means that CI performs better than CD. Also, when using CI strategy, the model performance does not differ very much.}
    \label{fig:perform_dist}
\end{figure}

\section{Analysis}
This section aims to provide an in-depth analysis of why CI is superior for multivariate forecasting tasks in most cases, using the Linear~\cite{Zeng2022Transformers} model as an example. It is closely related to the AutoRegression (AR) in statistics~\cite{box2015time}. We begin by examining the presence of distribution shift in real-world datasets. Subsequently, we evaluate the Linear model with CI and CD strategies, demonstrating how the drifted statistics impact its performance. Our analysis highlights the fact that CI reduces the statistics gap between the training and test data. Finally, we decompose the risk to demonstrate that CI trades capacity for robustness, which translates to improved performance on many real-world non-stationary time series.

\subsection{Distribution Drift}
Real-world datasets are characterized by time series with changing values over time, often accompanied by changes in the underlying distribution, referred to as non-stationarity~\cite{anderson1976time,hyndman2018forecasting}. In this section, we investigate the AutoCorrelation Function (ACF), which is commonly used in time series analysis:
\begin{definition}[AutoCorrelation Function (ACF)~\cite{madsen2007time}] The autocorrelation function of a stochastic process, $\{X(t)\}$, is defined as: 
	$$
	\rho\left(t_1, t_2\right)=\frac{\gamma \left(t_1, t_2\right)}{\sqrt{\sigma^2\left(t_1\right) \sigma^2\left(t_2\right)}}
	$$ 
	where $\gamma\left(t_1, t_2\right)=\operatorname{Cov}\left[X(t_1), X(t_2)\right]$ is the covariance function, and $\sigma^2(t)=\gamma (t, t)$ is the variance at time $t$.
	
	If the process is stationary, then ACF is only a function of the time difference $\tau = t_2 - t_1$, \ie:
	\begin{equation}
		\rho(\tau)=\frac{\gamma(\tau)}{\gamma(0)},
	\end{equation}
	where $\gamma(\tau) = \operatorname{Cov}\left[X(t), X(t+\tau)\right]$.
\end{definition}

In this paper, we temporarily assume that the stochastic process of each channel is stationary. But we will show that our analysis results still holds in real-world data. To estimate the AutoCorrelation Function (ACF) of a given time series $\x \in \R^T$, we employ the method from~\cite{jenkins1957spectral}, which is a commonly used practice. Specifically, we use the following equation to estimate the ACF:
\begin{equation}
	\hat{\rho}(\tau)=\frac{\hat{\gamma}(\tau)}{\hat{\gamma}(0)} 
\end{equation}
where $\hat{\gamma}(\tau)=\frac{1}{T-k} \sum_{t=1}^{T-\tau}\left(\x_t-\bar{\x}\right)\left(\x_{t+\tau}-\bar{\x}\right)$ is the estimated covariance function, $\bar{x} = \frac{1}{T}\sum_{i=1}^T \x_t$ is the mean value of $\x$.

We display the ACF of the train series and test series on each dataset in \cref{fig:dis_drift}, following the train-test split used in previous works~\cite{Zhou2021informer,Wu2021Autoformer,FedFormer}. From \cref{fig:dis_drift}, it is evident that distribution drift is present in each dataset, owing to various reasons.  For instance, in (MCIL, ETTh2) and (146, Electricity), the anomaly in the training series leads to distribution drift. In (OT, ILI) and (wv (m/s), Weather), variation in the trend is the main cause. The rest of two figures can not be concluded by simple reasons. Nevertheless, distribution drift is a prevalent phenomenon in real-world time series datasets.

\begin{figure*}[htp]
	\centering
	\begin{subfigure}[b]{0.32\textwidth}
		\centering
		\includegraphics[width=\textwidth]{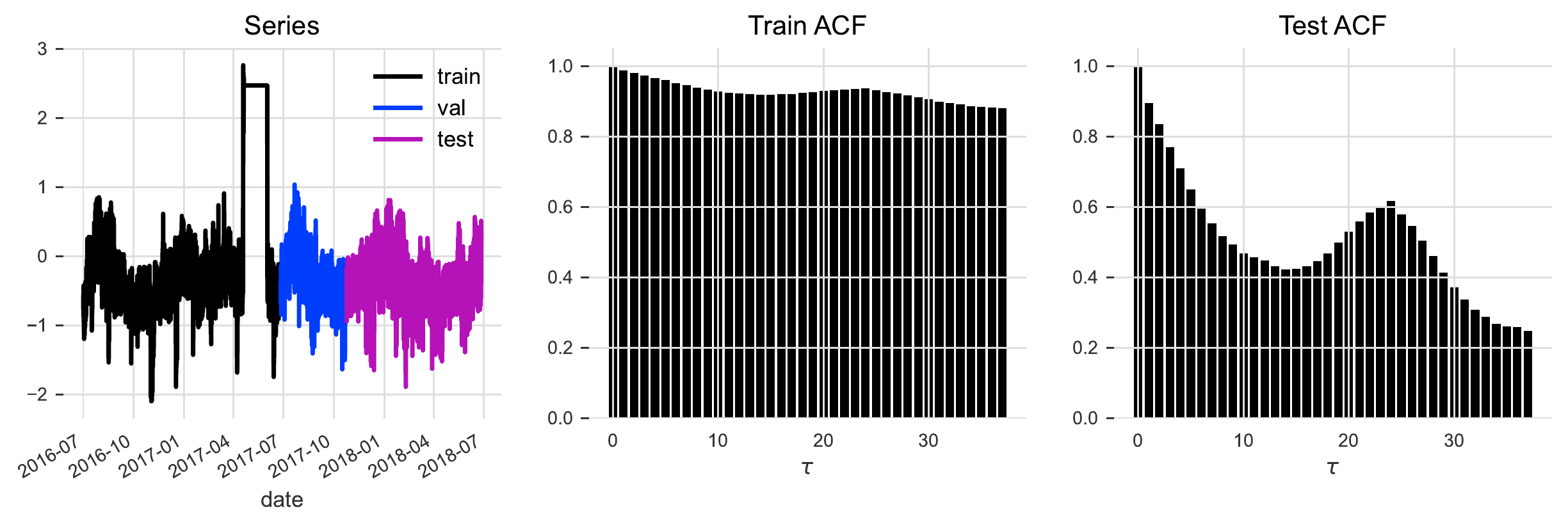}
		\caption{MUFL, ETTh2}
	\end{subfigure}
	\hspace{2mm}
	\begin{subfigure}[b]{0.32\textwidth}
		\centering
		\includegraphics[width=\textwidth]{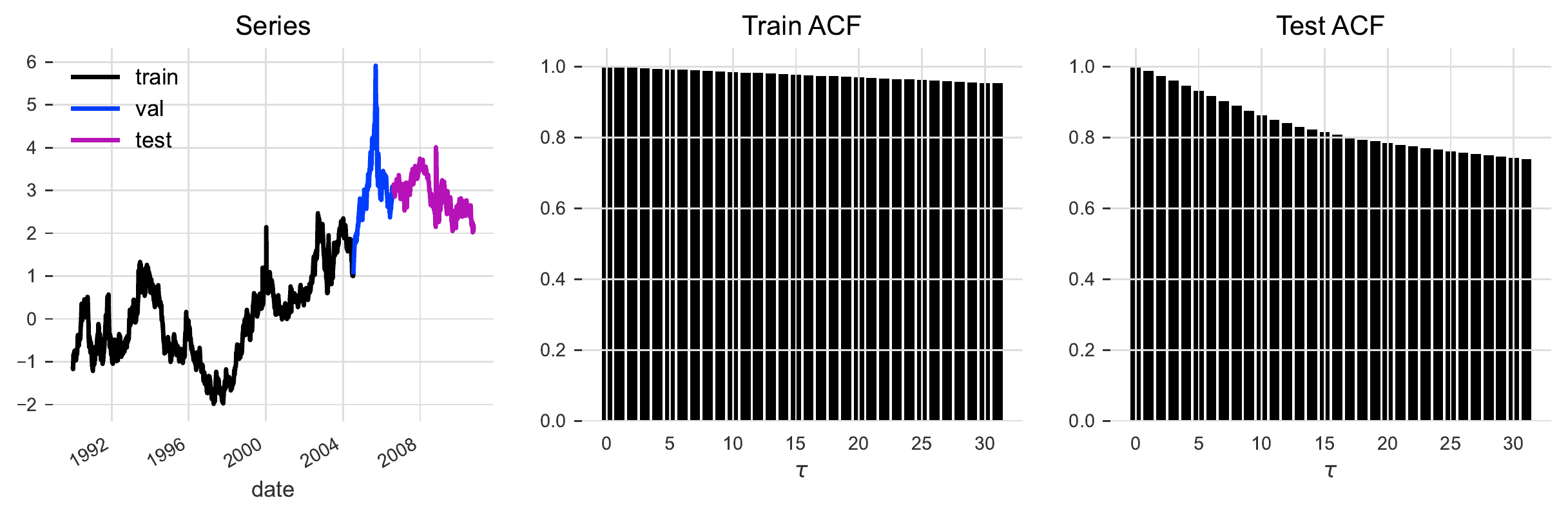}
		\caption{3, Exchange-Rate}
	\end{subfigure}
	\hspace{2mm}
	\begin{subfigure}[b]{0.32\textwidth}
		\centering
		\includegraphics[width=\textwidth]{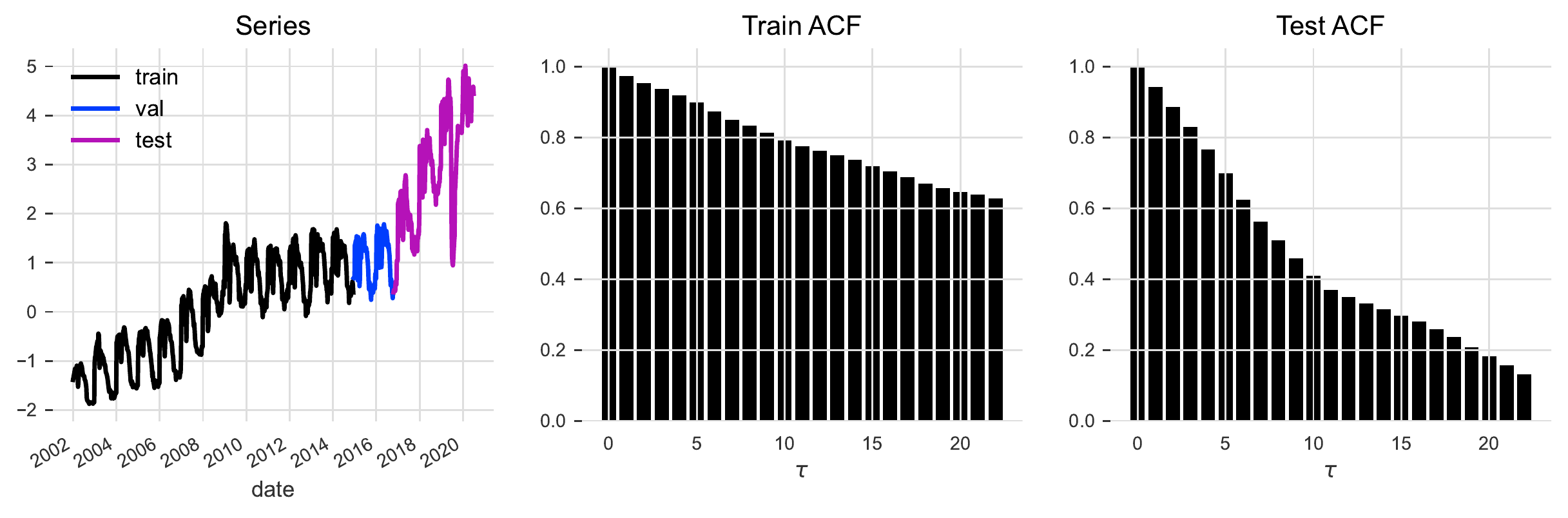}
		\caption{OT, ILI}
	\end{subfigure}
	
	\begin{subfigure}[b]{0.32\textwidth}
		\centering
		\includegraphics[width=\textwidth]{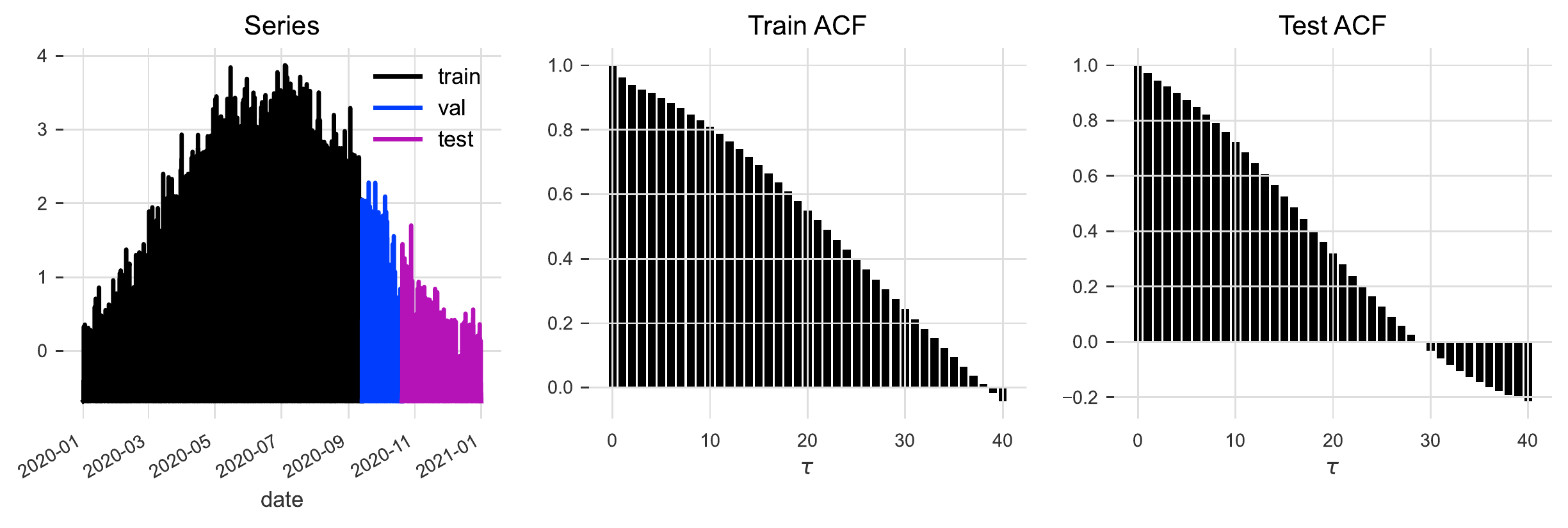}
		\caption{wv (m/s), Weather}
	\end{subfigure}
	\hspace{2mm}
	\begin{subfigure}[b]{0.32\textwidth}
		\centering
		\includegraphics[width=\textwidth]{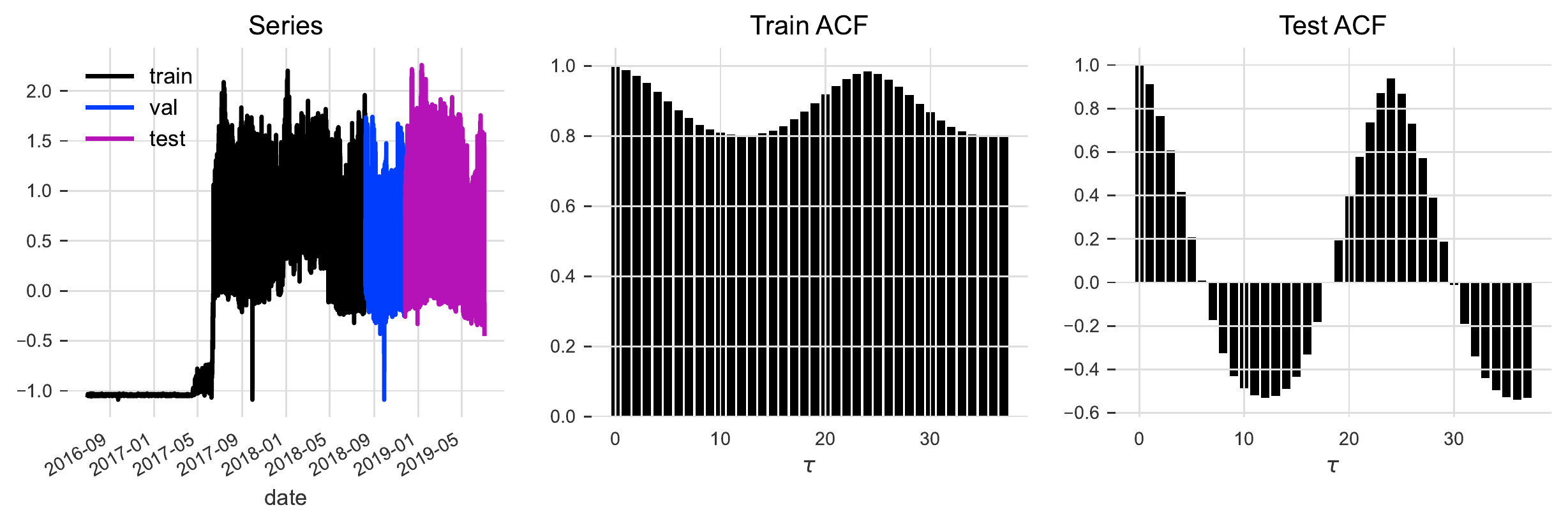}
		\caption{146, Electricity}
	\end{subfigure}
	\hspace{2mm}
	\begin{subfigure}[b]{0.32\textwidth}
		\centering
		\includegraphics[width=\textwidth]{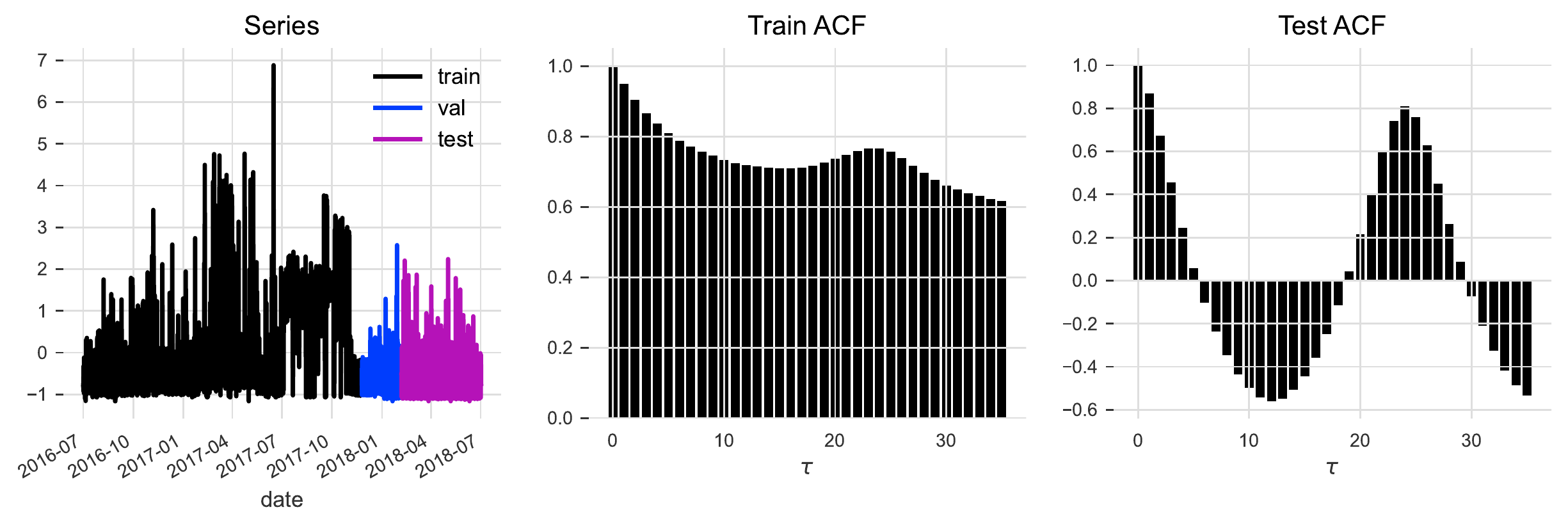}
		\caption{607, Traffic}
	\end{subfigure}
	\vspace{-5mm}
	\caption{The ACF of train series and test series. Captions of each subfigure represent the tuple (channel, dataset). For each subfigure, the leftmost plot displays the series split, with the training series in black, validation in blue, and test in purple. The middle and right display the ACF of train and test series respectively. The middle and right plots show the ACF of the training and test series, respectively. The results reveal a significant discrepancy in the statistics between the training and test series.}
	\label{fig:dis_drift}
\end{figure*}

The observed distribution drift has a profound impact on the performance of machine learning models, as the fundamental assumption of these models is that the training and test data are drawn from identical and independent distributions (i.i.d.)~\cite{mohri2018foundations}. This discrepancy undermines the accuracy of these models in predicting unseen data. For instance, we extend the autoregressive (AR) model to long-term forecasting tasks and demonstrate the adverse effects of distribution drift on the model's performance: 
\begin{proposition}[Yule-Walker equation~\cite{udny1927method,walker1931periodicity} extended]
	Assuming a long-term AR model on time series $\x$ with look-back window (order) $L$ and horizon $H$ is defined as:
	\begin{equation}
		(\x_{t+H-1},\ldots, \x_{t})^T = \W (\x_{t-1},\ldots, \x_{t-L})^\top
	\end{equation} 
	where $\W \in \R^{H\times L}$ is the coefficients of the model. Then the best estimation $\W^*$ can be computed by extended version of Yule-Walker equation~\cite{udny1927method,walker1931periodicity}:
	\begin{equation}
	\resizebox{0.95\linewidth}{!}{
			$\left[\begin{array}{cccc}
				\rho(1)& \rho(2) & \cdots & \rho(H) \\
				\rho(2) & \rho(3) & \cdots & \rho(H+1)\\
				\vdots &\vdots&\ddots&\vdots\\
				\rho(L) & \rho(L+1) & \cdots & \rho(H+L-1)
			\end{array}\right]=  \left[\begin{array}{cccc}
				\rho(0) & \rho(-1)  & \cdots & \rho(-L+1)\\
				\rho(1) & \rho(0)  & \cdots& \rho(-L+2) \\
				\vdots & \vdots& \ddots & \vdots  \\
				\rho(L-1) & \rho(L-2)   & \cdots& \rho(0)
			\end{array}\right]\W^*$
}
	\end{equation}
	
where $\rho(\tau) = \rho(-\tau)$ is the autocorrelation of time delay $\tau$.
\label{prop:yule_walker}
\end{proposition}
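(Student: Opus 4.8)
The plan is to interpret the ``best estimation'' $\W^*$ as the minimizer of the population forecasting risk under the squared loss used throughout the paper, and then to derive the stated identity as the corresponding normal equations, evaluating every expectation through the stationary autocorrelation $\rho$. As assumed in the statement, I take the process to be (weakly) stationary, and after centering I may assume it has zero mean, so that $\E[\x_s \x_{s'}]=\gamma(s-s')=\gamma(0)\,\rho(s-s')$ with $\rho(\tau)=\rho(-\tau)$.

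First I would stack the quantities appearing in the model: write $\y=(\x_{t+H-1},\ldots,\x_t)^\top\in\R^{H}$ for the target and $\vz=(\x_{t-1},\ldots,\x_{t-L})^\top\in\R^{L}$ for the look-back window, so the model reads $\y=\W\vz$ with $\W\in\R^{H\times L}$. The optimal coefficients minimize the expected squared error
\[
	\W^*=\argmin_{\W}\;\E\big\|\y-\W\vz\big\|_2^2 .
\]
Since this objective is a convex quadratic in the entries of $\W$, any stationary point is a global minimizer. Differentiating (using $\E\|\y-\W\vz\|_2^2=\E[\Tr((\y-\W\vz)(\y-\W\vz)^\top)]$) and setting the gradient to zero yields the normal equations
\[
	\E[\y\vz^\top]=\W^*\,\E[\vz\vz^\top],
\]
equivalently the orthogonality principle $\E[(\y-\W^*\vz)\vz^\top]=0$, stating that the optimal residual is uncorrelated with every regressor. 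This single matrix identity is already the extended Yule--Walker equation in compact form; the remaining task is to read off its entries.

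Next I would evaluate the two correlation matrices entrywise. For the Gram matrix of the regressors, $\big(\E[\vz\vz^\top]\big)_{jk}=\E[\x_{t-j}\x_{t-k}]=\gamma(0)\,\rho(j-k)$, which is the symmetric Toeplitz autocorrelation matrix on the right-hand side of the statement. For the cross-correlation, $\big(\E[\y\vz^\top]\big)_{ij}=\E[\x_{t+H-i}\x_{t-j}]=\gamma(0)\,\rho(H-i+j)$, whose entries are exactly the shifted autocorrelations collected on the left. Dividing both sides by the common factor $\gamma(0)$ removes it, and transposing (using the symmetry of $\E[\vz\vz^\top]$) puts the equation into the $L\times H$ orientation displayed in the proposition, with $\W^*$ appearing on the right. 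The symmetry $\rho(\tau)=\rho(-\tau)$ is what lets each entry $\rho(j-k)$ with $j<k$ be rewritten as $\rho(k-j)$, filling in both triangles of the Toeplitz matrix.

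The main obstacle is bookkeeping rather than mathematics: one must carefully track the two opposite indexing directions (the target stacked from $\x_{t+H-1}$ down to $\x_t$, the regressors from $\x_{t-1}$ down to $\x_{t-L}$) so that the shifted autocorrelations line up with the displayed array $\rho(1),\ldots,\rho(H+L-1)$, and one must fix the transpose convention so that $\W^*$ sits on the correct side with conforming dimensions. Two further points should be flagged for completeness: the minimizer is \emph{unique} precisely when the autocorrelation (Toeplitz) matrix is nonsingular, i.e. the regressor covariance is positive definite, which holds for any non-degenerate stationary process; and the multi-step horizon causes no difficulty, since the squared loss decouples across the $H$ output coordinates, so each row of $\W^*$ is an independent one-step Yule--Walker system and stacking the $H$ of them recovers the full matrix equation.
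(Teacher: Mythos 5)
Your proposal is correct and follows essentially the same route the paper takes: the paper leaves Proposition~\ref{prop:yule_walker} itself unproved, but its proof of the subsequent Linear (CD)/(CI) Yule--Walker proposition is exactly your argument --- differentiate the squared loss, obtain the normal equations $\A^\top\A\W=\A^\top\B$ (your orthogonality principle, in empirical rather than population form), recognize the Gram and cross-moment matrices as covariance estimates, and divide by the common variance to pass to autocorrelations. Your explicit handling of the transpose orientation and the opposite stacking directions of the target and regressors is careful bookkeeping the paper itself glosses over, and your remarks on uniqueness via nonsingularity of the Toeplitz matrix and on the row-wise decoupling of the $H$-step horizon are correct additions rather than gaps.
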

Proposition \ref{prop:yule_walker} establishes that the performance of an autoregressive (AR) model is closely linked to the autocorrelation function (ACF). Specifically, when there is a significant disparity between the ACF of the training and testing data, the resulting difference in the estimated values of $\W^*$ on the two datasets can be substantial. This can result in high error rates when applying the trained model to the test data.

Regrettably, real-world multivariate datasets often exhibit large disparities in ACF across channels. These distribution drifts in certain channels can significantly impact the performance of the trained model. However, we demonstrate in the next section that while the AR model using the CD strategy is sensitive to such drifts, the model using the CI strategy is more robust to them.

\subsection{CI Alleviates Distribution Drift}
In the previous section, we highlighted the presence of distribution drift (as measured by ACF) in real-world datasets. We also presented theoretical insights into how such drift can impact the performance of linear autoregressive (AR) models in univariate scenarios. n this section, we extend our analysis to multivariate tasks and demonstrate that the \emph{CI strategy can alleviate distribution drift in each channel, while the CD strategy is vulnerable to such drift.} 

\noindent{\textbf{Coefficients of CI and CD.}}
To facilitate our analysis, we reshape the set of series data. Specifically, given a set of data $\{(\X^{(i)}, \Y^{(i)})\}_{i=1}^N$, we rearrange the series of each channel to its unique matrix $\A^{(c)}\in \R^{N\times L}$ and $\B^{(c)}\in \R^{N\times H}$, \ie, $\A^{(c)}_{i,l} = \X^{(i)}_{l,c}$, $\B^{(c)}_{i,h} = \Y^{(i)}_{h,c}$. Basic on this representation, we can express the objectives of Linear (CD) and Linear (CI) as follows:

\begin{definition}[Objective of Linear (CD) and Linear (CI)] Assuming the series of each channel is centered, the ordinary least square objective of \textbf{Linear (CD)} can be defined as:
	\begin{equation}
		\mathcal{L}_{cd} =\lVert \A_{cd} \W_{cd} - \B_{cd} \rVert^2_F
		\label{eqn:obj_cd}
	\end{equation}

where $\A_{cd} = \left[\A^{(1)}  \A^{(2)} \ldots \A^{(C)}\right] \in \R^{N\times LC}$ is the vertical concatenation of $\A^{(1)}, \A^{(2)}, \ldots, \A^{(C)}$, $\B_{cd}$ the same. $\W_{cd} \in \R^{LC \times HC}$ is the coefficient.

Simiarly, the objective of \textbf{Linear (CI)} can be defined as:
\begin{equation}
			\mathcal{L}_{ci} =\lVert \A_{ci} \W_{ci} - \B_{ci} \rVert^2_F
			\label{eqn:obj_ci}
\end{equation}
where $\A_{ci} = \begin{bmatrix}
	 \A^{(1)} \\ \A^{(2)} \\ \vdots \\ \A^{(C)}
\end{bmatrix} \in \R^{NC\times L}$ is the horizontal concatenation of $\A^{(1)}, \A^{(2)}, \ldots, \A^{(C)}$, $\B_{ci}$ the same. $\W_{cd} \in \R^{L \times H}$ is the coefficient.
\label{def:linear_cd_ci}
\end{definition}
From \cref{def:linear_cd_ci}, we can see that the primary distinction between CD and CI strategy on Linear model is the way data are stacked. By solving the two objectives, the CD and CI coefficient of can be estimated according to the following proposition:

\begin{proposition}[Yule-Walker equation of Linear (CD) and Linear (CI)] Define the (auto-/cross-)correlation matrix: $$\boldsymbol{R}_{c_1,c_2} = \begin{bmatrix}
						\rho_{c_1,c_2}(0) & \rho_{c_1,c_2}(-1)  & \cdots & \rho_{c_1,c_2}(-L+1)\\
		\rho_{c_1,c_2}(1) & \rho_{c_1,c_2}(0)  & \cdots& \rho_{c_1,c_2}(-L+2) \\
		\vdots & \vdots& \ddots & \vdots  \\
		\rho_{c_1,c_2}(L-1) & \rho_{c_1,c_2}(L-2)   & \cdots& \rho_{c_1,c_2}(0)
	\end{bmatrix}\in \R^{L\times L}. $$
$$
\boldsymbol{R}'_{c_1,c_2}= \begin{bmatrix}
\rho(1)& \rho(2) & \cdots & \rho(H) \\
\rho(2) & \rho(3) & \cdots & \rho(H+1)\\
\vdots &\vdots&\ddots&\vdots\\
\rho(L) & \rho(L+1) & \cdots & \rho(H+L-1)
\end{bmatrix}\in \R^{L\times H}
$$
where $\rho_{c_1,c_2}(\tau)$ is the auto-/cross-correlation at time delay $\tau$ when $c_1 = c_2$ / $c_1\neq c_2$. 

Assuming the series of each channel has the same variance, then the Yule-Walker equation of Linear (CD) is:
\begin{equation}
		\begin{bmatrix}
		R'_{1,1} & R'_{1,2} & \ldots & R'_{1,C}\\
		R'_{2,1} & R'_{2,2} & \ldots & R'_{2,C} \\
		\vdots &\vdots &\ddots&\vdots \\
		R'_{C,1} & R'_{C,2} & \ldots & R'_{C,C}
	\end{bmatrix}
	= 
	\begin{bmatrix}
		R_{1,1} & R_{1,2} & \ldots & R_{1,C}\\
		R_{2,1} & R_{2,2} & \ldots & R_{2,C} \\
		\vdots &\vdots &\ddots&\vdots \\
		R_{C,1} & R_{C,2} & \ldots & R_{C,C}
	\end{bmatrix}\W^*_{cd}
\label{eq:yw_cd}
\end{equation}
and the Yule-Walker equation of Linear (CI) is:
\begin{equation}
	\sum_{c=1}^C R'_{c,c}
	= (\sum_{c=1}^C R_{c,c}) \W^*_{ci}
	\label{eq:yw_ci}
\end{equation}
\end{proposition}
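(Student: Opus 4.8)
The plan is to treat both the CD and CI objectives as ordinary least-squares problems and simply read off their Yule--Walker equations from the normal equations $\A^\top\!\A\,\W^* = \A^\top\B$. Everything then reduces to two steps: exploiting the block structure induced by the channel-wise concatenations in \cref{def:linear_cd_ci}, and identifying each elementary Gram block with one of the correlation matrices $R_{c_1,c_2}$ or $R'_{c_1,c_2}$ from the statement. Since both losses are convex quadratics in their coefficient matrices, the minimizers are characterized exactly by these normal equations, so no optimization subtleties arise.

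First I would isolate the building blocks. For any pair of channels $c_1,c_2$, the products $\A^{(c_1)\top}\A^{(c_2)} \in \R^{L\times L}$ and $\A^{(c_1)\top}\B^{(c_2)} \in \R^{L\times H}$ have entries $\sum_{i} \X^{(i)}_{l_1,c_1}\X^{(i)}_{l_2,c_2}$ and $\sum_i \X^{(i)}_{l,c_1}\Y^{(i)}_{h,c_2}$. Because each channel is centered, these raw second moments are exactly the empirical (cross-)covariances; because each channel's process is assumed stationary, each sum depends only on the time lag between the two positions, so $\A^{(c_1)\top}\A^{(c_2)}$ is Toeplitz with lag $l_1-l_2$ in entry $(l_1,l_2)$, and $\A^{(c_1)\top}\B^{(c_2)}$ has lag $l+h-1$ in entry $(l,h)$. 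Matching these lags against the definitions of $R_{c_1,c_2}$ and $R'_{c_1,c_2}$ yields $\A^{(c_1)\top}\A^{(c_2)} = N\sigma^2 R_{c_1,c_2}$ and $\A^{(c_1)\top}\B^{(c_2)} = N\sigma^2 R'_{c_1,c_2}$, where the common scalar $N\sigma^2$ comes precisely from the equal-variance assumption: it is this assumption that lets me replace covariances $\gamma$ by correlations $\rho$ with a single shared factor across all $(c_1,c_2)$.

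With these identities the two cases are pure bookkeeping. For CD, the concatenation $\A_{cd} = [\A^{(1)}\cdots\A^{(C)}]$ makes $\A_{cd}^\top\A_{cd}$ the $C\times C$ block matrix with blocks $\A^{(c_1)\top}\A^{(c_2)}$ and $\A_{cd}^\top\B_{cd}$ the block matrix with blocks $\A^{(c_1)\top}\B^{(c_2)}$; substituting the identities and cancelling the common $N\sigma^2$ on both sides gives exactly \cref{eq:yw_cd}. For CI, the vertical stacking $\A_{ci} = [\A^{(1)\top}\cdots\A^{(C)\top}]^\top$ collapses all cross terms, since $\A_{ci}^\top\A_{ci} = \sum_c \A^{(c)\top}\A^{(c)}$ and $\A_{ci}^\top\B_{ci} = \sum_c \A^{(c)\top}\B^{(c)}$; only the diagonal blocks survive, and after the same cancellation I recover \cref{eq:yw_ci}. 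This also makes transparent the paper's subsequent narrative: CD retains every off-diagonal cross-correlation block, whereas CI sees only the averaged diagonal.

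The main obstacle is the identification step in the second paragraph. Turning finite empirical Gram sums over sliding windows into the population Toeplitz correlation matrices is clean only under the stationarity and equal-variance assumptions, and it depends on the normalization convention of the ACF estimator; the care lies in matching each matrix index to the correct lag and in checking that the $N\sigma^2$ prefactor is genuinely identical for every block (so that it cancels uniformly on both sides of the normal equations rather than leaving channel-dependent residual weights). Once that verification is done, the block algebra and the cancellation are entirely routine.
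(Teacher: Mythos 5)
Your proposal is correct and follows essentially the same route as the paper's proof: derive the ordinary least-squares normal equations, expand the Gram matrix into its channel-wise blocks (a full $C\times C$ block matrix for CD, a sum of diagonal blocks for CI), identify each block with an empirical (cross-)covariance matrix, and divide through by the common variance to pass to correlations. Your treatment of the lag-matching and the uniform $N\sigma^2$ prefactor is more explicit than the paper's, but the underlying argument is the same.
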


\begin{proof}
	Taking the derivative of \cref{eqn:obj_cd}, we get the ordinary least square equation: 
	\begin{equation}
			\A^\top_{cd}\A_{cd}\W_{cd} = \A^\top_{cd}\B_{cd}
			\label{eq:ols}
	\end{equation}

	\begin{equation}
		\begin{aligned}
					&\A^\top_{cd}\A_{cd} =\left[\A^{(1)}  \A^{(2)} \ldots \A^{(C)}\right]^\top \left[\A^{(1)}  \A^{(2)} \ldots \A^{(C)}\right]\\ &= 	\begin{bmatrix}
				(\A^{(1)})^\top (\A^{(1)}) & (\A^{(1)})^\top (\A^{(2)}) & \ldots & (\A^{(1)})^\top (\A^{(C)})\\
				(\A^{(2)})^\top (\A^{(2)}) & (\A^{(2)})^\top (\A^{(2)}) & \ldots & (\A^{(2)})^\top (\A^{(C)})\\
				\vdots &\vdots &\ddots&\vdots \\
				(\A^{(C)})^\top (\A^{(C)}) & (\A^{(C)})^\top (\A^{(C)}) & \ldots & (\A^{(1)})^\top (\A^{(C)})
			\end{bmatrix}
		\end{aligned}
	\end{equation}
	is in a form of outer product.
	Each $(\A^{(c_1)})^\top (\A^{(c_2)})$ is a estimation of co-variance matrix~\cite{madsen2007time}. Since the variances of each channel series are assumed to be the same. So by dividing on both side of \cref{eq:ols} by the variance, we can get \cref{eq:yw_cd}. 
	
	Simiarly, $\A^\top_{cd}\A_{cd} = \left[\A^{(1)}  \A^{(2)} \ldots \A^{(C)}\right] \left[\A^{(1)}  \A^{(2)} \ldots \A^{(C)}\right]^\top$ is in the form of inner product. By the same process, we can get \cref{eq:yw_ci}.
\end{proof}
By analyzing the difference between~\cref{eq:yw_cd} and~\cref{eq:yw_ci}, we can draw an important conclusion-- \textbf{coefficients of CD is determined by the (auto-/cross-)correlation function of each channel, while the coefficients of the CI strategy are determined solely by the summation (or mean) of the ACF of all channels.} 
\begin{tcolorbox}[title={Takeaways}]
The optimal coefficients of the Linear model using the CD strategy are determined by the ACF of all channels, while the optimal coefficients of the model using the CI strategy are only determined by the \textbf{sum} of the ACF across all channels.
\end{tcolorbox}

\noindent{\textbf{CI strategy leads to less distribution drift.}} It is noteworthy that the summation operation used in the CI strategy mitigates the distribution gap between the training and test series. To demonstrate this, we examine the differences in the values of the ACF between the training and test portions. Specifically, we denote the ACF of the training portion in channel $c$ as $\rho_c^{(tr)}$ and the corresponding test ACF as $\rho_c^{(te)}$. Then, we calculate the ACF difference in channel $c$ as follows:
$$
\operatorname{Diff}_{c} = \sum_{t=0}^{T}(\rho_c^{(tr)}(t) - \rho_c^{(te)}(t))^2.
$$
When employing the CD strategy, the linear model is susceptible to the distribution drift of each channel. However, the CI strategy ensures that the linear model is solely determined by the sum of ACF over all channels. Therefore, we only need to evaluate the changes in the sum of ACF when using the CI strategy. Hence, we calculate the difference in the ACF summation between the training and test sets using the following equation, referred to as the \textbf{sum diff}:
$$
\operatorname{Diff}_{\text{sum}} = \sum_{t=0}^{T}(\frac{1}{C}\sum_{c=1}^{C}  \rho_c^{(tr)}(t) - \frac{1}{C}\sum_{c=1}^{C}\rho_c^{(te)}(t))^2.
$$
Considering the scale, we compute the mean instead of the sum. But we still name it sum diff. Sum diff can be regarded as the ACF difference when using CI strategy. 

We present the ACF difference ($\operatorname{Diff}{c}$) using bar plots, sorted in descending order, and indicate the \textbf{sum diff} ($\operatorname{Diff}{\text{sum}}$) with a horizontal line. Our results are summarized in \cref{eq:acf_diff}.Several observations can be made from the figure: (1) \textbf{Most real-world datasets exhibit channels with significant ACF differences between the training and test data, indicating severe distribution drift in the time series of these channels.}  In ETT datasets (a)-(d), for instance, the largest ACF differences range between 5 and 8, which is substantial given that ACF values typically fall within $[0,1]$. ETTh1 and ETTm1 show relatively uniform ACF differences, while ETTh2 and ETTm2 feature two channels with particularly large ACF differences compared to the rest. Other datasets exhibit similar patterns, with Exchange (e) displaying two channels with ACF differences that greatly exceed those of other channels, and ILI featuring a largest difference that is more than twice that of the rest. In datasets with many channels, such as weather (g), electricity (h), and traffic (i), the largest difference can be up to 15, 32, and 24, respectively, which is much greater than that of most other channels. Furthermore, we observe a rapid decay of the ACF difference as the channel index increases. (2) \textbf{The sum diff is typically smaller than the ACF difference of most channels, suggesting that the distribution drift with CI strategy is less severe than with CD strategy.}  Across the 9 benchmarks, 7 datasets have a sum diff that is smaller than that of more than 50\% of the channels, indicating that the distribution drift, as measured by the ACF difference, is smaller than that of most channels when using CI. Even in the two exceptions, ETTm2 (d) and Exchange-Rate (e), the sum diff is still much smaller than the head channels. On ETTm2 (d), for example, the sum diff is 0.4678, significantly smaller than the head values, which are nearly 5. The sum diff is also not much larger than channels 4-7. Similar observations hold for Exchange-Rate (e), where the sum diff is only lower than two channels, but its value of 0.1044 is much smaller than 0.75. On the remaining 7 datasets, not only is the sum diff smaller than that of most channels, but it is also much smaller in value. For instance, on ETTh2 (b), the sum diff of 0.3713 is 20 times smaller than that of channel 1, while on Electricity (h) and Traffic (i), it is 500 and 8000 times smaller, respectively. 

\begin{figure*}[h]
	\begin{subfigure}[b]{0.19\linewidth}
		\includegraphics[width=\linewidth]{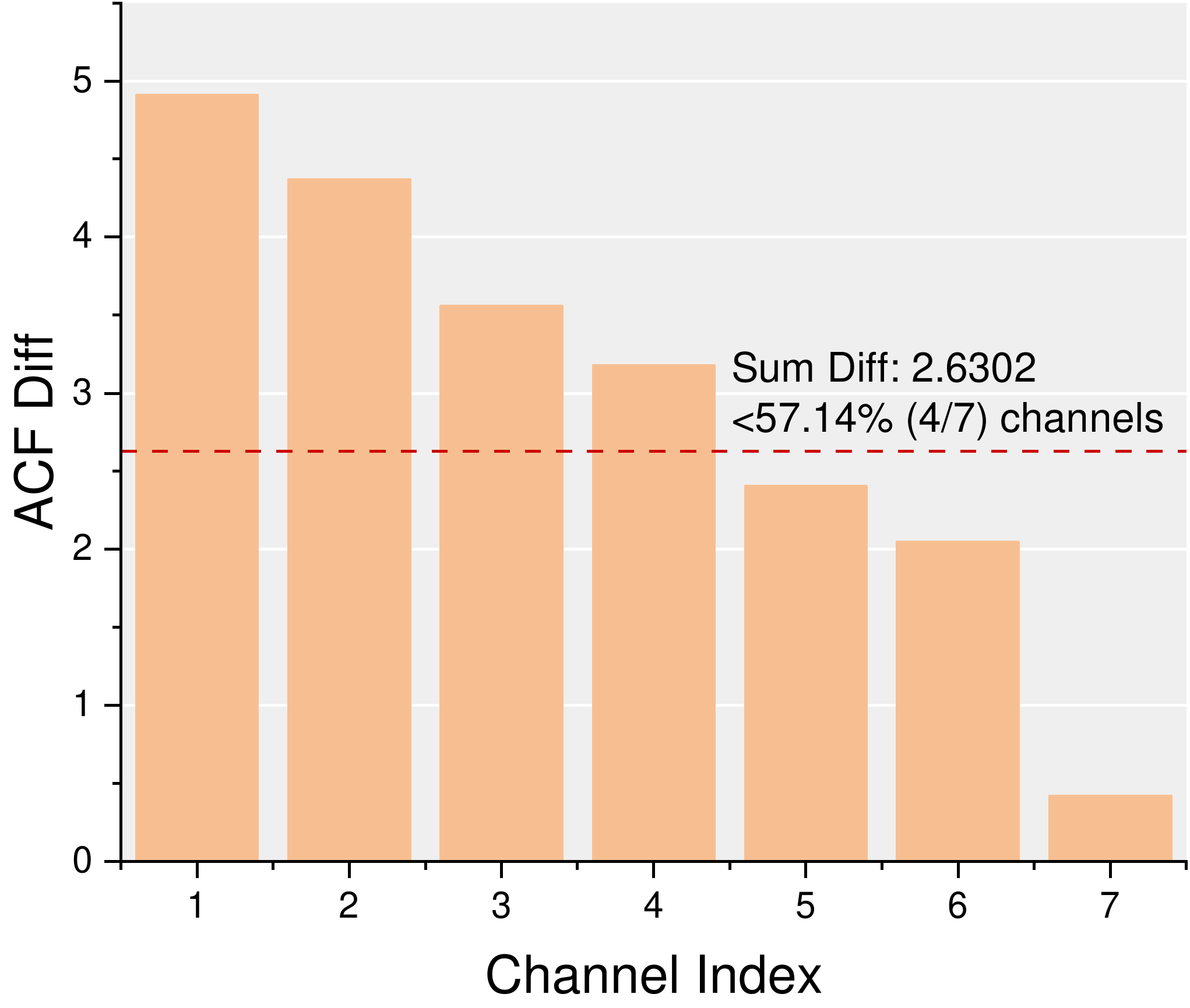}
		\caption{ETTh1}
	\end{subfigure}
	\begin{subfigure}[b]{0.19\linewidth}
	\includegraphics[width=\linewidth]{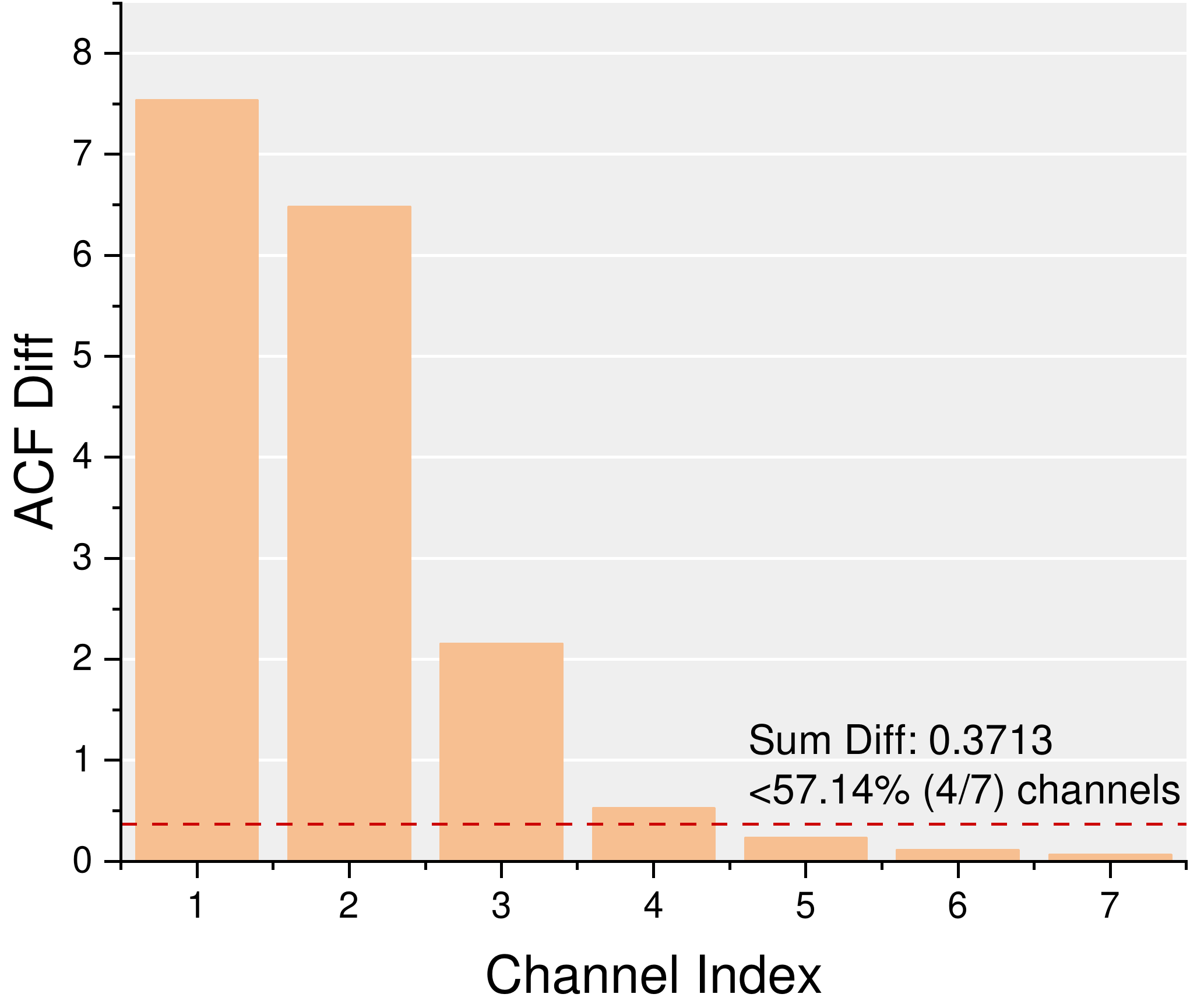}
	\caption{ETTh2}
\end{subfigure}
	\begin{subfigure}[b]{0.19\linewidth}
	\includegraphics[width=\linewidth]{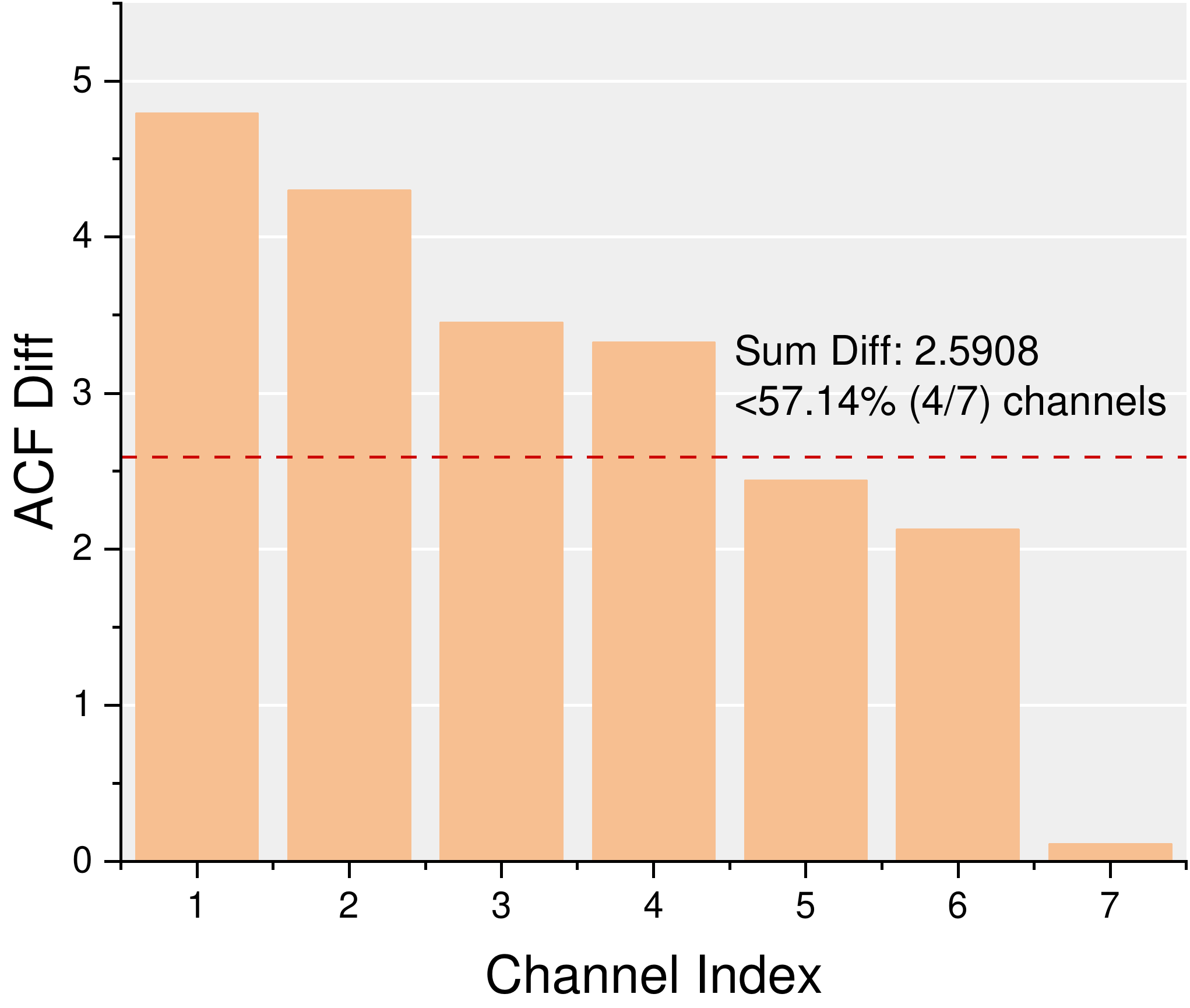}
	\caption{ETTm1}
\end{subfigure}
	\begin{subfigure}[b]{0.19\linewidth}
	\includegraphics[width=\linewidth]{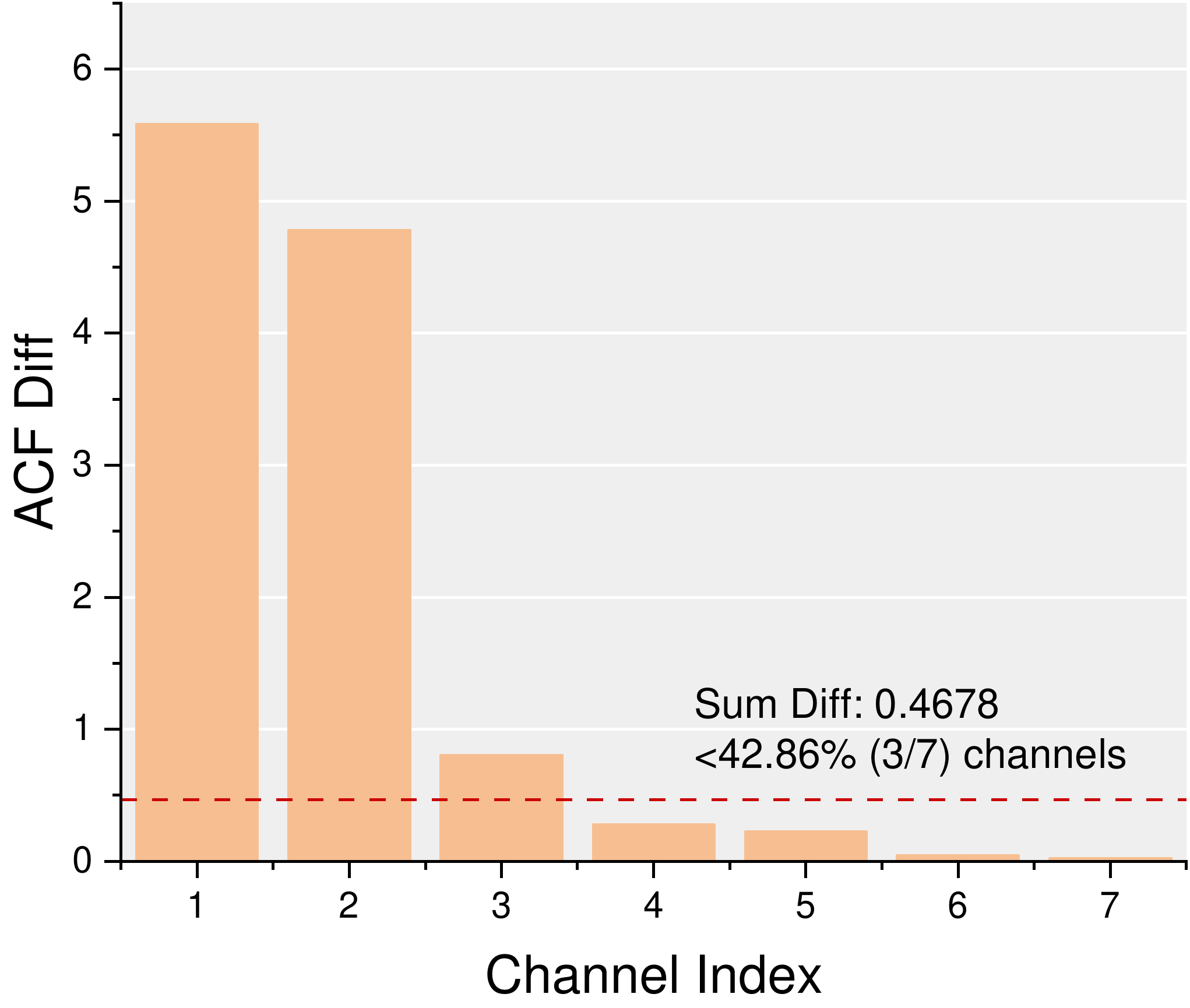}
	\caption{ETTm2}
\end{subfigure}
	\begin{subfigure}[b]{0.19\linewidth}
	\includegraphics[width=\linewidth]{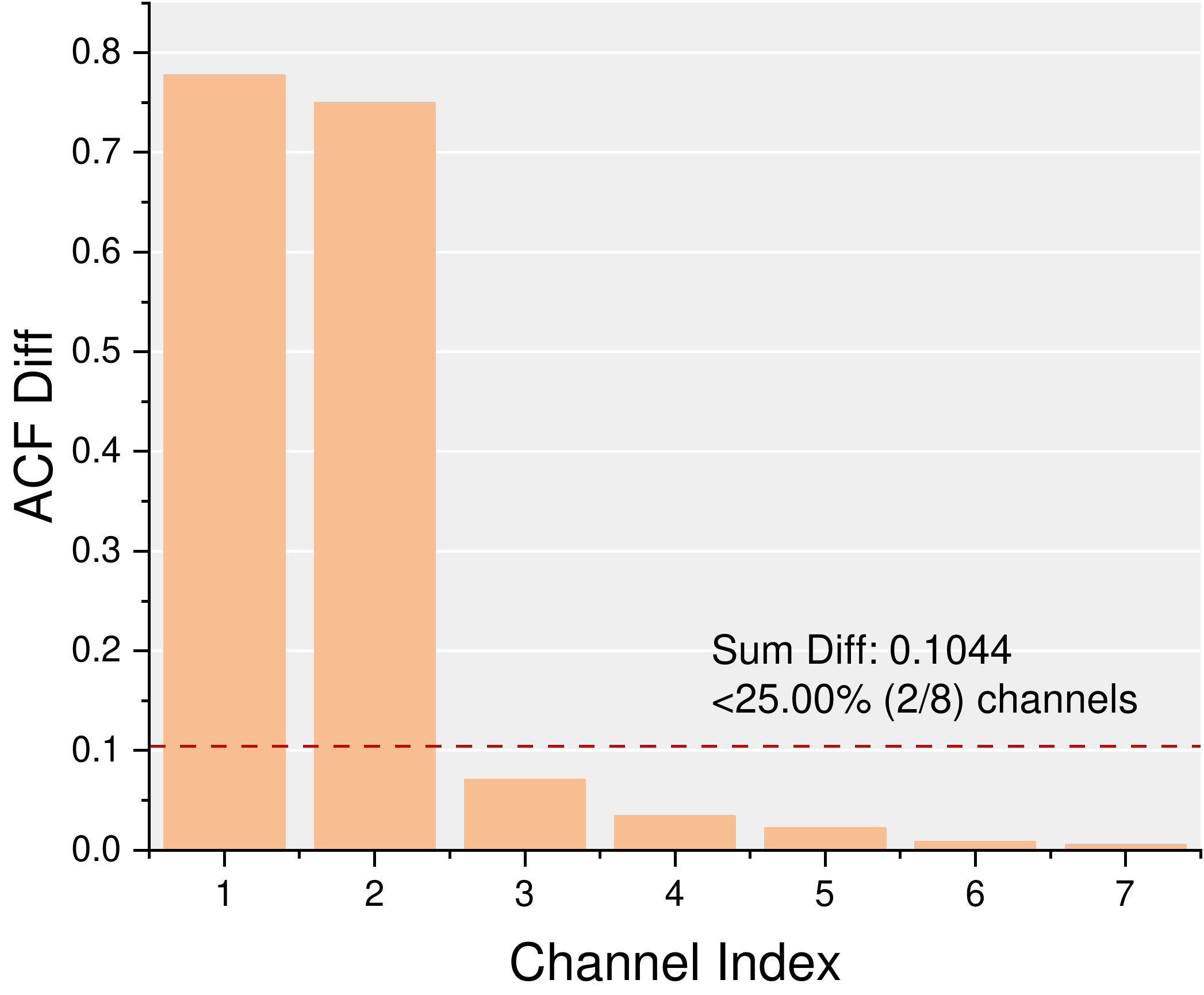}
	\caption{Exchange-Rate}
\end{subfigure}
	\begin{subfigure}[b]{0.23\linewidth}
	\includegraphics[width=\linewidth]{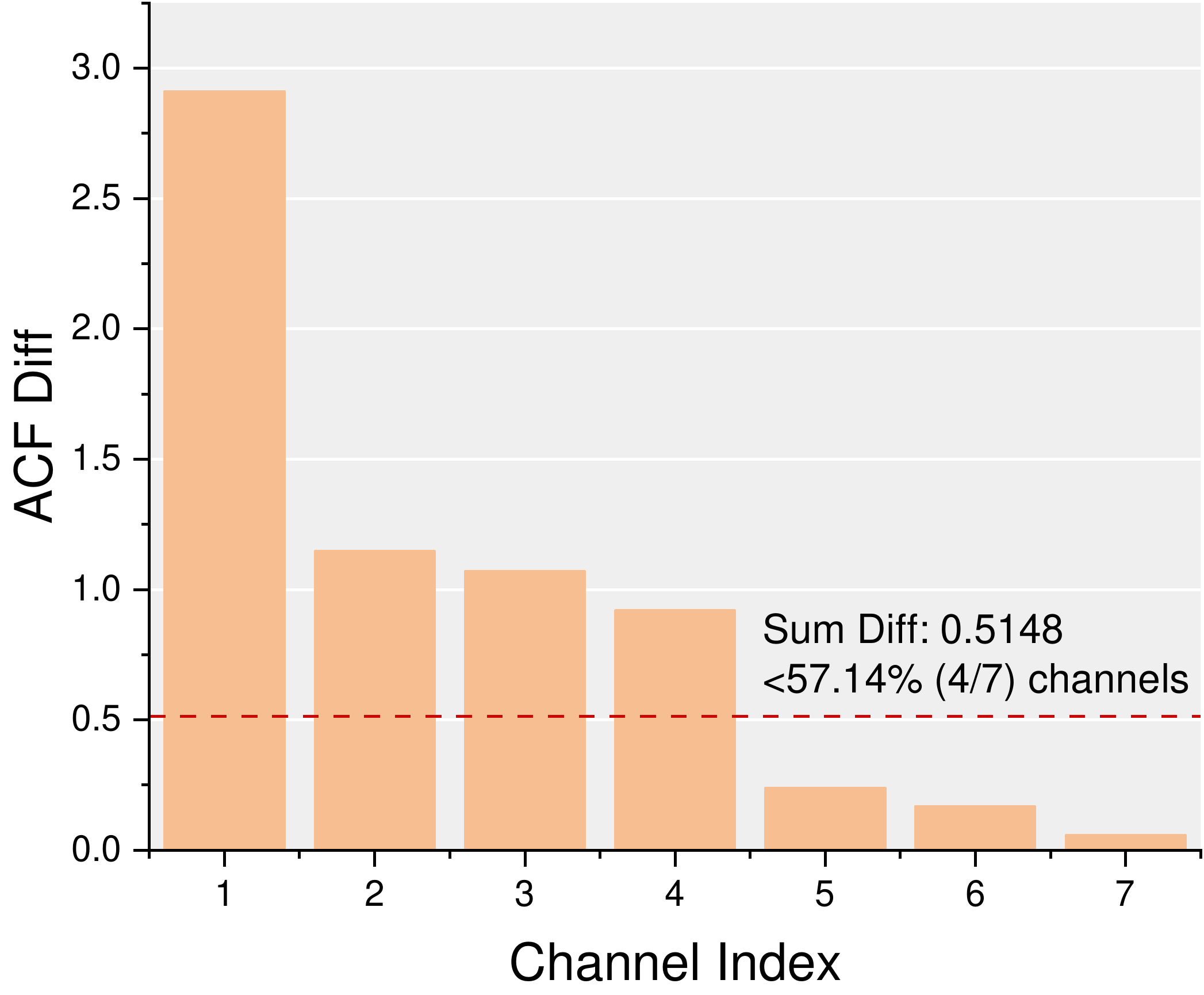}
	\caption{ILI}
\end{subfigure}
	\begin{subfigure}[b]{0.23\linewidth}
	\includegraphics[width=\linewidth]{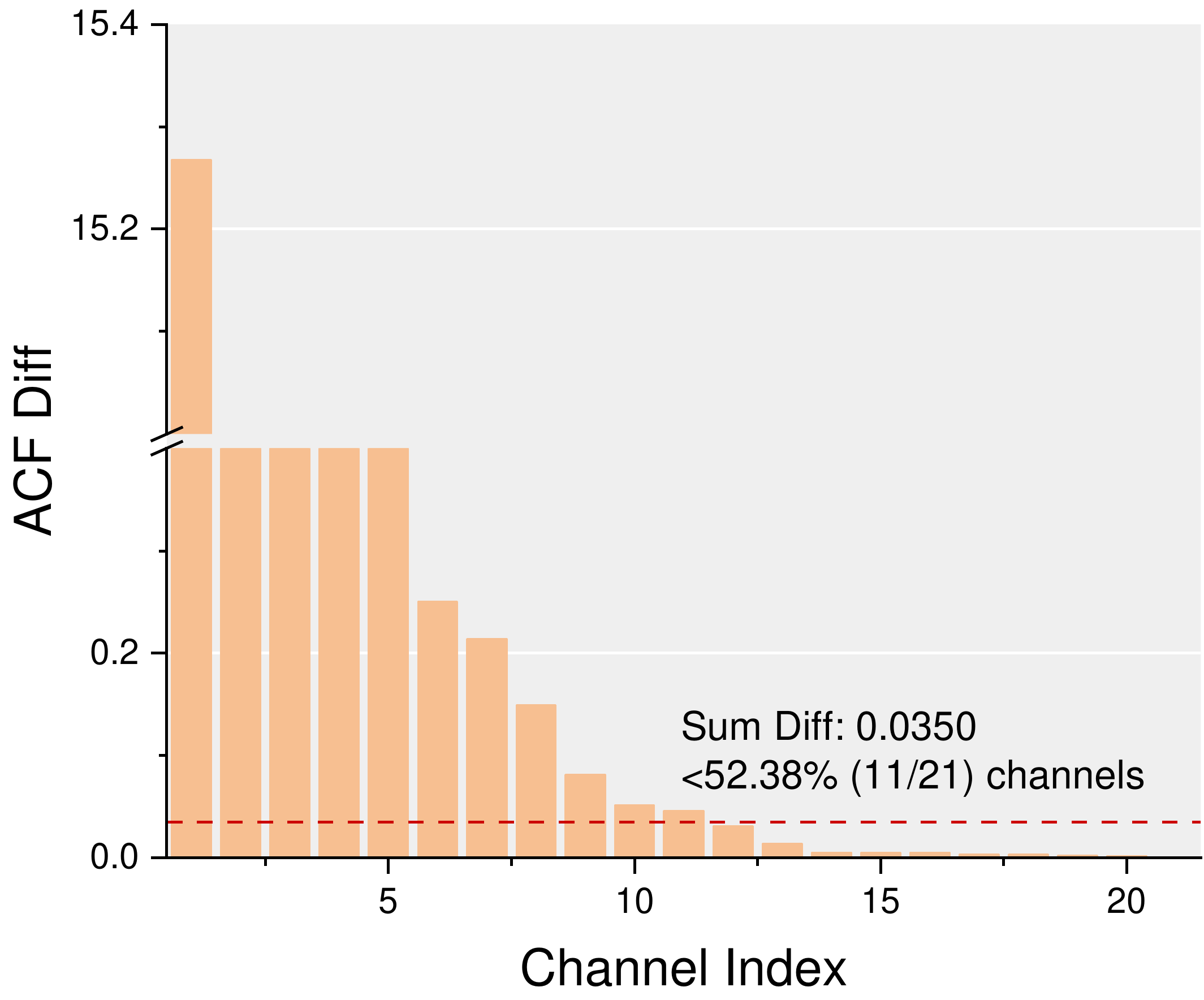}
	\caption{Weather}
\end{subfigure}
\begin{subfigure}[b]{0.23\linewidth}
\includegraphics[width=\linewidth]{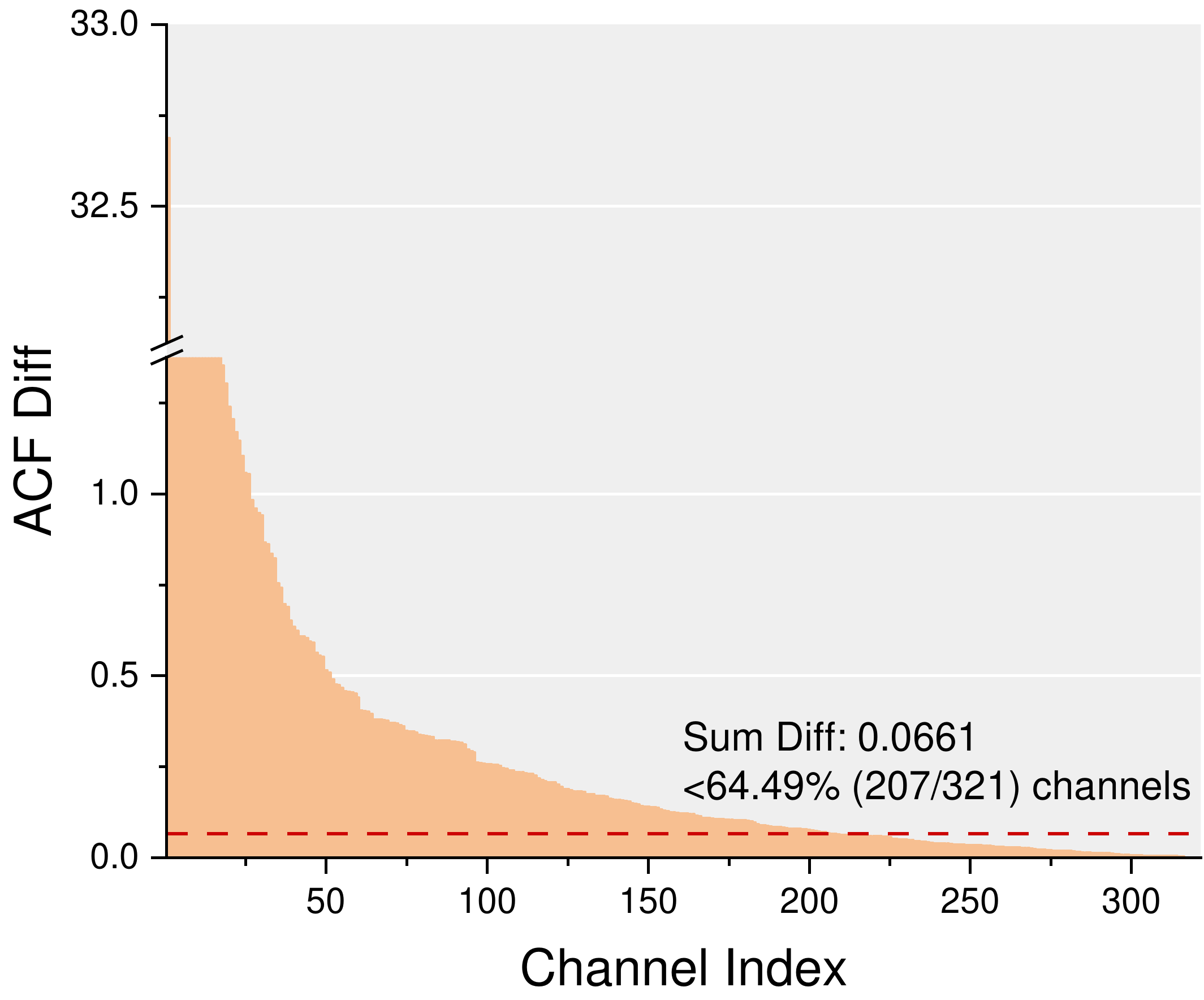}
\caption{Electricity}
\end{subfigure}
\begin{subfigure}[b]{0.23\linewidth}
\includegraphics[width=\linewidth]{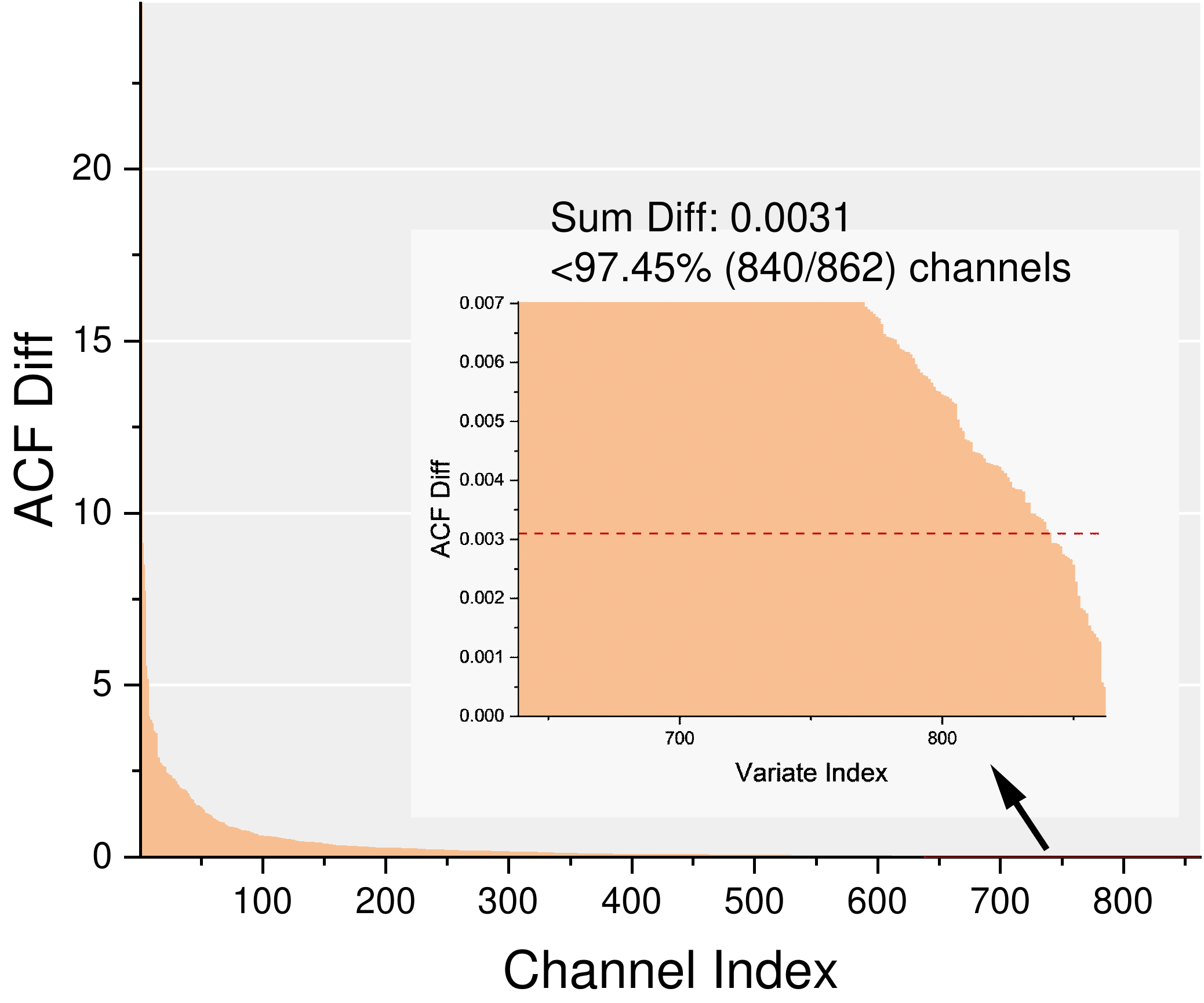}
\caption{Traffic}
\end{subfigure}
\vspace{-4mm}
\caption{The difference of ACF between training data and test data. The ACF difference for each channel is depicted in bar charts, arranged in descending order. The sum diff, which represents the overall ACF difference under the CI strategy, is shown as a horizontal line. the sum diff is smaller than the ACF difference of each channel, indicating that the CI strategy can effectively mitigate distribution drift.}
\label{eq:acf_diff}
\end{figure*}

\begin{tcolorbox}[title={Takeaways}]
The sum of ACF differences between training and test data exhibits less variation than the ACF differences of most individual channels. This means employing the CI strategy results in reduced distribution drift.
\end{tcolorbox}

\subsection{Capacity and Robustness}
Although the CI strategy can reduce the distribution gap between training and test data, we cannot conclude that it leads to better generalization performance. This is primarily due to the fact that the hypothesis spaces under CI and CD strategies are not the same, where $\W_{ci} \in \R^{L \times H}$ and $\W_{cd} \in \R^{LC \times HC}$. To analyze the risks associated with these strategies, we follow the risk analysis framework in machine learning proposed by Mohri et al.~\cite{mohri2018foundations} and decompose the risk according to the following equation:
\begin{equation}
		\cR(\hat{\W})=\underbrace{\left(\cR(\hat{\W})-\inf_{\W \in \mathcal{W}} \cR(\W)\right)}_{\text {non-robustness}}+\underbrace{\inf_{\W \in \mathcal{W}} \cR(\W)}_{\text{incapacity}}. \\
	\label{eq:risk_decomp}
\end{equation}
where $\cR(\cdot)$ is the risk defined by \cref{eq:multi_expect} and $\hat{\W}$ denotes the model obtained by minimizing the empirical CD loss as in \cref{eq:multi_emp}, or the empirical CI loss as in \cref{eq:uni_emp}.
We use the notation $\mathcal{W}$ to refer to the hypothesis space. In this paper, we consider two types of hypothesis space: the hypothesis space of CI, denoted by $\mathcal{W}{ci} = \R^{L \times H}$, and the hypothesis space of CD, denoted by $\mathcal{W}{cd} = \R^{LC \times HC}$.  

Unlike the traditional terminology~\cite{mohri2018foundations}, we interpret the first term as the \textbf{(non-)robustness} of a model, which is the risk gap between the model trained on the training set and the optimum model on the test data distribution. It measures the ability of the model to handle unseen data and achieve nearly optimal performance. A lower value of this term indicates a more robust model. The \textbf{(in)capacity} measures how well the optimum model fits the data, with a lower value indicating a better ability to fit the data. Simple algorithms like linear regression usually have low capacity (high incapacity), while complex algorithms like neural networks have high capacity. In addition to the choice of algorithm, different training strategies such as CI and CD also affect the robustness and capacity of the obtained model. In the following sections, we provide empirical results to further illustrate this concept.

It is not possible to calculate the risk $\cR$ directly as access to the underlying data distribution is unavailable. In this paper, we opt to approximate it using the empirical risk on the test data. For ease of reference, we represent the training and test set utilizing the CI strategy as $(A^{(tr)}{ci},B^{(tr)}{ci})$ and $(A^{(te)}{ci},B^{(te)}{ci})$, while the training and test set using the CD strategy is denoted as $(A^{(tr)}{cd},B^{(tr)}{cd})$ and $(A^{(te)}{cd},B^{(te)}{cd})$. We compute the subsequent statistics to demonstrate the performance of CI and CD on the benchmarks: 
\begin{enumerate}
	\item \textbf{Train Error (Incapacity).} The training error $\mathcal{L}^{(tr)}_i$ is computed as the following:
	\begin{equation}
		\mathcal{L}^{(tr)}_i = \lVert \A^{(tr)}_i \W^{(tr)}_i - \B^{(tr)}_i \rVert^2_F, \quad i\in \{ci,cd\}
	\end{equation}
where
$$\W^{(tr)}_i = \argmin_\W \lVert \A^{(tr)}_i \W - \B^{(tr)}_i \rVert^2_F$$ is the optimum parameter for the training data. \textbf{Train Error} is also a measure of capacity but empirically computed on the training set.

\item \textbf{Test Error (Incapacity).} The test error $\mathcal{L}^{(te)}_i$ is computed as the following:
\begin{equation}
			\mathcal{L}^{(te)}_i = \lVert \A^{(te)}_i \W^{(te)}_i - \B^{(te)}_i \rVert^2_F, \quad i\in \{ci,cd\}
\end{equation}
where:
$$\W^{(te)}_i = \argmin_\W \lVert \A^{(te)}_i \W - \B^{(te)}_i \rVert^2_F$$ is the optimum parameter for the test data. Test loss describes the best error a linear model can achieve on the test data. It is an approximation of $\inf_{\W \in \mathcal{W}}\cR(\W)$ in \cref{eq:risk_decomp}. 

\item \textbf{Gen Error ($\cR(\hat{\W})$).} The generalization error $\mathcal{L}^{(gen)}_i$is computed as:
$$
\mathcal{L}^{(gen)}_i = \lVert \A^{(te)}_i \W^{(tr)}_i - \B^{(te)}_i \rVert^2_F. \quad i\in \{ci,cd\}
$$
It is the performance measure on the benchmarks.

\item \textbf{W Diff (Non-Robustness).} It is an approximation of non-robustness in \cref{eq:risk_decomp}. Its value is computed as:
\begin{equation}
	\operatorname{Diff}_{W_i} = \lVert \A^{(te)}_i (\W^{(tr)}_i - \W^{(te)}_i)\rVert^2_F.
	\label{eq:w_diff}
\end{equation}
\cref{eq:w_diff} is inspired by ordinal least square in fixed design settings~\cite{mohri2018foundations}, where the estimation error is computed as the Mahalanobis distance between $\W^{(tr)}_i$ and $\W^{(te)}_i$. \cref{eq:w_diff} is an extension of Mahalanobis distance, since:
\begin{equation}
	\begin{aligned}
		&\operatorname{Diff}_{W_i} = \lVert \A^{(te)}_i (\W^{(tr)}_i - \W^{(te)}_i)\rVert^2_F\\ =& \operatorname{tr}((\W^{(tr)}_i - \W^{(te)}_i)^\top (\A^{(te)}_i)^\top \A^{(te)}_i (\W^{(tr)}_i - \W^{(te)}_i)) \\
		=& \operatorname{tr}((\W^{(tr)}_i - \W^{(te)}_i)^\top \hat{\Sigma}^{(te)}_i (\W^{(tr)}_i - \W^{(te)}_i))
	\end{aligned}.
\label{eq:mah}
\end{equation}
$\operatorname{tr}$ is the trace operation for a matrix and $\hat{\Sigma}^{(te)}_i= (\A^{(te)}_i)^\top \A^{(te)}_i$ is the unnormalized sample covariance matrix. When  $\W^{(tr)}_i$ and $\W^{(te)}_i$ become vectors, \cref{eq:mah} falls back to the Mahalanobis distance parameterized by Mahalanobis matrix $\hat{\Sigma}^{(te)}_i$. 
In this sense, the W diff can also be considered as a measure of distribution drift, since it is a distance measure between $\W^{(tr)}_i$ and $\W^{(te)}_i$, and $\W^{(tr)}_i$ and $\W^{(te)}_i$ are derived by the ACF of train and test data.

Another interpretation of $\operatorname{Diff}_{W_i}$ takes it as a lower bound for the estimation error:
$$
\begin{aligned}
	\operatorname{Diff}_{W_i} &=\lVert \A^{(te)}_i (\W^{(tr)}_i - \W^{(te)}_i)\rVert^2_F  \\
	&\le\lVert \A^{(te)}_i \W^{(te)}_i - \B^{(te)}_i \rVert^2_F - \lVert \A^{(te)}_i \W^{(tr)}_i - \B^{(te)}_i \rVert^2_F \\
	& = \mathcal{L}^{(gen)}_i - \mathcal{L}^{(te)}_i, \quad i\in \{ci,cd\}
\end{aligned}
$$
\end{enumerate}

\begin{figure*}[h]
	\begin{subfigure}[b]{0.19\linewidth}
		\includegraphics[width=\linewidth]{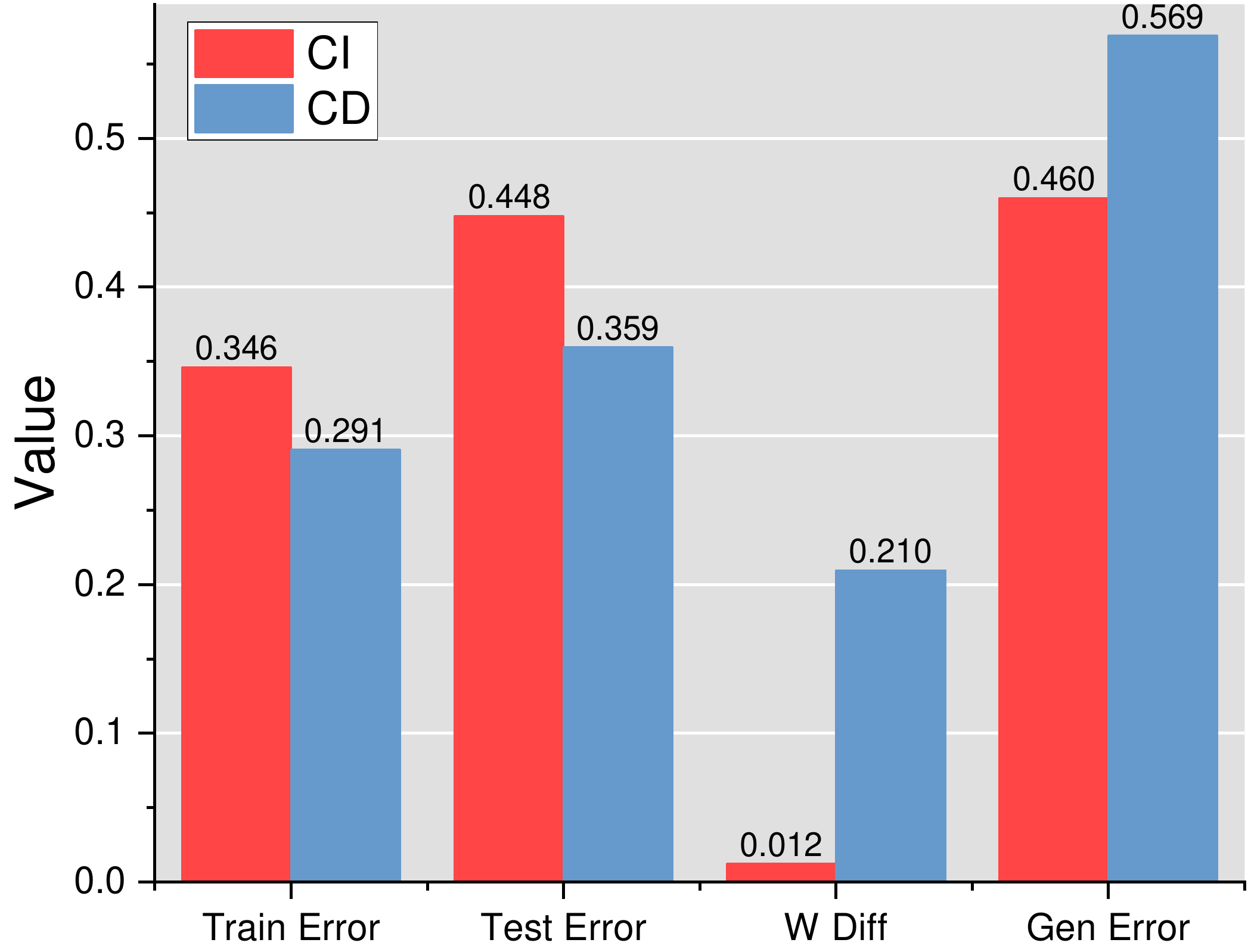}
		\caption{ETTh1}
	\end{subfigure}
	\begin{subfigure}[b]{0.19\linewidth}
		\includegraphics[width=\linewidth]{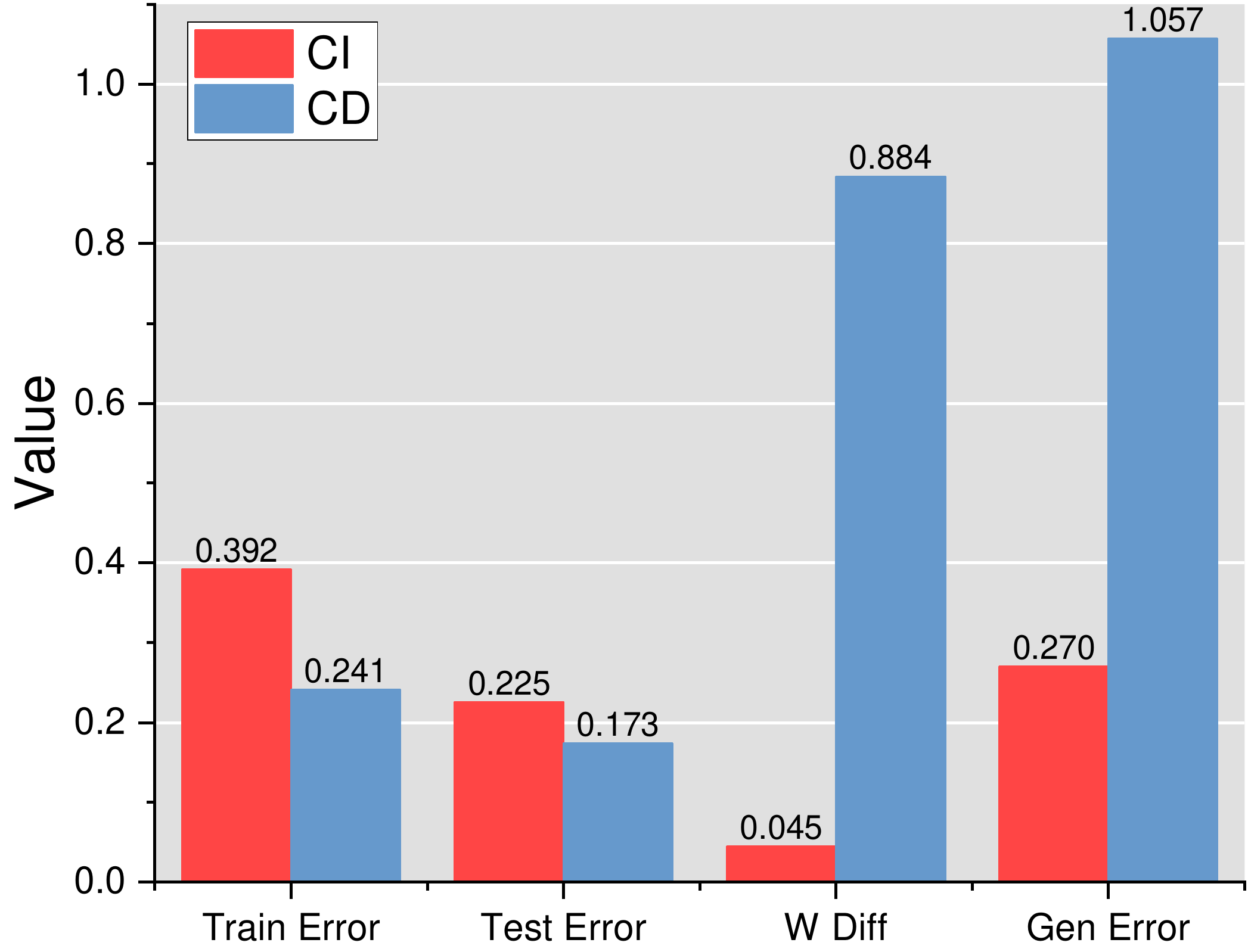}
		\caption{ETTh2}
	\end{subfigure}
	\begin{subfigure}[b]{0.19\linewidth}
		\includegraphics[width=\linewidth]{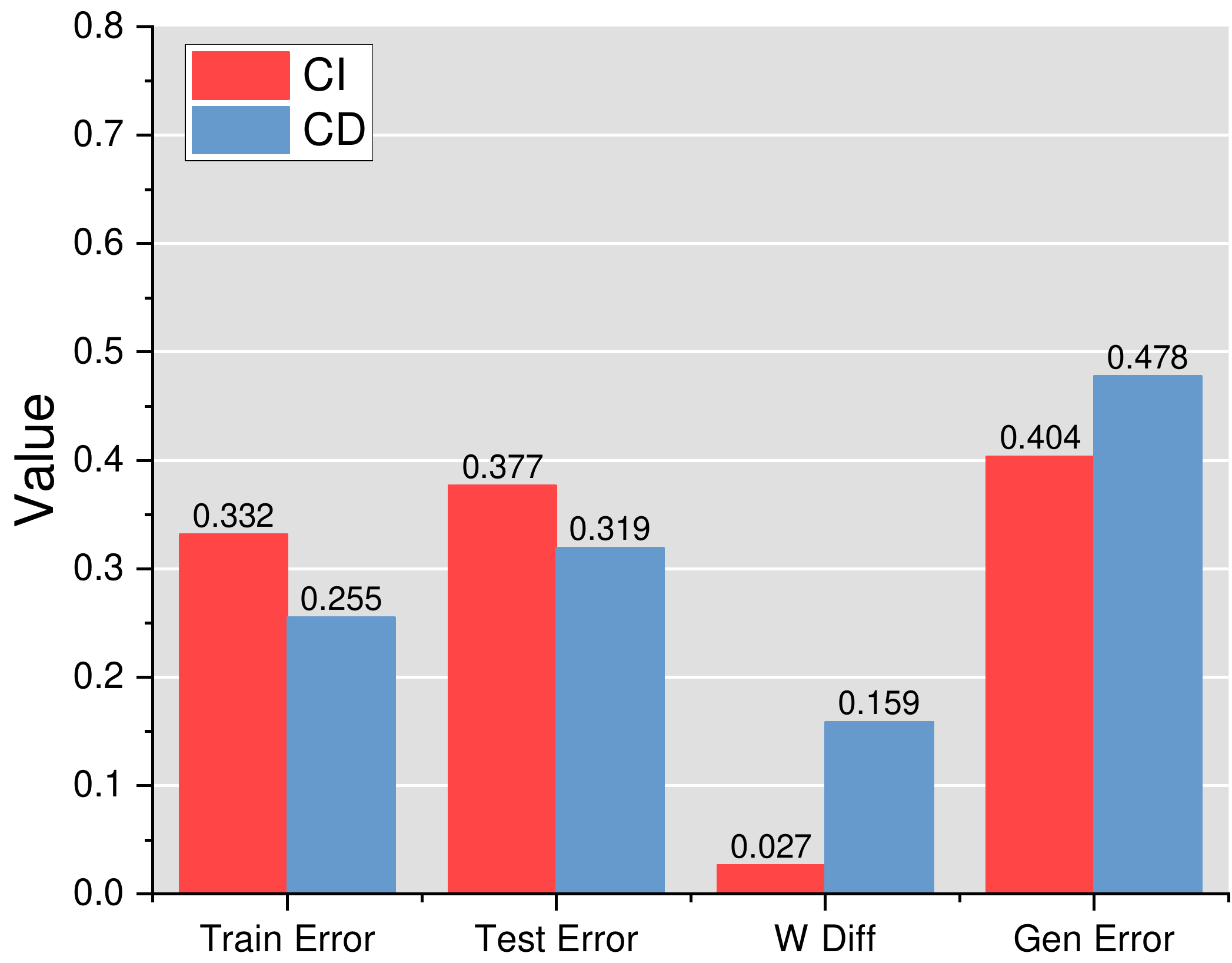}
		\caption{ETTm1}
	\end{subfigure}
	\begin{subfigure}[b]{0.19\linewidth}
		\includegraphics[width=\linewidth]{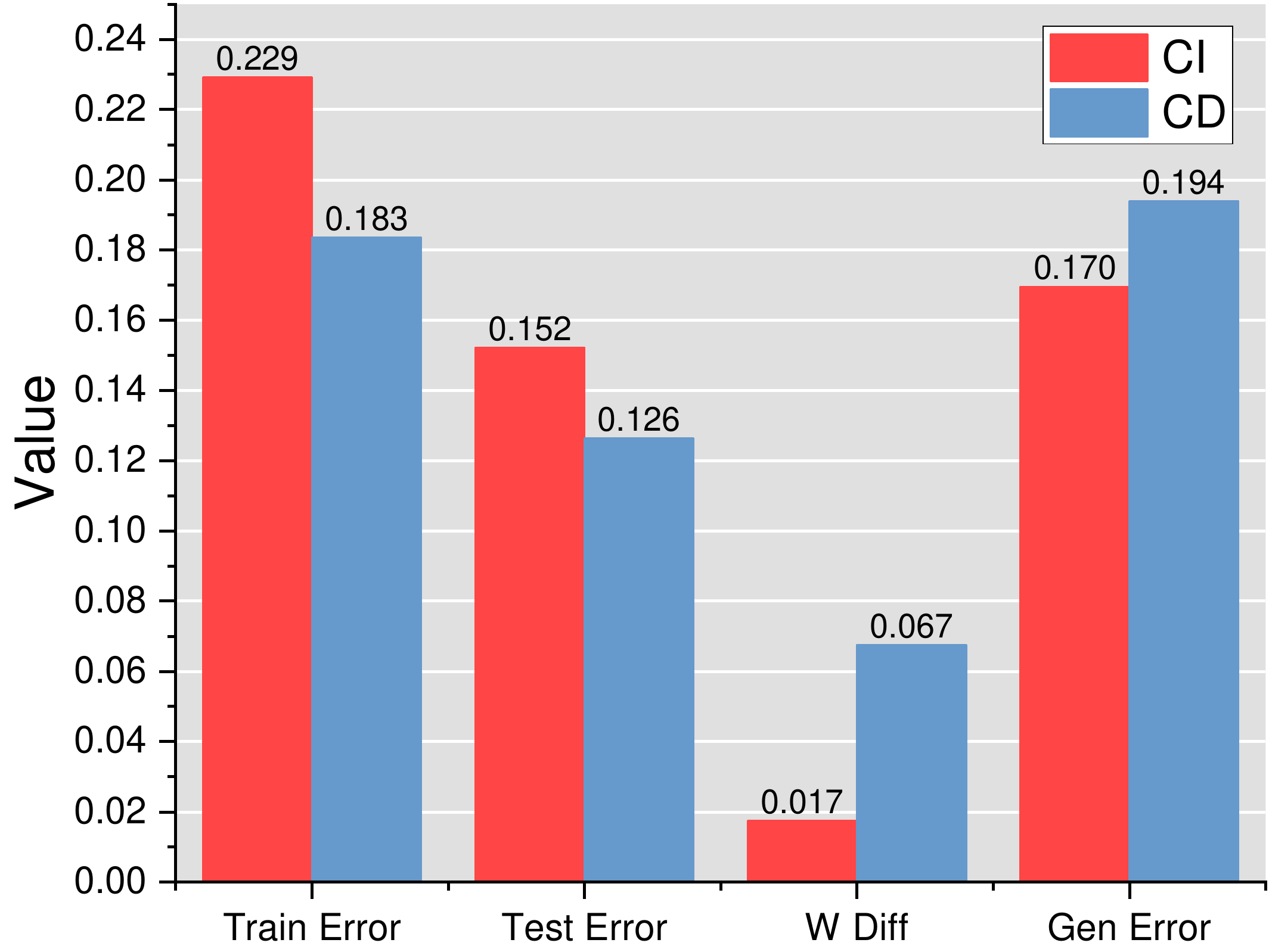}
		\caption{ETTm2}
	\end{subfigure}
	\begin{subfigure}[b]{0.19\linewidth}
		\includegraphics[width=\linewidth]{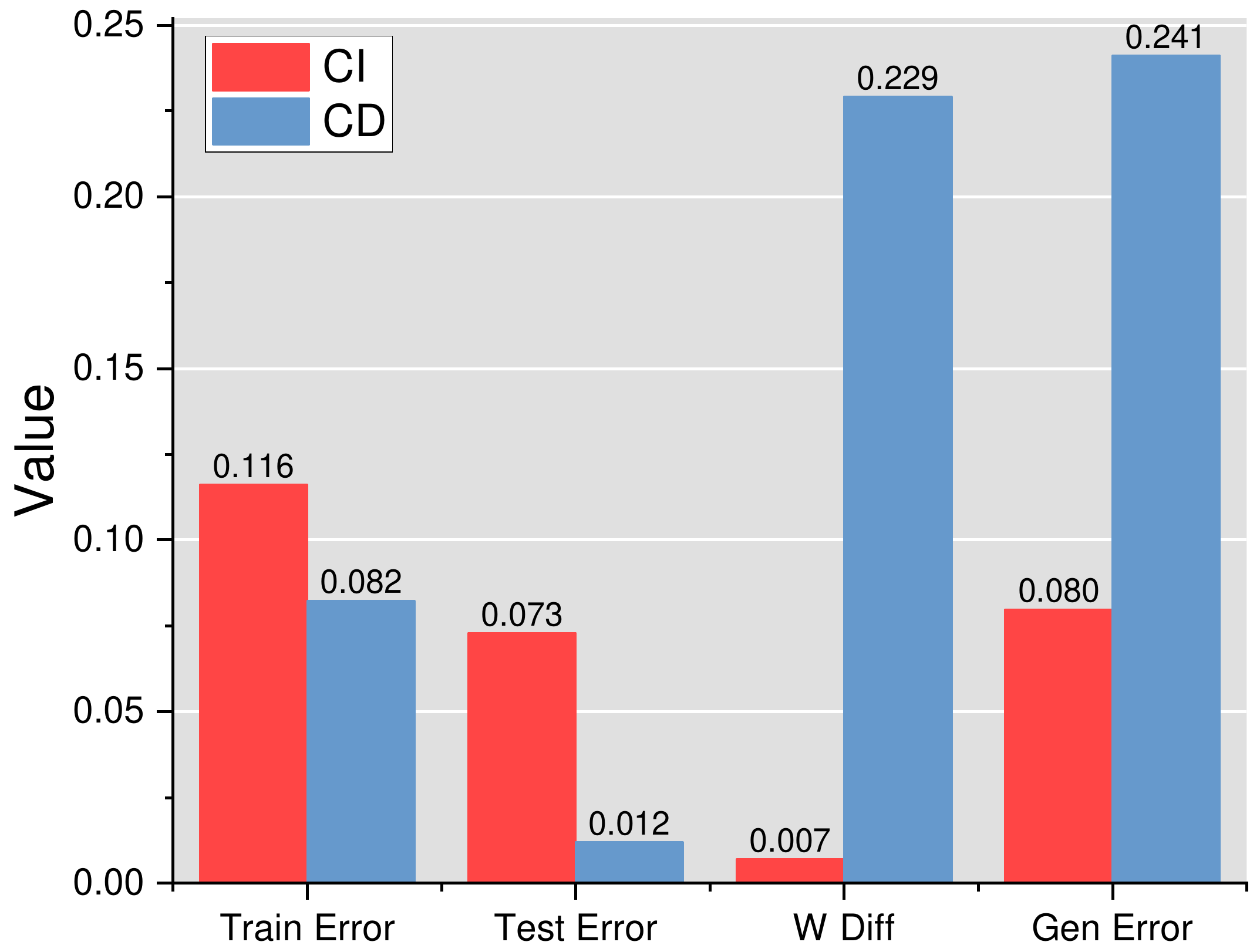}
		\caption{Exchange-Rate}
	\end{subfigure}
	\begin{subfigure}[b]{0.23\linewidth}
		\includegraphics[width=\linewidth]{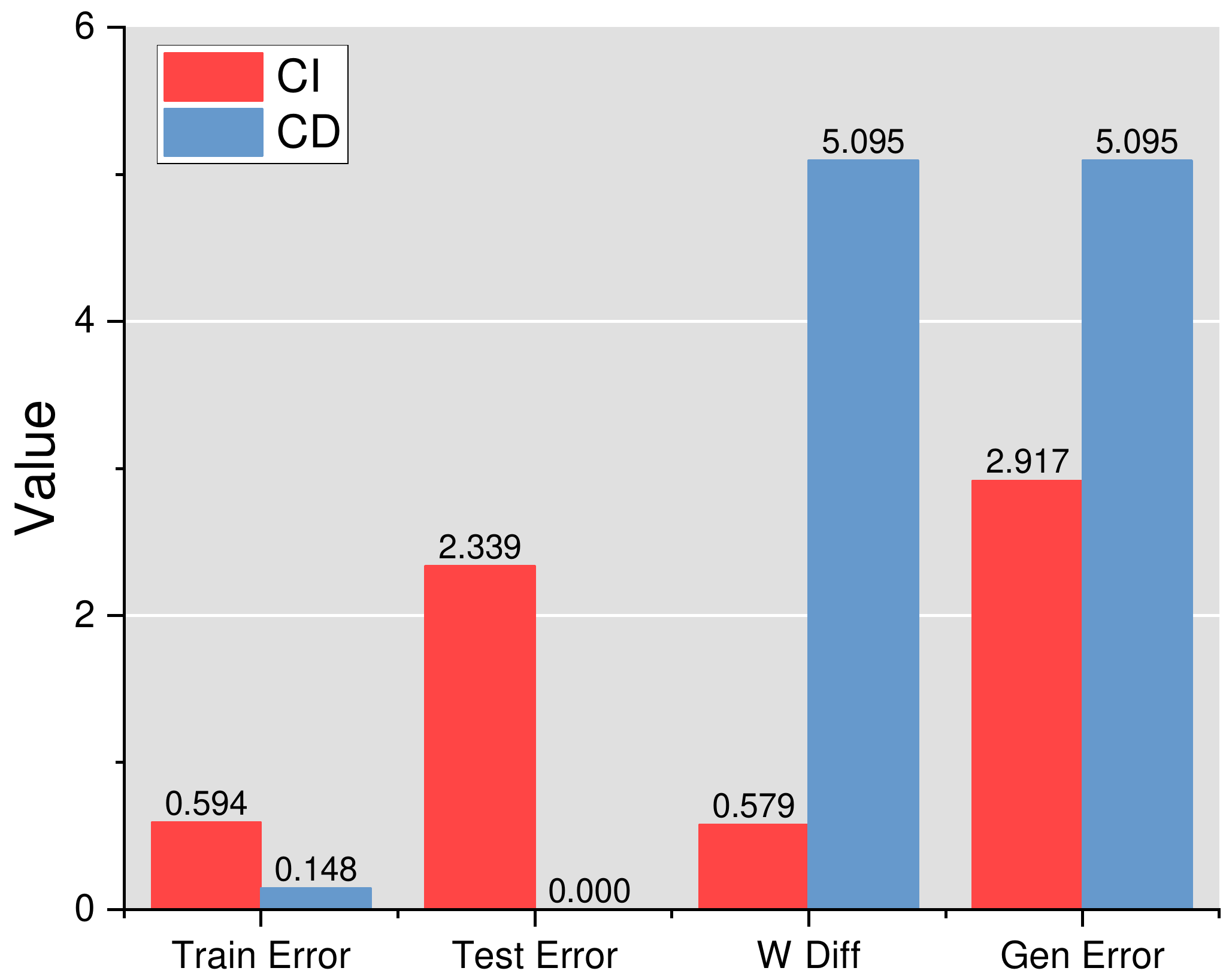}
		\caption{ILI}
	\end{subfigure}
	\begin{subfigure}[b]{0.23\linewidth}
		\includegraphics[width=\linewidth]{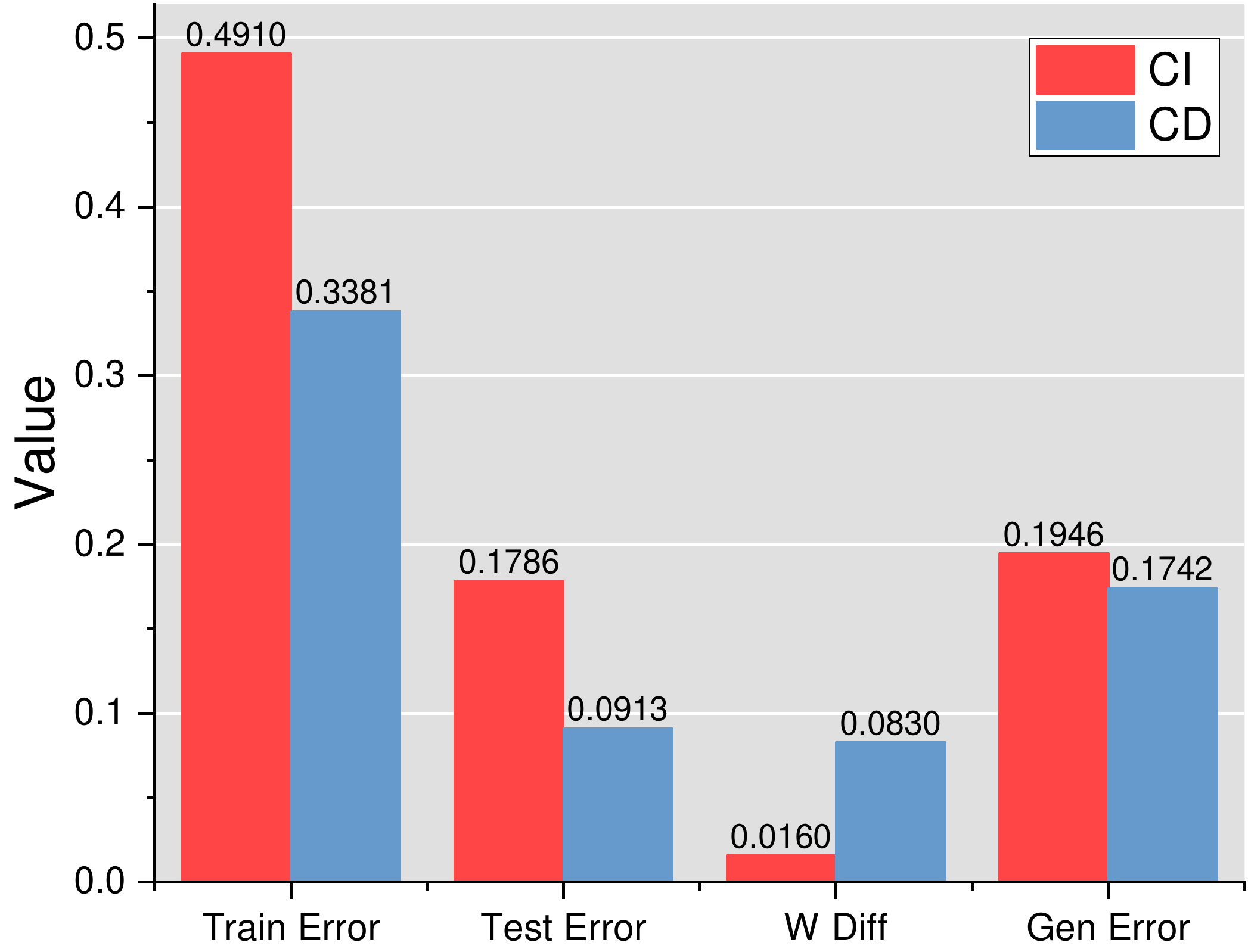}
		\caption{Weather}
	\end{subfigure}
	\begin{subfigure}[b]{0.23\linewidth}
		\includegraphics[width=\linewidth]{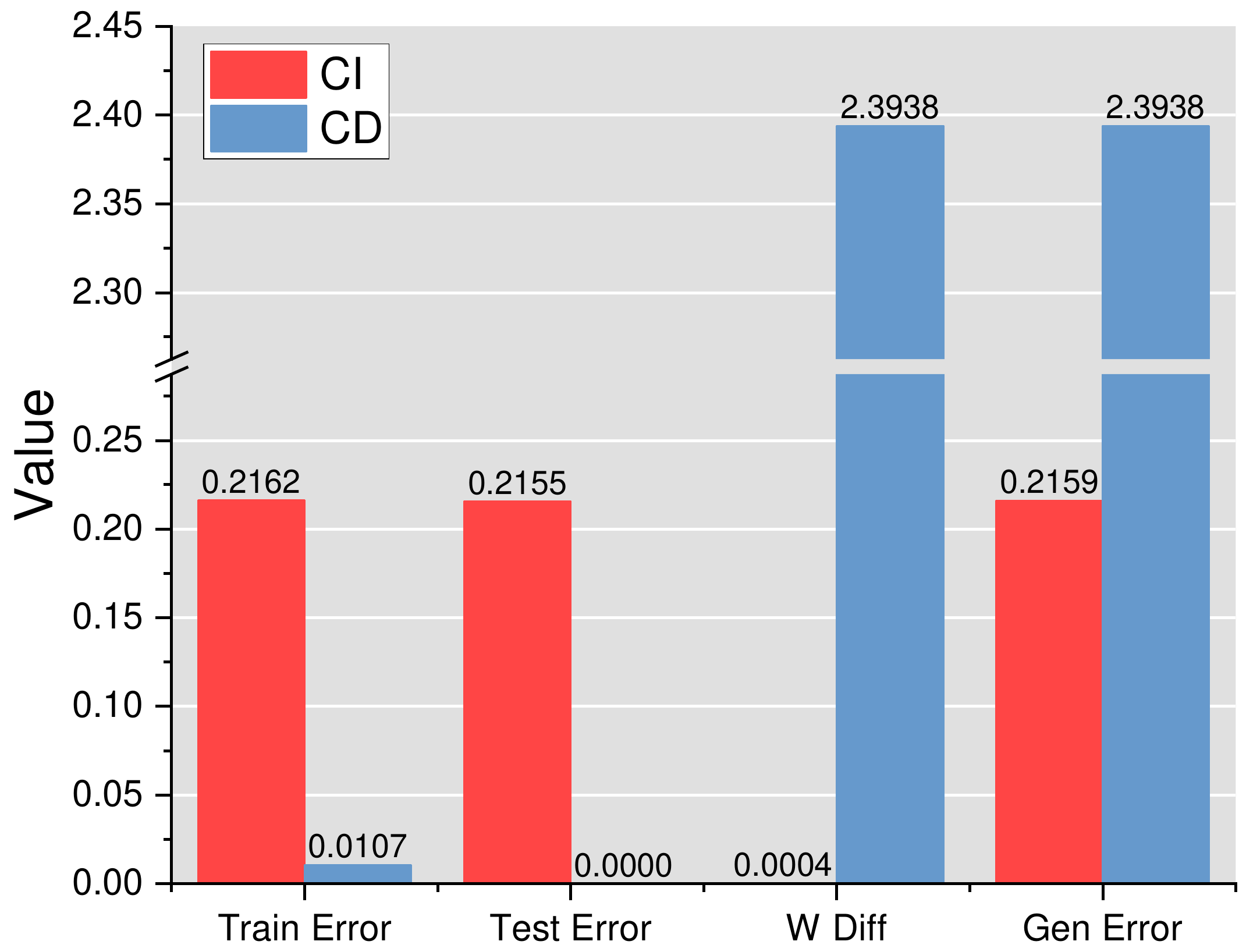}
		\caption{Electricity}
	\end{subfigure}
	\begin{subfigure}[b]{0.23\linewidth}
		\includegraphics[width=\linewidth]{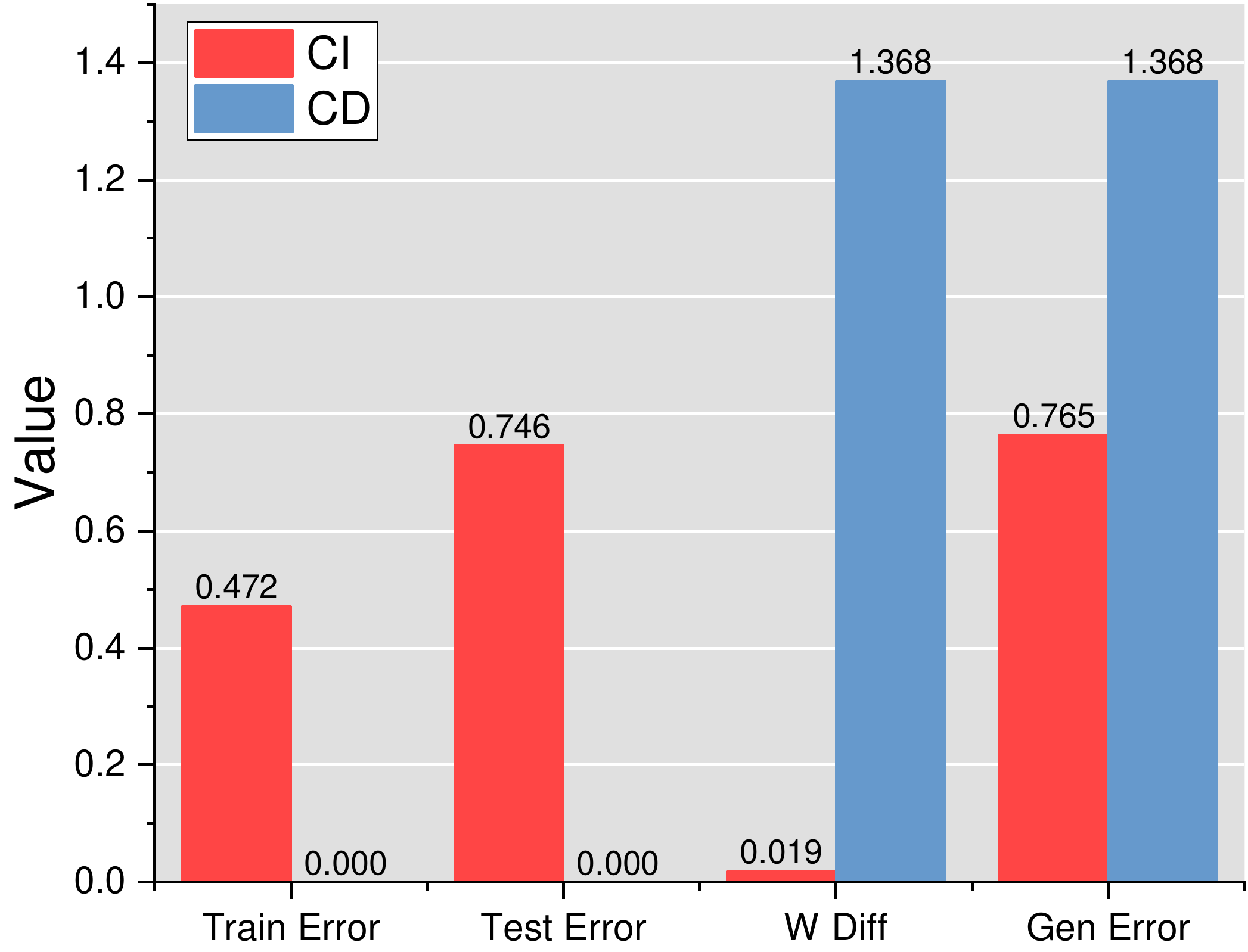}
		\caption{Traffic}
	\end{subfigure}
	\caption{The train error, test error, W diff and gen error when using CI and CD strategy on the 9 datasets. Train/test error measures model capacity on train/test data. W diff measures the difference between the optimal model on train and test data. It reveals the robustness of a model. Gen error measures the risk of an algorithm. Although CD can achieve lower optimal error, it is much less robust to the distribution drift than CI. Consequently, in most cases, CI outperforms CD.}
	\label{fig:risks}
\end{figure*}

Similar to the risk decomposition (\cref{eq:risk_decomp}), we can also decompose the gen loss by the following equation:
\begin{equation}
	\mathcal{L}^{(gen)}_i = \underbrace{(\mathcal{L}^{(gen)}_i - \mathcal{L}^{(te)}_i)}_{\approx \operatorname{Diff}_{W_i}} + \mathcal{L}^{(te)}_i. \quad i\in \{ci,cd\}
	\label{eq:error_decomp}
\end{equation}

We illustrate the above statistics on the 9 datasets in \cref{fig:risks} and draw the following conclusions. (1) \textbf{CD models exhibit lower train/test loss as compared to CI models, indicating that CD strategy trains a model with higher capacity.} On all of the 9 models, the train/test error is always lower than CI. When the number of channels is large, CD model may have 0 errors. This trend is consistent across all 9 models, with the CD model registering zero errors for larger channel numbers. This outcome is anticipated due to the fact that the hypothesis space of CI is a subset of CD, thereby enabling the construction of a CD linear model by replicating elements of a CI linear model. As per the definition, the best CD model will inevitably have a lower error rate than CI. (2) \textbf{CD models have a significantly larger W diff than CI models, indicating that CI is much more robust than CD.} This trend is apparent across all 9 datasets, with CD models having W diff values usually over 10 times the value of CI. The phenomenon is especially conspicuous in datasets with multiple channels such as Electricity (h) and Traffic (i). The distribution drift between train and test data is responsible for this trend. In the previous section, we have shown that the ACF coefficients of CD and CI models are determined by the ACF of all channels and the sum of ACF across all channels, with the latter exhibiting a lower difference than the former. This difference contributes to the gap between optimal models on train and test data being different for CD and CI strategies. CI strategy leads to a lighter distribution gap, resulting in a smaller W Diff value. (3) W Diff values are more significant than Test Error in most cases, leading to CI models having lower Gen Error than CD models. \ie, \textbf{Robustness is more crucial than capacity.} For the 4 ETT benchmarks, there is not much difference in test error between CI and CD models, but W diff values are significantly distinct. Hence, CI models perform better than CD models in terms of gen loss. On datasets (e), (f), (h), (i), the test loss varies considerably, but the W diff values differ significantly more than the test loss, leading to CI models performing better than CD models. The only exception to this trend is the weather benchmark (g), where test error holds greater significance than W diff values. Consequently, CD strategy performs better in this case. \Cref{fig:equipotential} summarizes these findings.

\begin{figure}[h]
	\includegraphics[width=0.80\linewidth]{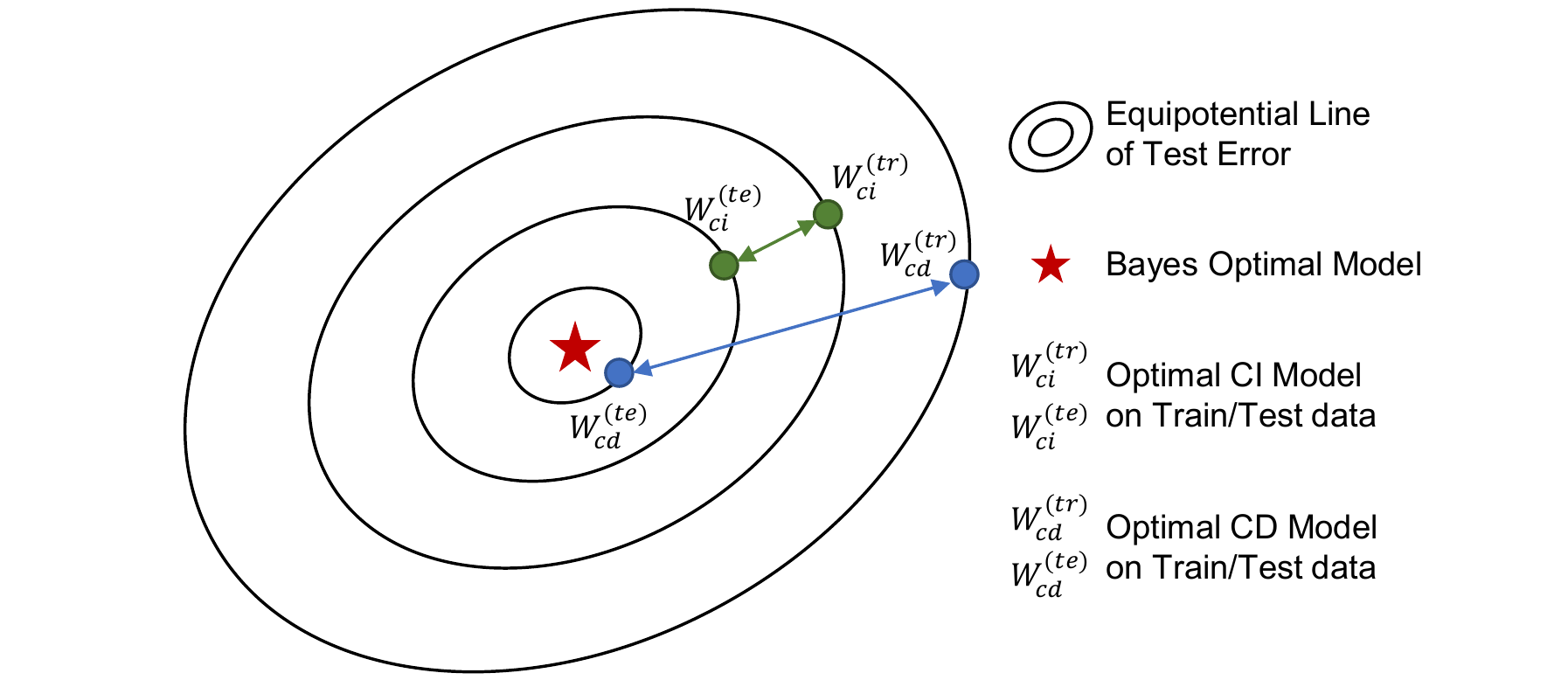}
 \vspace{-4mm}
	\caption{Illustration of the conclusions drawn from \cref{fig:risks}. CD have better optimal model on test data but larger distance between the optimal train and test model. CI is the contrary. In most cases, model difference matters more than the optimal model. Thus CI often achieves better performance. }
	\label{fig:equipotential}
\end{figure}

From analyses in this section, we draw an overall conclusion:

\begin{tcolorbox}[title={Section Conclusion}]
	The Channel Dependent (CD) strategy has high capacity but low robustness. The Channel Dependent (CI) strategy has low capacity but high robustness. In numerous real-world non-stationary time series with distribution drifts, robustness is a more crucial factor than capacity in forecasting tasks. Consequently, CI strategy often delivers better performance.
\end{tcolorbox}


To provide readers with a clear understanding, we demonstrate the differences in prediction outcomes between the CD and CI strategies using a visual representation in \cref{fig:pred}. These examples were selected as they are indicative of the findings obtained across the experiments. From these four figures, We observe that the CD approach produces sharp predictions, while the CI approach generates smoother predictions. This discrepancy can be attributed to the sum over effect, which we have analysed in detail in the previous subsection. Unfortunately, the non-robust and sharp nature of CD predictions make them unsuitable for accurately predicting real-world non-stationary time-series. \Cref{fig:pred}.(a) displays a scenario where both strategies capture the correct trend and seasonal component, but the CI approach more closely aligns with the ground-truth compared to the CD approach. (b) shows that the CD approach may predict incorrect trends, while the CI approach is less prone to making such errors. When faced with anomalous time-series like (c), the CI approach is more robust and produces less oscillation. Nonetheless, there are instances where the CD approach outperforms the CI approach, as shown in \cref{fig:pred}.(d), where the CD model's high capacity can be advantageous for capturing complex but predictable patterns. Intuitively, real-world time series that exhibit regular patterns and smooth changes are predictable. Conversely, drastically oscillating time series, such as the one depicted in \cref{fig:pred}.(c), are anomalies and therefore unpredictable. In such cases, conservative and robust predictions are preferable. As a result, the CI strategy yields superior results on average compared to the CD strategy.
\begin{figure*}[htp]
\centering
    \begin{subfigure}[b]{0.24\textwidth}
		\centering
		\includegraphics[width=\textwidth]{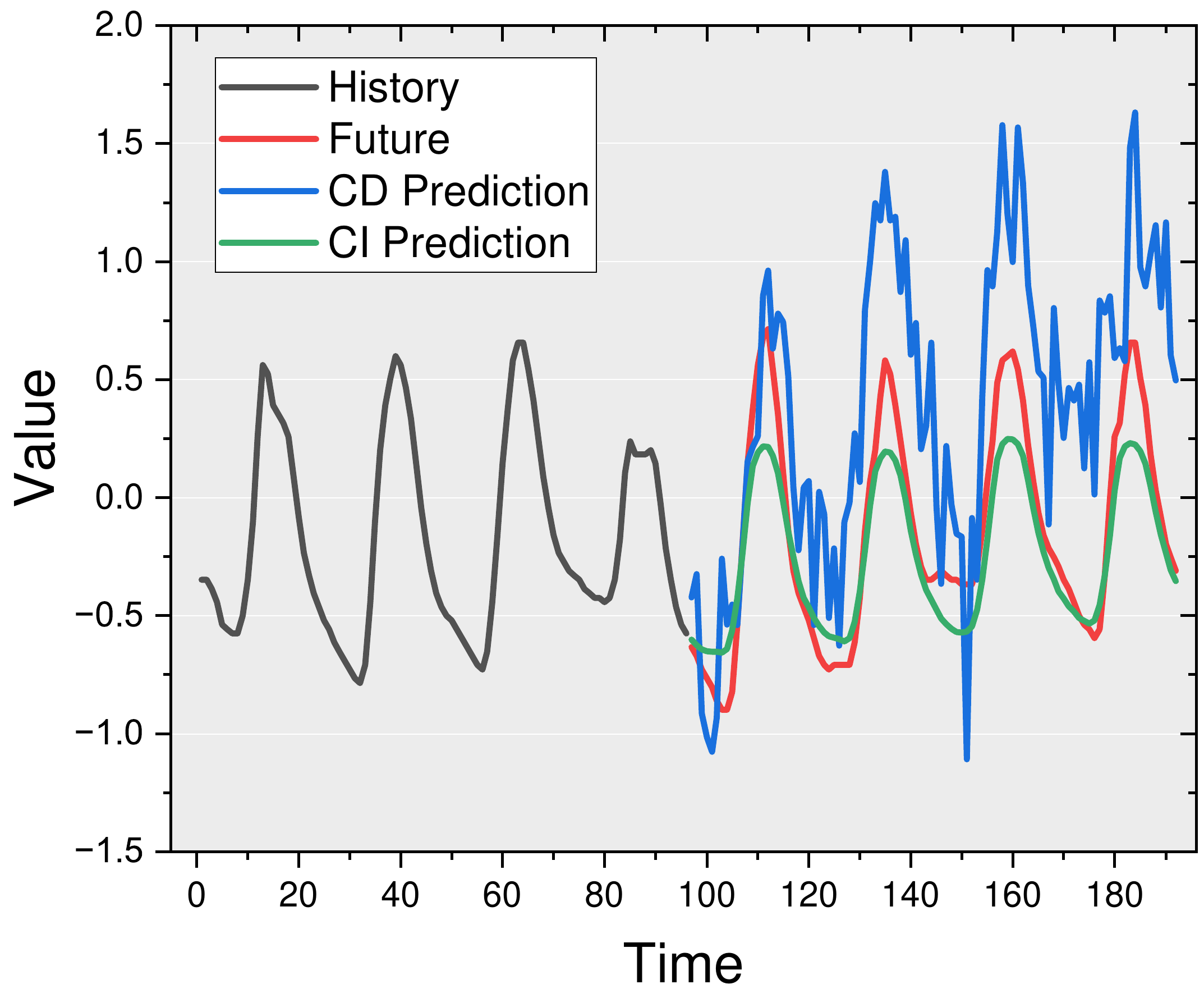}
  \caption{}
	\end{subfigure}
    \begin{subfigure}[b]{0.24\textwidth}
		\centering
		\includegraphics[width=\textwidth]{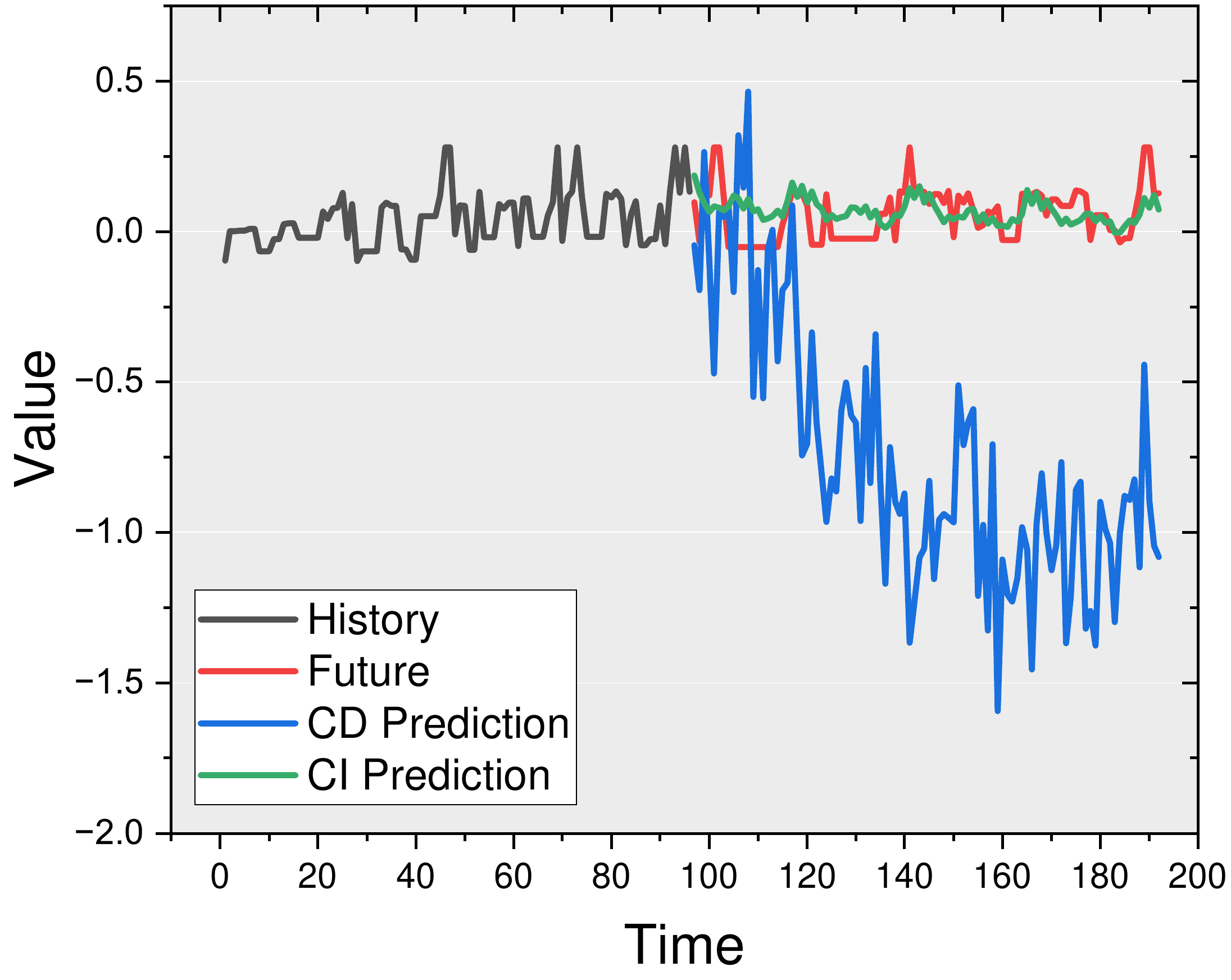}
  \caption{}
	\end{subfigure}
    \begin{subfigure}[b]{0.24\textwidth}
		\centering
		\includegraphics[width=\textwidth]{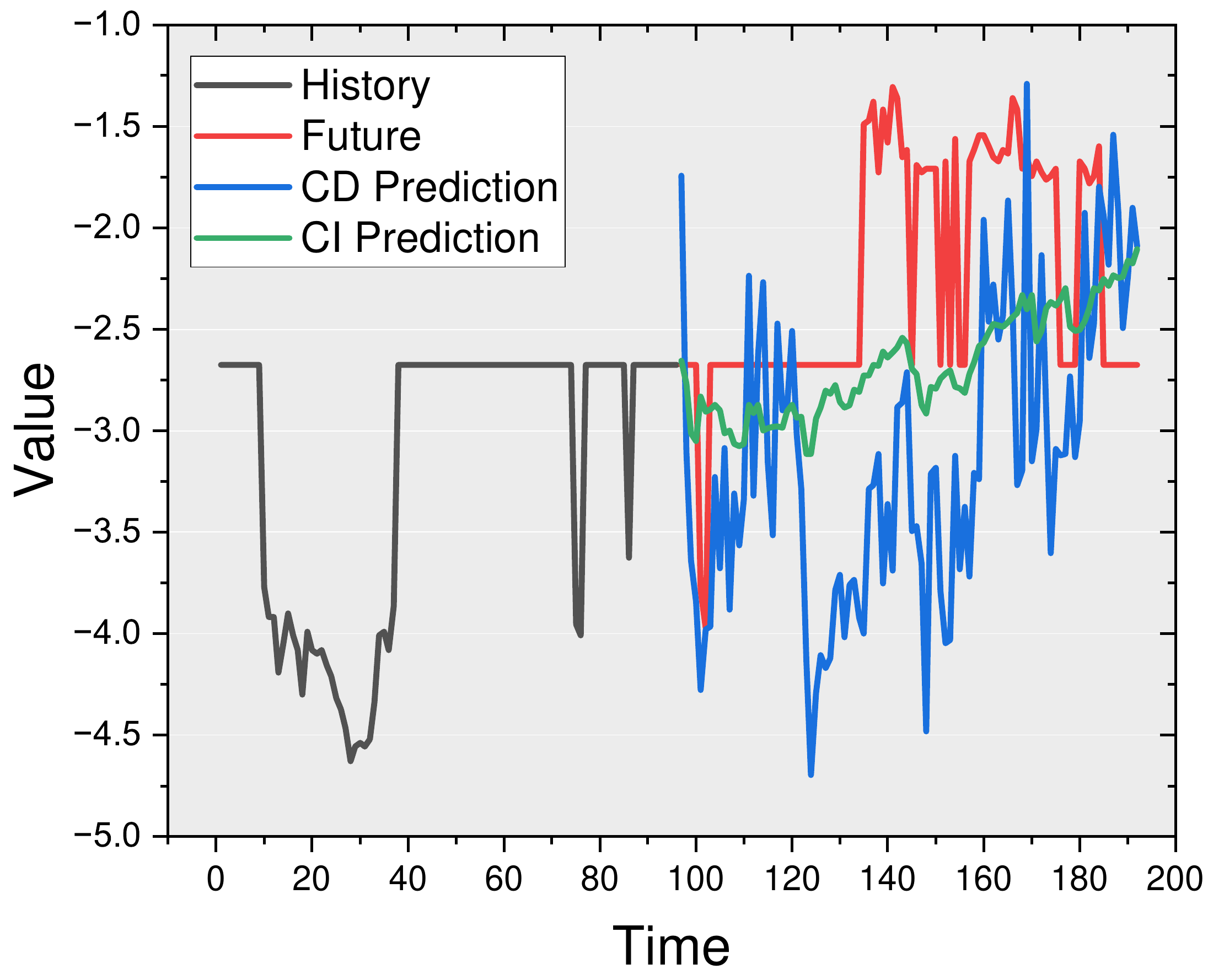}
  \caption{}
	\end{subfigure}
    \begin{subfigure}[b]{0.24\textwidth}
		\centering
		\includegraphics[width=\textwidth]{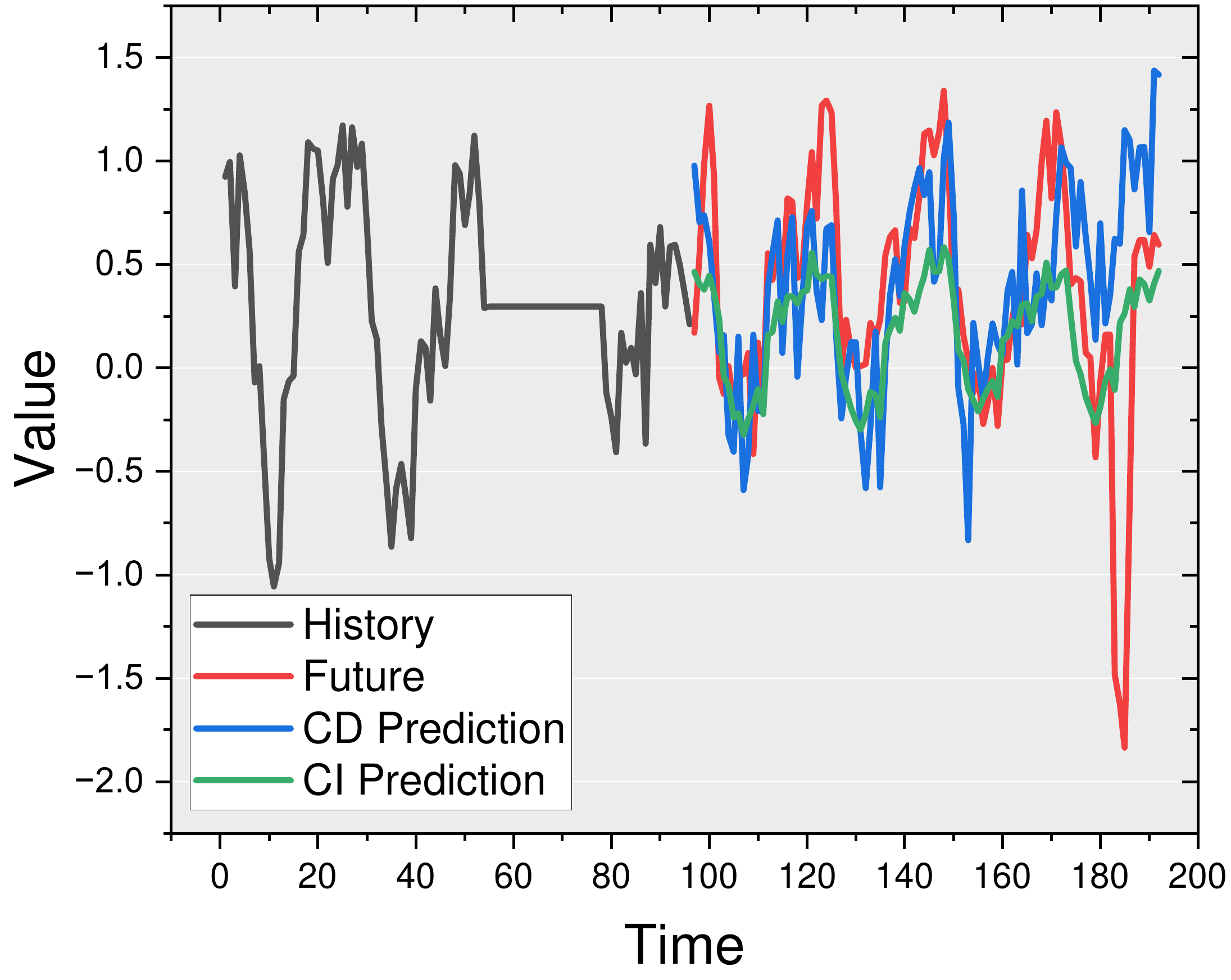}
  \caption{}
	\end{subfigure}
 \vspace{-4mm}
 \caption{We select four representative examples to demonstrate the differences between CD and CI strategies. The experiment is conducted on ETTh2 with Linear model. (a) When both of them capture the correct trend and seasonal component. CD tends to generate  ``sharp'' predictions, while the CI produces smoother ones. (b) The CD could predict wrong trends, while CI is less likely to do so. (c) When faced with abnormal series, CI are more robust with less oscillation. (d) The high capacity of CD models may be beneficial for capturing complex but predictable patterns. CI is unable to capture them.}
 \label{fig:pred}
\end{figure*}

\section{Practical Guides}
The previous section's analyses have revealed that the CD strategy exhibits high capacity but low robustness, making it unsuitable for handling real-world non-stationary time series where distribution drifts are significant. Conversely, the CI approach displays higher robustness, resulting in superior performance compared to the CD strategy. These findings provide valuable guidance for designing or improving existing multivariate forecasting models. Specifically, we recommend increasing the robustness of CD models and increasing the capacity of CI models.

In this section, we propose a simple modified CD objective that can help models surpass the CI strategy. Additionally, we discuss several factors that may influence the performance of CD or CI models. By considering these factors, we can further optimize and tailor the models for specific use cases.
\subsection{Predict Residuals with Regularization}

\begin{figure}[h]
	\centering
	\includegraphics[width=0.8\linewidth]{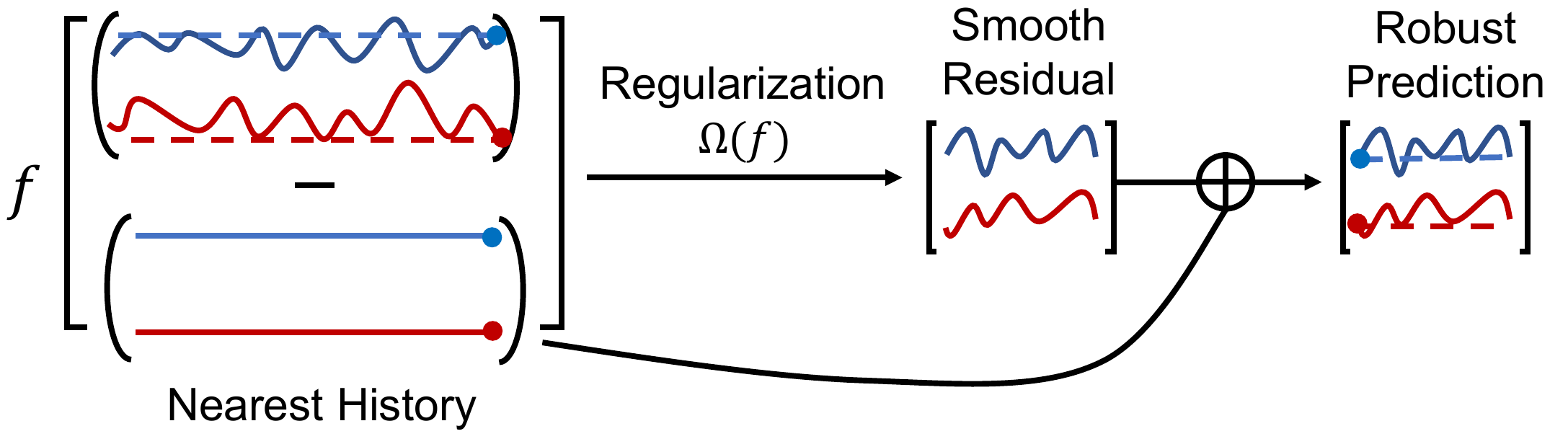}
	\vspace{-4mm}
	\caption{The idea of Predict Residuals with Regularization (PRReg). The input series is subtracted by the last value. Then the predictor is regularized to predict smoothed residual. Robust prediction is made by adding back the nearest history.}
	\label{fig:prreg}
\end{figure}

Our analysis of \cref{fig:pred} in the previous section has led us to conclude that the primary disadvantage of CD models is their tendency to generate "sharp" and non-robust predictions that often diverge from the actual trend. To address this issue, we propose a simple method to improve the performance of CD models called Predict Residuals with Regularization (PRReg), inspired by measures taken in N-BEATS~\cite{nbeats} and NLinear~\cite{Zeng2022Transformers}. The core idea of PRReg is to ensure that the prediction remains close to the nearest known history and that the forecasted series remains smooth. To achieve this objective, we reformulate the CD objective into the following form:
\begin{equation}
	\label{eq:PRReg}
	\min_f \frac{1}{N} \sum_{i=1}^N \ell(f(\X^{(i)}- N^{(i)})+ N^{(i)},\Y^{(i)})+\lambda \Omega(f).
\end{equation}
where $N^{(i)}= \X^{(i)}_{:,L}$ is the last values of each channel in $\X^{(i)}$. With this objective, the goal of $f$ is changed from accurately predicting future values to the variety from the nearest history. The regularization term $\Omega$ serves a dual purpose: to restrict the predictions within a reasonable distance from $N^{(i)}$ and to encourage smoothness in the predicted values. In our study, we adopted $L_2$ regularization, which was implemented as weight decay in PyTorch~\cite{NEURIPS2019_9015_pytorch}. Our proposed objective is applicable to various forecasting models, and its effectiveness is illustrated in \cref{fig:prreg} for Linear~\cite{Zeng2022Transformers} and Transformer~\cite{attention_is_all_you_need} models. \Cref{tb:prreg} presents the results, where we compare the PRReg strategy with CD and CI. We observe that PRReg outperforms both CD and CI in most cases when the regularization strength $\lambda$ is chosen appropriately. A too-small $\lambda$ fails to provide the required robustness since PRReg is fundamentally a CD strategy, while an excessively large regularization strength causes underfitting with sufficient capacity. Thus, choosing a suitable value of $\lambda$ results in an optimal trade-off between capacity and robustness and leads to the best possible results.

The PRReg objective offers several benefits. Firstly, it is model agnostic, which implies that it can be used with various multivariate forecasting models. Secondly, it is a modified version of the CD strategy that incorporates the correlations between different channels. Lastly, it outperforms the CI strategy. It is intuitive that we should not treat each channel independently since they represent the features of the same object. However, the CD strategy often exhibits inferior performance due to its lack of robustness. The PRReg objective successfully resolves this issue, striking a balance between capacity and robustness, thereby achieving superior results.
\begin{table*}[htp]
\centering
\caption{Comparison among forecasters trained with PRReg (varying $\lambda$), CD and CI strategies. The base forecasters are Linear and Transformer. Performance is measured by MSE. The best results of each setting (row) are marked \textbf{bold}. PRReg is able to surpass CD and CI if the $\lambda$ is selected properly. Meaning that it produces a suitable balance between capacity and robustness.}
\scalebox{0.95}{
\begin{tabular}{ccccccccccc}
\toprule
   & \multirow{2}{*}{Model} & \multirow{2}{*}{CD} & \multirow{2}{*}{CI} & \multicolumn{7}{c}{PRReg (ours)}   \\\cline{5-11}
   &   &&& $\lambda=10^{-6}$ & $\lambda=10^{-5}$ & $\lambda=10^{-4}$ & $\lambda=10^{-3}$& $\lambda=10^{-2}$& $\lambda=10^{-1}$& $\lambda=1$\\
   \midrule
\multirow{2}{*}{Electricity}& Linear   & -     & -     & - & - & - & -& -& -& -\\
   & Transformer     & 0.250 & \textbf{0.185}      & 0.218    & 0.219    & 0.227    & 0.269   & 0.378   & 0.742   & 1.527   \\
   \midrule
\multirow{2}{*}{ETTh1}      & Linear   & 0.402 & 0.345 & 0.346    & 0.345    & 0.344    & \textbf{0.342} & 0.355   & 0.426   & 0.737   \\
   & Transformer     & 0.861 & 0.655 & 0.624    & 0.624    & 0.623    & 0.625   & \textbf{0.539} & 0.744   & 1.164   \\
   \midrule
\multirow{2}{*}{ETTh2}      & Linear   & 0.711 & \textbf{0.226}      & 0.335    & 0.329    & 0.296    & 0.248   & 0.239   & 0.259   & 0.298   \\
   & Transformer     & 1.031 & 0.274 & 0.319    & 0.319    & 0.323    & 0.373   & 0.424   & \textbf{0.273} & 0.332   \\
   \midrule
\multirow{2}{*}{ETTm1}      & Linear   & 0.404 & 0.354 & 0.315    & 0.315    & 0.314    & \textbf{0.311} & 0.318   & 0.378   & 0.668   \\
   & Transformer     & 0.458 & 0.379 & 0.374    & 0.374    & 0.375    & \textbf{0.349} & 0.370   & 0.536   & 1.148   \\
   \midrule
\multirow{2}{*}{ETTm2}      & Linear   & 0.161 & 0.147 & 0.142    & 0.142    & 0.140    & \textbf{0.136} & 0.141   & 0.163   & 0.195   \\
   & Transformer     & 0.281 & 0.148 & 0.171    & 0.164    & 0.160    & \textbf{0.144} & 0.149   & 0.182   & 0.211   \\
   \midrule
\multirow{2}{*}{Exchange\_Rate}    & Linear   & 0.119 & 0.051 & 0.050    & 0.050    & 0.049    & 0.046   & 0.043   & \textbf{0.042} & \textbf{0.042} \\
   & Transformer     & 0.511 & 0.101 & 0.098    & 0.098    & 0.092    & 0.074   & 0.056   & \textbf{0.044} & \textbf{0.044} \\
   \midrule
\multirow{2}{*}{Weather}    & Linear   & 0.142 & 0.169 & 0.133    & 0.133    & 0.133    & 0.132   & \textbf{0.131} & 0.141   & 0.169   \\
   & Transformer     & 0.251 & \textbf{0.168}      & 0.189    & 0.191    & 0.195    & 0.180   & 0.189   & 0.297   & 0.193   \\
   \midrule
\multirow{2}{*}{ILI} & Linear   & 2.343 & 2.847 & 2.599    & 2.599    & 2.597    & 2.581   & 2.467   & \textbf{2.299} & 2.693   \\
   & Transformer     & 5.309 & 4.307 & 3.258    & 3.258    & 3.257    & \textbf{3.254} & 3.310   & 3.848   & 4.793   \\
   \bottomrule
\end{tabular}
}
\label{tb:prreg}
\end{table*}

\subsection{Some Other Factors}
We list some of the factors that may influence the performance of CD or CI models by altering their capacity and robustness. It is important to note that capacity and robustness are intertwined in the model selection process, and increasing one often requires decreasing the other. Thus, these factors can impact CD and CI strategies in various ways. We list some factors only for inspiration.

\noindent{\textbf{Low rank layer.}} The low rank assumption is widely used in robust learning~\cite{yuan2009sparse,LiuLY10Robust}. Following the approach proposed in~\cite{Hu22LoRA}, we replace the linear output projection of each attention layer with a low rank linear layer. Specifically, if the weight of the original linear layer is $W \in \R^{m\times n}$, we replace it with $M_1 M_2$, where $M_1 \in \R^{m\times r}$ and $M_2 \in \R^{r\times n}$. We varied the rank reduction rate in ${2,4,8,16,32,64,128,256,512}$, which means that if the rate is 2, $r = \floor{\frac{\min{m,n}}{2}}$. Figure \ref{fig:rank} presents the results of our experiments. As the rank reduction rate increases, the error initially drops and then rises. Low rank regularization reduces the capacity and increases the robustness of a model. Thus, an appropriate choice of the rank can help a CD model perform better.

\begin{figure}[h]
 \vspace{-2mm}
	\centering
	\includegraphics[width=0.5\linewidth]{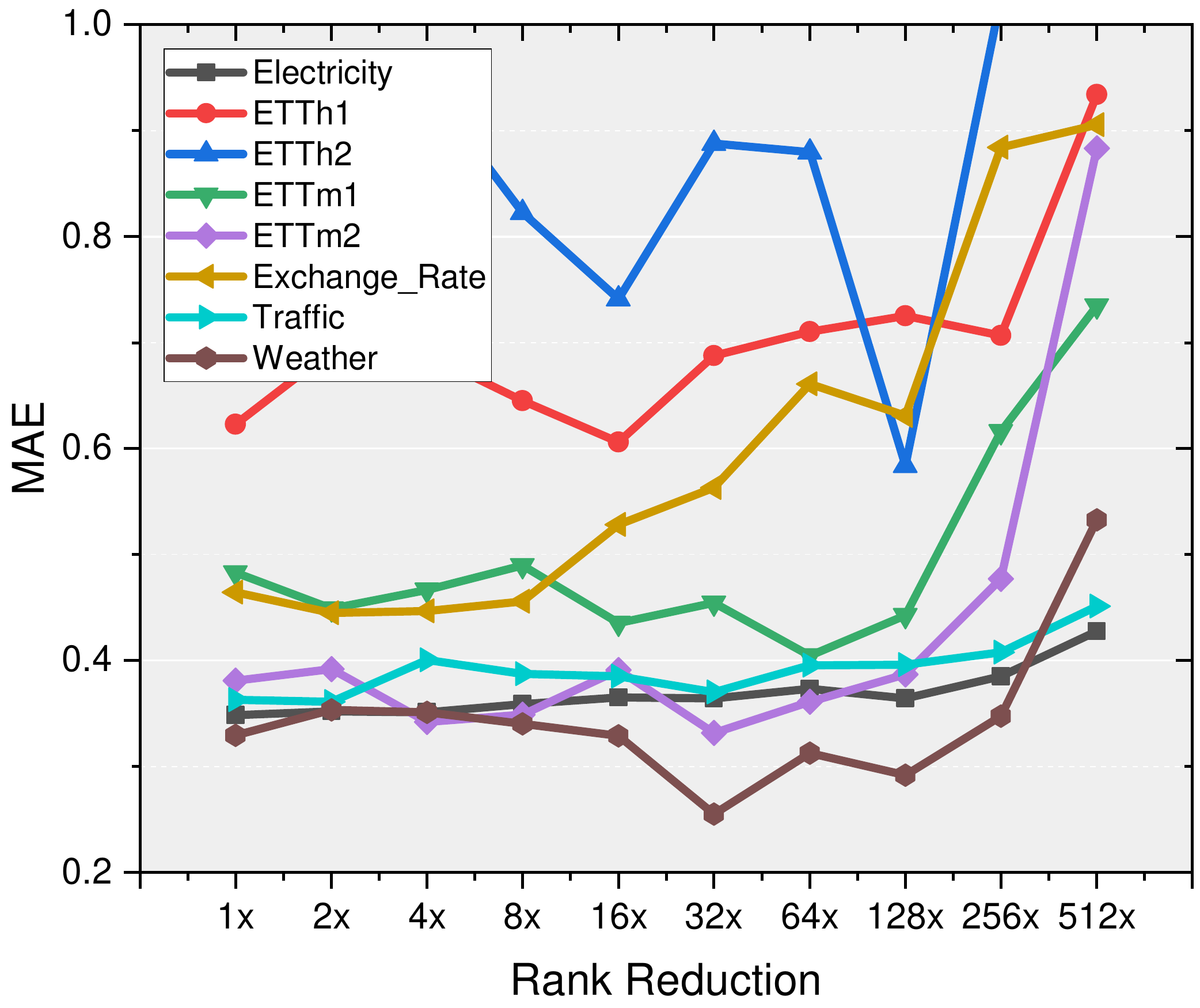}
	\vspace{-3mm}
	\caption{The MAE error of Transformer (CD) model with low rank linear layer on different datasets. We vary the rank reduction rate from 1x to 512x, gradually reducing the rank. The errors drop and rise with increasing reduction rate. Thus, suitable rank regularization helps.}
	\label{fig:rank}
\end{figure}
\noindent{\textbf{Robust loss.}} The Mean Absolute Error (MAE), also known as the L-1 loss, has been demonstrated to be resilient to noisy labels~\cite{GhoshMS15Making,Ghosh17Robust}. Hence, by applying the L-2 loss to a model trained using the CD strategy, its robustness can be improved. To avoid ambiguity, we will refer to the loss used during training as L-2/L-1, and the evaluation metric as Mean Squared Error (MSE)/MAE. The outcomes of applying L-1 and L-2 losses to the Transformer (CD) are presented in Table \ref{tb:l2_l1_loss}. We observe that L-1 loss enhances the robustness of the model, resulting in more accurate predictions.
 
\begin{table}[htp]
	\caption{Performance of Transformer (CD) with L-1 and L-2 loss. When using the L-1 loss which is more robust, Transformer (CD) forecast more accurately. }
 \vspace{-3mm}
	\begin{tabular}{ccccc}
		\toprule
		  Metric         & \multicolumn{2}{c}{MAE} & \multicolumn{2}{c}{MSE}         \\
		\midrule
		    Loss     & L-2    & L-1            & L-2            & L-1            \\
		\midrule
		Electricity    & 0.352  & \textbf{0.347} & \textbf{0.250} & 0.253          \\
		ETTh1          & 0.734  & \textbf{0.604} & 0.861          & \textbf{0.669} \\
		ETTh2          & 0.829  & \textbf{0.536} & 1.031          & \textbf{0.463} \\
		ETTm1          & 0.458  & \textbf{0.421} & 0.458          & \textbf{0.403} \\
		ETTm2          & 0.404  & \textbf{0.328} & 0.281          & \textbf{0.210} \\
		Exchange\_Rate & 0.571  & \textbf{0.451} & 0.511          & \textbf{0.316} \\
		Traffic        & 0.364  & \textbf{0.347} & \textbf{0.645} & 0.668          \\
		Weather        & 0.343  & \textbf{0.274} & 0.251          & \textbf{0.229} \\
		ILI            & 1.508  & \textbf{1.373} & 5.309          & \textbf{4.457}\\
		\bottomrule
	\end{tabular}
\label{tb:l2_l1_loss}
\end{table}

\noindent{\textbf{Length of look-back windows.}}
The length of the look-back window determines the amount of memory that a forecasting model can utilize. In the context of multiple series forecasting, increasing the memory capacity can improve the performance of the global model~\cite{montero2021principles}. Our study demonstrates that the window length also affects the performance of CD and CI strategies in different ways. We varied the sequence length of the input look-back window from 48 to 432, while keeping the horizon fixed at 48. We selected Linear and Transformer models as representative methods to illustrate this phenomenon. The results of comparing the performance of these two models with CD and CI strategies on some datasets are presented in \cref{fig:seq_len}. A longer length of the look-back window provides more information about the historical data but also increases the capacity of the model. For CD models, when the history is not too short, increasing the length often leads to worse performance. On the other hand, CI always benefits from longer window lengths.
\begin{figure}[htp]
 \vspace{-2mm}
    \includegraphics[width=0.24\linewidth]{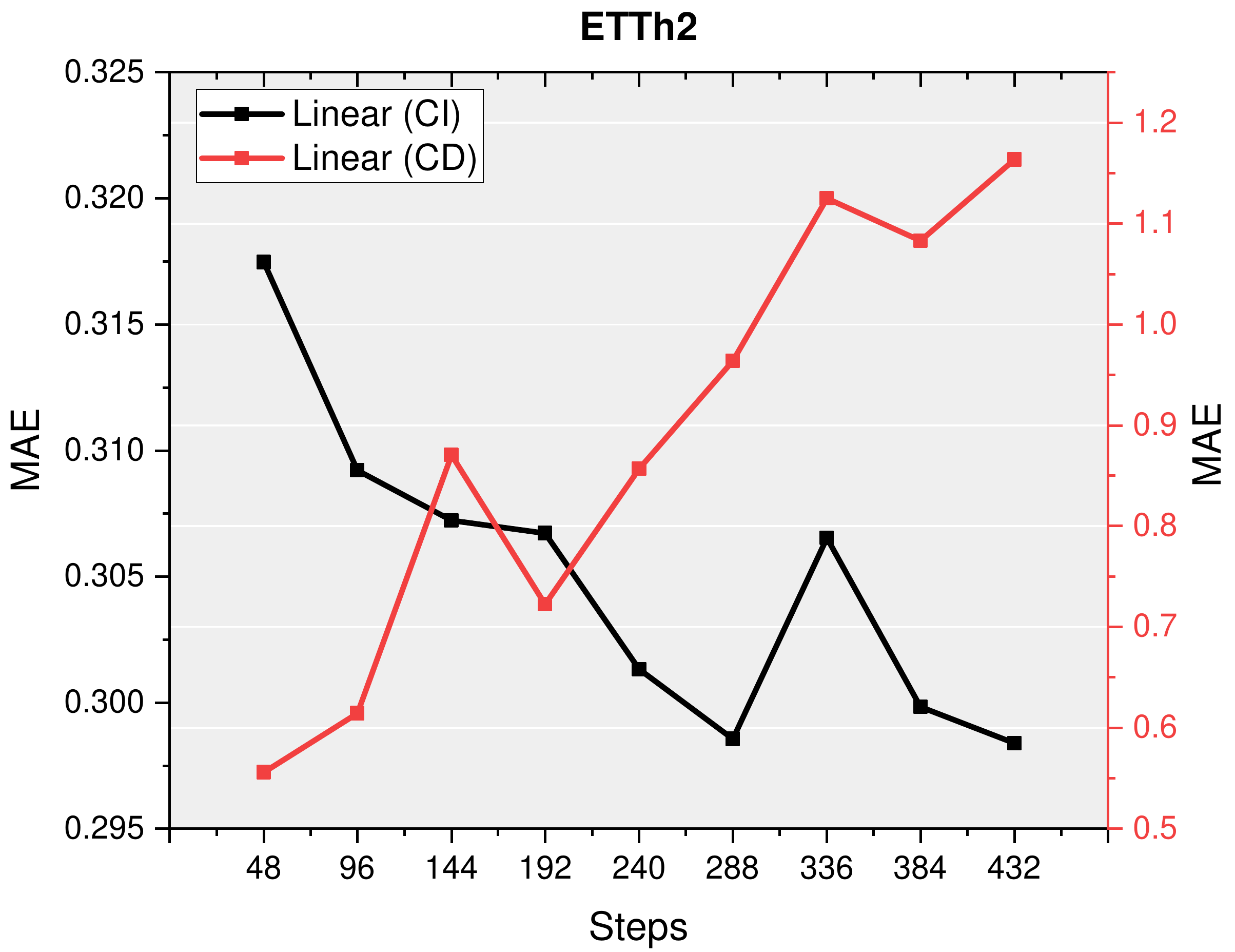}
    \includegraphics[width=0.24\linewidth]{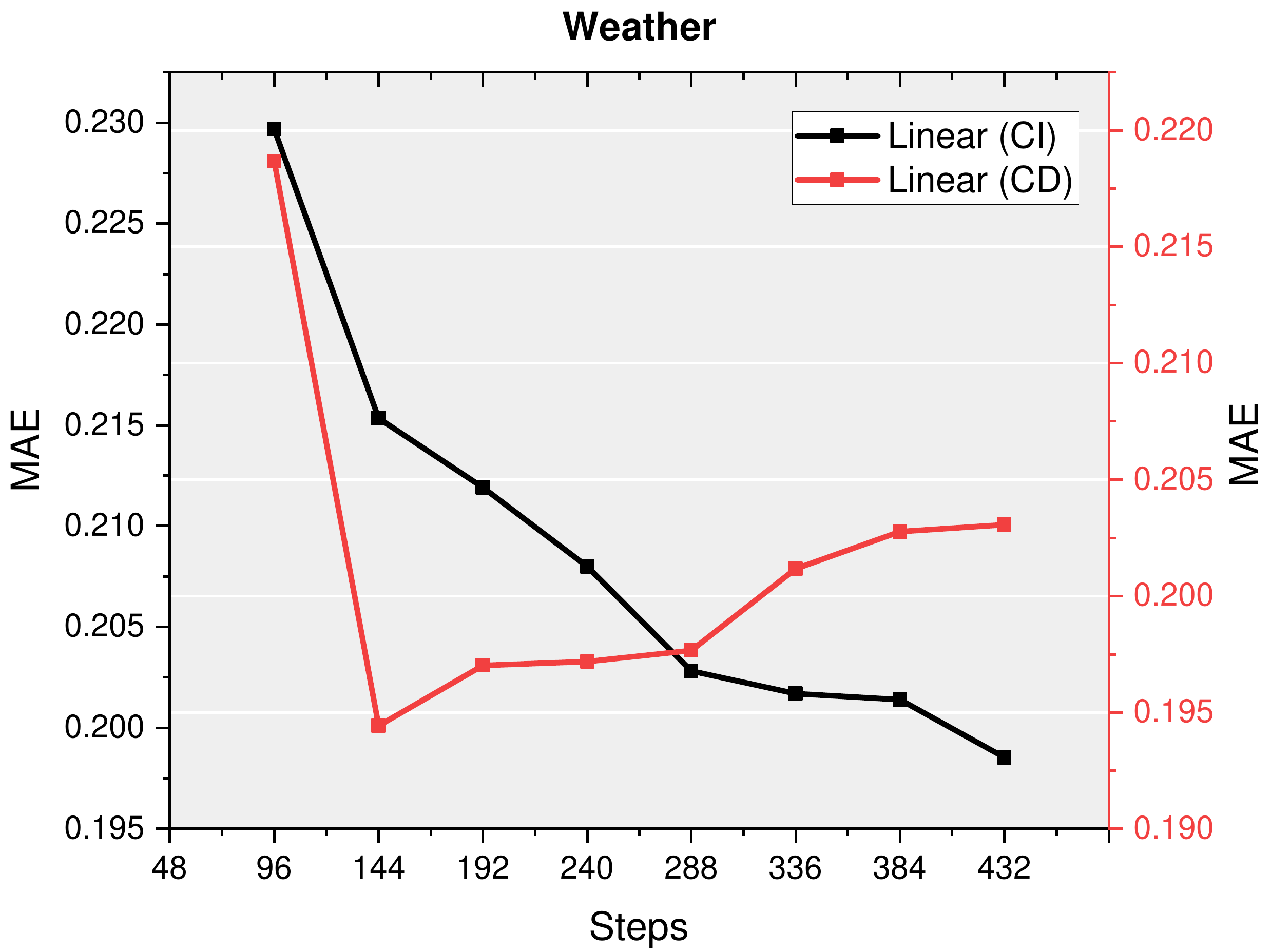}
    \includegraphics[width=0.24\linewidth]{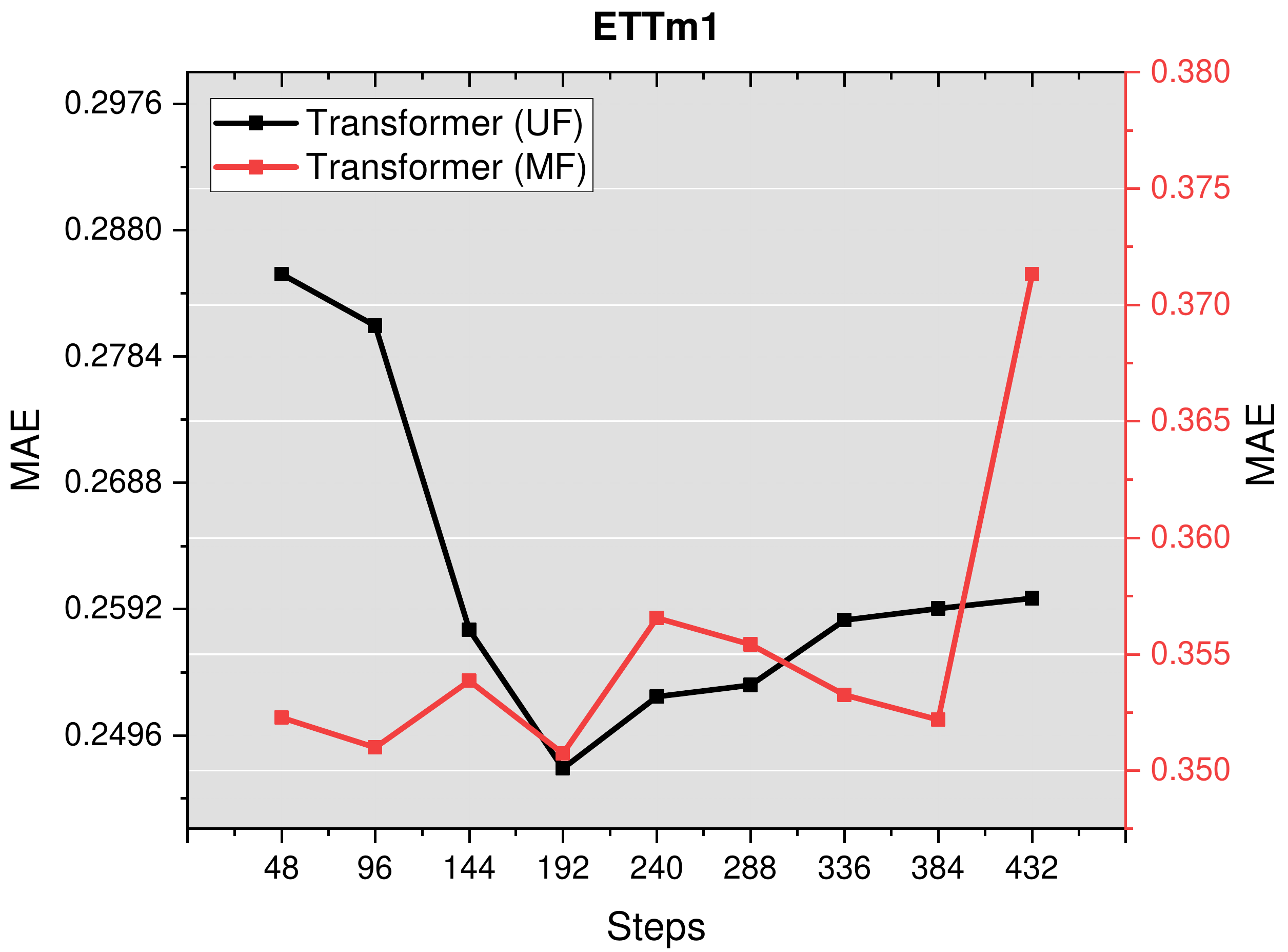}
    \includegraphics[width=0.24\linewidth]{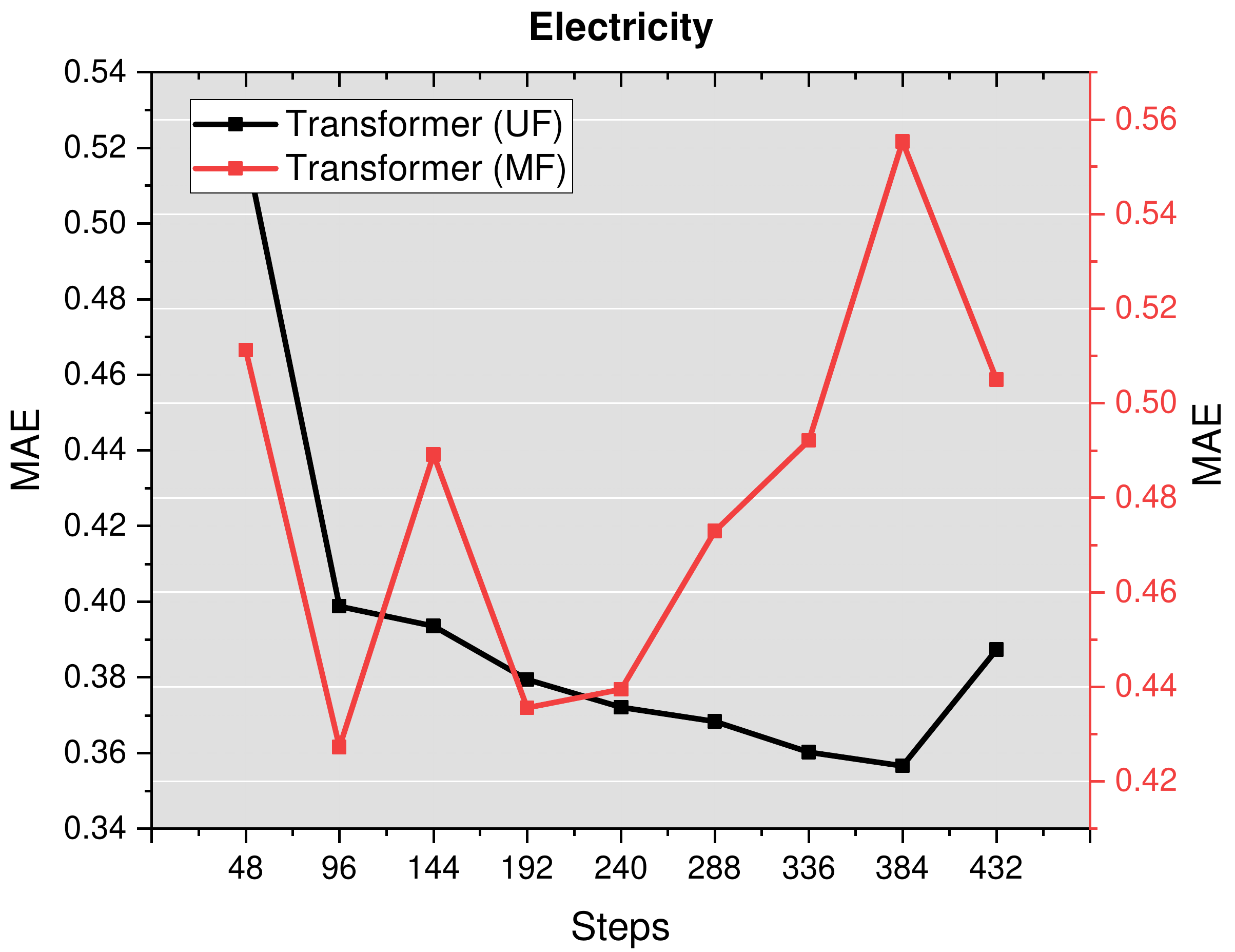}
    \vspace{-3mm}
    \caption{MAE performance of Linear and Transformer with CI and CD strategy on certain datasets. The X-axis represents the length of look-back window. The capacity of both CI and CD model are increased when we input a longer window. We can see that a longer window may do harm to the performance of CD models, while CI can benefit from it.}
    \vspace{-2mm}
    \label{fig:seq_len}
\end{figure}

\section{Discussion about Limitations}

It is important to stress that the conclusions drawn in this paper are closely tied to the characteristics of the datasets employed. While the Channel Independent (CI) training approach generally outperforms the Channel Dependent (CD) strategy, there are exceptions. For instance, as indicated in Table \ref{tb:mae_analysis} for the ILI dataset, CD performs better on average. Nonetheless, the analysis of CI and CD provides valuable insights into the peculiarities of real-world time series and how different strategies can leverage them.

Analyses of this paper may also be limited to numerical channels. Nevertheless, it is possible to handle numerical and non-numerical features separately. We defer an analysis of strategies on more general types of time-series data to future research.
 
 \section{Conclusion}
Recent years have seen the emergence of several methods for long-term Multivariate Time Series Forecasting (MTSF), with some adopting the channel independent (CI) strategy to achieve good performance. By the analyses of this paper, we show the performance boost generated by these methods is often not due to their design, but rather to the training strategy. Despite lower model capacity, the CI strategy exhibits higher robustness, making it better suited for non-stationary time series in practice. We hope that this article will alert the researchers the characteristics of MTSF benchmarks and inspire researchers to better deal with multivariate time series forecasting problems.

\begin{acks}
This research was supported by NSFC (61773198, 62006112,61921006), Collaborative Innovation Center of Novel Software Technology and Industrialization, NSF of Jiangsu Province (BK20200313).
\end{acks}


\bibliographystyle{ACM-Reference-Format}
\bibliography{references}

\end{document}